\documentclass{article}
\usepackage{arxiv}

\usepackage[utf8]{inputenc} 
\usepackage[T1]{fontenc}    
\usepackage{hyperref}       
\usepackage{url}            
\usepackage{booktabs}       
\usepackage{amsfonts}       
\usepackage{nicefrac}       
\usepackage{microtype}      
\usepackage{booktabs}       
 \usepackage{array,multirow,graphicx}
 \usepackage{float}
\usepackage{epsfig}
\usepackage{amsmath}
\usepackage{amssymb}
\usepackage{indentfirst}
\usepackage{mathrsfs}
\usepackage{bm}
\usepackage{graphicx}           
\usepackage{psfrag}             
\usepackage{caption}
\usepackage{subcaption}
\usepackage{empheq}
\usepackage{enumerate}

\usepackage{amsthm}

\usepackage{tikz}
\usetikzlibrary{arrows}

\newenvironment{proof}[1][Proof]%
  {\smallskip\par\noindent\textbf{#1\,:\ }}%
  {\hspace*{\fill} \rule{6pt}{6pt}\smallskip}
\newenvironment{proof*}[1][Proof]%
  {\medskip\par\noindent\textbf{#1\,:\ }}%

\newtheorem{remark}{\textbf{Remark}}

\newtheorem{assumption}{Assumption}
\newtheorem{condition}{Condition}
\newtheorem{theorem}{Theorem}
\newtheorem{lemma}{Lemma}

\newtheorem{corollary}{Corollary}

\usepackage{color,cite,verbatim}
\definecolor{gray}{RGB}{128,128,128}

\newcommand{\HB}[1]{{\color{orange} #1}}
\newcommand{\JG}[1]{{\color{blue} #1}}

\usepackage[utf8]{inputenc} 
\usepackage[T1]{fontenc}    
\usepackage{url}            
\usepackage{booktabs}       
\usepackage{amsfonts}       
\usepackage{nicefrac}       
\usepackage{microtype}      

\usepackage{mathtools}
\DeclarePairedDelimiter{\ceil}{\lceil}{\rceil}

\usepackage{algorithm}
\usepackage{algorithmic}
\usepackage{setspace}
\let\Algorithm\algorithm
\renewcommand\algorithm[1][]{\Algorithm[#1]\setstretch{1}}

\newcommand{\beginsupplement}{%
        \setcounter{table}{0}
        \renewcommand{\thetable}{S\arabic{table}}%
        \setcounter{figure}{0}
        \renewcommand{\thefigure}{S\arabic{figure}}%
        \setcounter{lemma}{0}
        \renewcommand{\thelemma}{S\arabic{lemma}}%
        \setcounter{equation}{0}
        \renewcommand{\theequation}{S\arabic{equation}}%
        \setcounter{section}{0}
        \renewcommand{\thesection}{S\arabic{section}}%
     }

\usepackage[compact]{titlesec}
\titlespacing{\section}{1pt}{*0}{*0}
\titlespacing{\subsection}{0pt}{*0}{*0}
\titlespacing{\subsubsection}{0pt}{*0}{*0}

\title{A Decentralized Approach to  Bayesian Learning}

%
\usepackage{authblk}
\author[1]{Anjaly Parayil}
\author[2]{He Bai}
\author[1]{Jemin George}
\author[1,3]{Prudhvi Gurram}
\affil[1]{CCDC Army Research Laboratory, Adelphi, MD 20783, USA}
\affil[2]{Oklahoma State University, Stillwater, OK 74078, USA}
\affil[3]{Booz Allen Hamilton, McLean, VA 22102, USA}
\affil[1]{\footnotesize{\texttt{panjaly05@gmail.com, jemin.george.civ@mail.mil, pkgurram@ieee.org}}}
\affil[2]{\footnotesize{\texttt{he.bai@okstate.edu}}}
\allowdisplaybreaks[4]

\begin{document}

\setlength{\abovedisplayskip}{3pt}
\setlength{\belowdisplayskip}{3pt}

\maketitle
\begin{abstract}
Motivated by decentralized approaches to machine learning, we propose a collaborative Bayesian learning algorithm taking the form of decentralized Langevin dynamics in a non-convex setting. Our analysis show that the initial KL-divergence between the Markov Chain and the target posterior distribution is exponentially decreasing while the error contributions to the overall KL-divergence from the additive noise is decreasing in polynomial time. We further show that the polynomial-term experiences speed-up with number of agents and provide sufficient conditions on the time-varying step-sizes to guarantee convergence to the desired distribution. The performance of the proposed algorithm is evaluated on a wide variety of machine learning tasks. The empirical results show that the performance of individual agents with locally available data is on par with the centralized setting with considerable improvement in the convergence rate.

\end{abstract}


\section{Introduction}
With the recent advances in computational infrastructure, there has been an increase in the use of larger machine learning models with millions of parameters. Even though there is a parallel increase in the size of training datasets for these models, there is a significant disparity between the amount of existing data and the data required to train the large models to avoid overfitting and provide good generalization performance. Such models trained in point estimate settings such as Maximum A Posteriori (MAP) neglect any associated epistemic uncertainties and make overconfident predictions. Bayesian learning framework provides a principled way to avoid over-fitting and model uncertainties by estimating the posterior distribution of the model parameters. However, analytical solutions of exact posterior or sampling from the exact posterior is often impossible due to the intractability of the evidence. Therefore, one needs to resort to approximate Bayesian methods such as Markov Chain Monte Carlo (MCMC) sampling techniques. To this effect, we focus on a specific class of MCMC methods, called Langevin dynamics to sample from the posterior distribution and perform Bayesian machine learning.

 Langevin dynamics derives motivation from diffusion approximations and uses the information of a target density to efficiently explore the posterior distribution over parameters of interest~\cite{neal2011mcmc}. Langevin dynamics, in essence, is the steepest descent flow of the relative entropy functional or the KL-divergence with respect to the Wasserstein metric \cite{JordanSIAM1998,pmlr-v75-wibisono18a,NIPS2019_9021}. Just as the gradient flow converges exponentially fast under a gradient-domination condition, Langevin dynamics converges exponentially fast to the stationary target distribution if the relative entropy functional satisfies the log-Sobolev inequality \cite{otto2000generalization, pmlr-v75-wibisono18a,NIPS2019_9021}.

The Unadjusted Langevin Algorithm (ULA) is a popular inexact first-order discretized implementation of the Langevin dynamics without an acceptance/rejection criteria. Analysis of convergence properties of the ULA and other Langevin approximations has been a topic of active research over past several years \cite{dalalyan2017theoretical,pmlr-v65-dalalyan17a,pmlr-v75-cheng18a,cheng2018convergence,durmus2016sampling,durmus2017nonasymptotic,durmus2019high}. Reference~\cite{pmlr-v75-wibisono18a} shows that a bias exists in the ULA for any arbitrarily small (fixed) step size, even for a Gaussian target distribution. Controlling the bias and exponential convergence of KL divergence for strongly log-concave smooth target distributions using ULA is discussed in \cite{pmlr-v75-wibisono18a,dalalyan2017theoretical,pmlr-v65-dalalyan17a, cheng2018convergence,durmus2016sampling,durmus2017nonasymptotic}.
Non-asymptotic bounds on variation error of the Langevin approximations for smooth log-concave target distributions have been established by \cite{dalalyan2017theoretical} and \cite{pmlr-v75-cheng18a}. 
 Assuming a  Lipschitz continuous Hessian, \cite{dalalyan2017theoretical} introduces a modified version of the Langevin algorithm requiring fewer iterations to achieve the same precision level. Tight relations between the Langevin Monte Carlo for sampling and the gradient descent for optimization for (strongly) log-concave target distributions are presented in~\cite{pmlr-v65-dalalyan17a}. Similarly, using the notion of gradient flows over probability space and KL-divergence, \cite{cheng2018convergence} analyzes the non-asymptotic convergence of discretized Langevin diffusion. These results were improved and extended with particular emphasis on scalability of the approach with dimension, smoothness, and curvature of the function of interest in  \cite{durmus2016sampling, durmus2017nonasymptotic,durmus2019high}. 

Compared to log concave ULA settings where local properties replicate the global behavior and optimal values are attained in a single pass, non-convex objective functions naturally require multiple passes through training data. Analysis of ULA 
in such cases often requires assuming that the negative log of the target distribution satisfies some dissipative property \cite{raginsky2017non,xu2018global,zhang2019nonasymptotic,chau2019stochastic, mou2019improved}, contractivity condition \cite{ majka2018non}, or limiting the non-convexity to a local region \cite{ma2019sampling, cheng2018sharp}. In particular, \cite{raginsky2017non} makes the first attempt in analyzing non-asymptotic convergence in a nonconvex setting and shows SGLD tracks continuous Langevin diffusion in quadratic Wasserstein distance for empirical risk minimization. Recent work \cite{xu2018global,ma2019sampling, talwar2019computational} reports computational efficiency of sampling algorithm to optimization methods in the nonconvex setting. 
The approach is extended to relaxed dissipativity conditions, to evaluate dependent data streams and  provides sharper convergence estimates uniform in the number of iterations in \cite{zhang2019nonasymptotic,chau2019stochastic}. More recently, it is shown that the convergence is polynomial in terms of dimension and error tolerance \cite{majka2018non,mou2019improved,cheng2018sharp}.

 
 Besides the ULA, higher-order Langevin diffusion for accelerated sampling algorithms are presented in~\cite{2019arXiv190200996M, 2019arXiv190810859M}. Analysis of ``leapfrog'' implementation of Hamiltonian Monte-Carlo (HMC) for strongly log-concave target distributions is presented in \cite{MangoubiNIPS2018} and \cite{mangoubiPMLR19a}. Following the introduction of a stochastic gradient-based  Langevin approach for Bayesian inference in \cite{welling2011bayesian}, stochastic gradient based Langevin diffusion and other HMC schemes are presented in \cite{NIPS2013_4883, NIPS2014_5592, NIPS2015_5696, NIPS2015_5891,DALALYAN20195278}.
 
 \textbf{Related Work:~}The approaches discussed so far assume a centralized entity to process large datasets. However, communication challenges associated with transferring large amounts of data to a central location and the associated privacy issues motivate a decentralized implementation over its centralized counterparts \cite{george2019distributed,kungurtsev2020stochastic}. 
 Master-slave architecture for distributed MCMC via moment sharing is presented \cite{NIPS2014_5596}. Data-parallel MCMC algorithms for large-scale Bayesian posterior sampling are presented in \cite{41849,NIPS2015_5888,scott2017,2018arXiv180709288R}. These parallel MCMC schemes \cite{2013arXiv1312.4605W, Neiswanger2014,NIPS2015_5986, pmlr-v84-chowdhury18a} are not applicable in decentralized setting since they require a central node to aggregate and combine the samples from individual chains generated by the computing nodes in a final post-processing step to generate an approximation of the true posterior. Recently, decentralized Stochastic gradient Langevin dynamics (SGLD) and stochastic gradient Hamiltonian Monte Carlo (SGHMC) methods for strongly log-concave posterior distribution are presented in \cite{2020arXiv200700590G}. 

\textbf{Contribution:~}In this paper,  we draw on the recent ULA literature and develop a decentralized learning algorithm based on the centralized ULA in a nonconvex setting. We consider the problem of collaboratively inferencing the global posterior of a parameter of interest based on independent data sets distributed among a network of $n$ agents. The communication topology between the agents can be any undirected connected graph, including the master-slave topology as a special case.  We propose a decentralized ULA (D-ULA) that incorporates an average consensus process into the ULA with time-varying step-sizes. In this algorithm, each agent shares its current Markov Chain sample with neighboring agents at each time step. We show that the resulting distribution of the averaged sample converges to the true posterior asymptotically. We provide theoretical analysis of the convergence rate and step-size conditions to achieve speed up of convergence with respect to the number of agents.  Empirical results show that the performance of our proposed algorithm is on par with centralized ULA with considerable improvement in the convergence rate, for three different machine learning tasks.


\textbf{Notation:~}Let $\mathbb{R}^{n\times m}$ denote the set of $n\times m$ real matrices. For a vector $\bm{\phi}$, $\phi_i$ is the $i-{\text{th}}$ entry of $\bm{\phi}$.  An $n\times n$ identity matrix is denoted as $I_n$ and $\mathbf{1}_n$ denotes an $n$-dimensional vector of all ones. For $p\in[1,\,\infty]$, the $p$-norm of a vector $\mathbf{x}$ is denoted as $\left\| \mathbf{x} \right\|_p$. For matrices $A \in \mathbb{R}^{m\times n}$ and $B \in \mathbb{R}^{p \times q}$, $A \otimes B \in \mathbb{R}^{mp \times nq}$ denotes their Kronecker product. For a graph $\mathcal{G}\left(\mathcal{V},\mathcal{E}\right)$ of order $n$, $\mathcal{V} \triangleq \left\{v_1, \ldots, v_n\right\}$ represents the agents or nodes and the communication links between the agents are represented as $\mathcal{E} \triangleq \left\{e_1, \ldots, e_{\ell}\right\} \subseteq \mathcal{V} \times \mathcal{V}$. Let $\mathcal{A} = \left[a_{i,j}\right]\in \mathbb{R}^{n\times n}$ be the \emph{adjacency matrix} with entries of $a_{i,j} = 1 $ if $(v_i,v_j)\in\mathcal{E}$ and zero otherwise. Define $\Delta = \text{diag}\left(\mathcal{A}\mathbf{1}_n\right)$ as the in-degree matrix and $\mathcal{L} = \Delta - \mathcal{A}$ as the graph \emph{Laplacian}. 

\section{Problem formulation}\label{sec:Problem}

Consider a connected network  of $n$ agents, each with a randomly distributed set of $m_i$ data items, $\bm{X}_i=\left\{\bm{x}_i^j\right\}_{j=1}^{j=m_i}$, $\forall~i=1,\ldots,n$. Here $\bm{x}_i^j \in \mathbb{R}^{d_x}$ is the $j$-th data element in a set of $m_i$ data items available to the $i$-th agent.  Let $\bm{w} \in \mathbb{R}^{d_w}$ be the parameter vector associated with the model and  $p(\bm{w})$ is the prior associated with the model parameters. The \emph{global posterior} distribution of $\bm{w}$ given the $n$ independent data sets distributed among the agents can be expressed as 
\begin{align}\label{eqn2}
\begin{split}
p(\bm{w}|\bm{X}_1,\ldots, \bm{X}_n) &\propto p(\bm{w}) \prod_{i=1}^{n} p(\bm{X}_i|\bm{w}) =  \prod_{i=1}^{n} \underbrace{p(\bm{X}_i|\bm{w}) p\left(\bm{w}\right)^{\frac{1}{n}}}_{\text{local posterior}}. 
\end{split}
\end{align}
In the optimization literature, the prior, $p\left(\bm{w}\right)$, regularizes the parameter and the likelihood, $p(\bm{X}_i|\bm{w})$, represents the local cost function available to each agent. Here, the set of $n$ independent data sets are distributed among $n$ agents each with a size of $m_i,~i=1,\ldots,n$. At the risk of abusing the notation, define $p(\bm{w}|\bm{X}_i)$ as the \emph{local  posterior} distribution.
Thus the global posterior can be written as the product of local posteriors as 
\begin{eqnarray}\label{eqn3}
p(\bm{w}|\bm{X}_1,\ldots, \bm{X}_n) \propto \prod_{i=1}^{n} p(\bm{w}|\bm{X}_i). 
\end{eqnarray}
The main issue with point estimates obtained from optimization schemes like maximum likelihood and maximum a posteriori estimation is that they fail to capture the parameter uncertainty and they can potentially over-fit the data. This paper is aimed at developing a method for collaborative Bayesian learning from large scale datasets distributed among a networked set of agents as a solution to the numerous issues associated with the point estimation schemes. In particular, we present a decentralized version of the unadjusted Langevin algorithm to distributedly obtain samples from the global posterior $p(\bm{w}|\bm{X}_1,\ldots, \bm{X}_n)$. For the ease of notation, we use $\mathbf{X}$ to denote the entire data set. Thus the global posterior can be written as $p(\bm{w}|\mathbf{X})$.

\section{Decentralized unadjusted Langevin algorithm}

To efficiently explore the global posterior $p(\bm{w}|\bm{X})$, we first rewrite the target distribution in terms of an energy function $U$ as follows~\cite{duane1987hybrid,neal2011mcmc,Leimkuhler2004}:
\begin{align}\label{Eq:PropGlobPost}
p(\bm{w}|\mathbf{X}) \propto \textrm{exp}(-U(\bm{w})),
\end{align}
where $U$ is the analogue of potential energy given by 
\begin{align}\label{Qeqn}
U(\bm{w}) \propto - \log p(\bm{w}|\mathbf{X}).
\end{align}
The Langevin algorithm is a well known family of gradient based Monte Carlo sampling algorithms. The sample obtained using Unadjusted Langevin Algorithm (ULA) at a given time instant $k$ is given by~\cite{ma2019sampling}
\begin{align}
\bm{w}(k+1) =\bm{w}(k) - \alpha_k  \nabla U(\bm{w}(k))+ \sqrt{2\alpha_k} \bm{v}(k)
\end{align}
where $\alpha_k$ is the algorithm step-size, $\bm{w}(k)$ represents the sample obtained at the $k$-th time instant and $\bm{v}(k)$ is a $d_w$-dimensional, independent, zero-mean, unit variance, Gaussian sequence, i.e., $\bm{v}(k) \sim \mathcal{N}(\mathbf{0}, I_{d_w}), \,\, \forall k\geq0$. Now substituting \eqref{Qeqn} yields
\begin{align}
\bm{w}({k+1})
= \bm{w}(k) + \alpha_k  \nabla \log p(\bm{w}(k)|\mathbf{X}) + \sqrt{2\alpha_k} \bm{v}(k).
\end{align}
Substituting \eqref{eqn2} yields
\begin{align}\label{CentralULA}
\begin{split}
\bm{w}(k+1)
&= \bm{w}(k) + \alpha_k \sum_{i=1}^n\, \bigg( \nabla \log p(\bm{X}_i|\bm{w}(k)) + \frac{1}{n}\nabla \log p(\bm{w}(k))\bigg)+ \sqrt{2\alpha_k} \bm{v}(k).
\end{split}
\end{align}
The samples obtained using the continuous-time version of the centralized ULA given in \eqref{CentralULA} have shown to exponentially converge to the target posterior distribution~\cite{roberts1996} for a certain class of distributions with exponential tails. Convergence properties of the ULA had been widely studied for log-concave target distributions~\cite{dalalyan2017theoretical,pmlr-v75-cheng18a,cheng2018convergence,durmus2016sampling,durmus2017nonasymptotic,durmus2019high}. Non-asymptotic analysis of centralized ULA without the strong log-concavity assumption on target distribution is presented in \cite{raginsky2017non,xu2018global,ma2019sampling,zhang2019nonasymptotic,chau2019stochastic,majka2018non,mou2019improved,cheng2018sharp}.
%

However, when the data is distributed among $n$ agents and there is no central agent to pool all the local gradients, exact implementation of the above ULA is difficult, if not impossible. Therefore we propose the following decentralized ULA:
\begin{align}\label{DULA}
\begin{split}
\bm{w}_i(k+1)
= \bm{w}_i(k) &- \beta_k\sum_{j=1}^{n} a_{i,j} \left(\bm{w}_i(k)-\bm{w}_j(k)\right) \\ &+ {\alpha_k} n  \left( \nabla \log p(\bm{X}_i|\bm{w}_i(k)) + \frac{1}{n}\nabla \log p(\bm{w}_i(k)) \right) + \sqrt{2\alpha_k} \bm{v}_i(k),
\end{split}
\end{align}
where $a_{i,j}$ denotes the entries of the adjacency matrix corresponding to the communication network $\mathcal{G}\left(\mathcal{V},\mathcal{E}\right)$, $\beta_k$ is the consensus step-size and $\bm{v}_i(k)$ are $d_w$-dimensional, independent, zero-mean, Gaussian sequence with variance $n$, i.e., $\bm{v}_i(k) \sim \mathcal{N}(\mathbf{0}_{d_w}, nI_{d_w}), \,\, \forall i\in\mathcal{I}$.
\begin{remark}
Compared to the parallel MCMC setting, our formulation do not require a central coordinator and each computing nodes reconstruct the approximation to the posterior simply relying on individually available data set and prior information (incorporated as $\nabla \log p(\bm{X}_i|\bm{w}_i(k)) + \frac{1}{n}\nabla \log p(\bm{w}_i(k))$ into the algorithm as shown in (8)) and by interacting with their one-hop neighbors as dictated by the undirected communication graph $\mathcal{G}(\mathcal{V},\mathcal{E})$ (as denoted as $\sum_{j=1}^{n} a_{i,j} \left(\bm{w}_i(k)-\bm{w}_j(k)\right)$ in (8)). Here $a_{i,j}$ is the $(i,j)$-th entry of the $n\times n$ adjacency matrix $\mathcal{A}$. $a_{i,j} = 1$ if the $i$-th node can communicate with the $j$-th node and zero otherwise. Similar technique is used in decentralized supervised learning \cite{George2020AAAI, SPARQSGD2019, george2019distributed}.
\end{remark}
Define
$\mathbf{w}(k) \triangleq \begin{bmatrix}
                             \bm{w}_1^\top(k) &
                             \ldots &
                             \bm{w}_n^\top(k)
                           \end{bmatrix}^\top \in \mathbb{R}^{nd_w}\,\,\text{and}\,\,
\mathbf{v}(k) \triangleq \begin{bmatrix}
                             \bm{v}_1^\top(k) &
                             \ldots &
                         \bm{v}_n^\top(k)
                           \end{bmatrix}^\top \in \mathbb{R}^{nd_w}.$
Now \eqref{DULA} can be written as
\begin{align}\label{DULA1}
\begin{split}
\mathbf{w}(k+1)
&= \mathbf{w}(k) -\beta_k \left(\mathcal{L} \otimes I_{d_w}\right)\mathbf{w}(k) - \alpha_k n \mathbf{g}(\mathbf{w}(k), \mathbf{X}) + 
\sqrt{2\alpha_k}\mathbf{v}(k),
\end{split}
\end{align}
where $\mathcal{L}$ is the network Laplacian and
\begin{align*}
    \mathbf{g}(\mathbf{w}(k), \mathbf{X}) \triangleq \begin{bmatrix}
                             \mathbf{g}_1\left( \bm{w}_1(k), \bm{X}_1 \right) \\
                             \vdots \\
                             \mathbf{g}_n\left( \bm{w}_n(k), \bm{X}_n \right)
                           \end{bmatrix} &=
    \begin{bmatrix}
                             \nabla U_1\left( \bm{w}_1(k), \bm{X}_1 \right) \\
                             \vdots \\
                             \nabla U_n\left( \bm{w}_n(k), \bm{X}_n \right)
                           \end{bmatrix} 
\end{align*}
where $U_i(\bm{w},\bm{X}_i) = - \log p(\bm{w}|\bm{X}_i)$ and $p(\bm{w}|\bm{X}_i)$ is the local posterior, given in~\eqref{eqn2}. Define the network weight-matrix $\mathcal{W}_k = \left(I_{n}-\beta_k\mathcal{L}\right)$. Thus the proposed decentralized ULA can be written as 
\begin{align}\label{DULA2}
\begin{split}
\mathbf{w}(k+1)
= \left(\mathcal{W}_k \otimes I_{d_w}\right)\mathbf{w}(k) &- \alpha_kn\mathbf{g}(\mathbf{w}(k), \mathbf{X}) + \sqrt{2\alpha_k} \mathbf{v}(k) .
\end{split}
\end{align}
If we ignore the additive noise term, then the decentralized ULA of \eqref{DULA2} can be considered  a consensus optimization algorithm aimed at solving the problem, $\min_{\bm{w}}\, \bm{U}(\bm{w}, \mathbf{X})$, where 
\begin{align}\label{eq:bmU}
    \bm{U}(\bm{w}, \mathbf{X}) = \sum_{i=1}^n U_i(\bm{w}, \bm{X}_i).
\end{align}
Denote by $p^*$ the stationary probability distribution corresponding to the global posterior distribution, i.e., the target distribution. It then follows from \eqref{Eq:PropGlobPost} that
\begin{align} \label{eq_dis}
    p^*(\,\cdot\,) = \exp\left( -\bm{U}\left( \,\cdot\,,\mathbf{X}\right)  + C \right),
\end{align}
for some positive constant $C$ corresponding to the normalizing constant. Now note that the centralized ULA for generating samples from the target distribution $p^*(\bar{\bm{w}}^*)$  of \eqref{eq_dis} is given as~\cite{cheng2018convergence} 
\begin{align}\label{eq_3e}
\bar{\bm{w}}^*(k+1) &=\bar{\bm{w}}^*(k) - \alpha_k\,\nabla \bm{U}\left( \bar{\bm{w}}^*(k),\mathbf{X}\right) + \sqrt{2\alpha_k} \bar{\bm{v}}(k). 
\end{align}
The continuous-time limit of \eqref{eq_3e} can be obtained as the following Stochastic Differential Equation (SDE) known as the Langevin equation~\cite{Lemons}:
\begin{align}
    d\bar{\bm{w}}^*(t)   &= -{\nabla \bm{U}( \bar{\bm{w}}^*(t),\mathbf{X})}dt+\sqrt{2}dB_t, \label{eqn14}
\end{align}
where $B_t$ is a $d_w$-dimensional Brownian motion. The pseudocode of the proposed decentralized ULA is given in Algorithm~\ref{Algorithm1}, where $\alpha_k = \frac{a}{(k+1)^{\delta_2}}\,\,\textnormal{and}\,\,\beta_k = \frac{b}{(k+1)^{\delta_1}}$ (see Condition~\ref{Cond:AlphaBeta} in~\ref{sec:Assumptions}). We refer materials from supplementary sections with the prefix S. 

\section{Main Results}

Though our proposed algorithm is built on ULA, analysis of even the centralized ULA (C-ULA) for non-log-concave target distributions requires assuming that the negative log of the target distribution satisfies some dissipative property \cite{raginsky2017non,xu2018global,zhang2019nonasymptotic,chau2019stochastic, mou2019improved}, contractivity condition \cite{ majka2018non}, or limiting the non-convexity to a local region \cite{ma2019sampling, cheng2018sharp}. Given analysis of D-ULA is novel/non-trivial compared to the existing non-convex consensus-optimization and non-log-concave ULA literature because: ${(i)}$ the consensus analysis and the results in Theorem~\ref{Theorem:Consensus} are novel since we use time-varying step-sizes $\alpha_k$ and $\beta_k$ and provide an explicit consensus rate in term of step-size decay rates (see~\eqref{Eq:MSbound1a}), ${(ii)}$ compared to existing C-ULA analysis for non-log-concave target distributions, the continuous-time approximation to the D-ULA contains an additional consensus error term $\zeta(\cdot)$ in \eqref{eqp4} that complicates the analysis. Requirements on the time-varying step sizes are also not straightforward to obtain as the existing literature is focused on fixed step-sizes.

Analysis of the proposed distributed ULA given in \eqref{DULA2} requires that the sequences $\{\alpha_k\}$ and $\{\beta_k\}$ be selected as (see Condition~\ref{Cond:AlphaBeta} in~\ref{sec:Assumptions})
\begin{align}\label{Eqn:AlphaBeta}
\alpha_k = \frac{a}{(k+1)^{\delta_2}} \quad \textnormal{and} \quad \beta_k = \frac{b}{(k+1)^{\delta_1}},
\end{align}
where $0<a$, $0<b$, $0 \leq \delta_1$ and $\frac{1}{2} + \delta_1 < \delta_2 < 1$. Furthermore, we make the following three assumptions (formally stated in \ref{sec:Assumptions}): $(i)$ the gradients $\nabla U_i$ are  Lipschitz continuous with Lipschitz constant $L_i>0,~\forall i=1,\ldots,n$; $(ii)$ the communication network is given as a connected undirected graph; and $(iii)$ there exists a positive constant $\mu_g < \infty$ such that the disagreement on the gradient among the distributed agents, denoted as $\tilde{\mathbf{g}}(\mathbf{w}_k,\mathbf{X})$, satisfies $\mathbb{E} \left[ \| \tilde{\mathbf{g}}(\mathbf{w}_k,\mathbf{X}) \|_2^2\,|\mathbf{w}_{k} \right] \leq {n}\mu_g (1 + k)^{\delta_2}_,\quad \textnormal{a.s.}$, where $\tilde{\mathbf{g}}(\mathbf{w}_k,\mathbf{X}) = \mathbf{g}(\mathbf{w}_k,\mathbf{X}) -  \left(\frac{1}{n}\mathbf{1}_{n}\mathbf{1}_{n}^\top \otimes I_{d_w}\right) \mathbf{g}(\mathbf{w}_k,\mathbf{X})$.

From the proposed distributed ULA given in \eqref{DULA2}, the average dynamics is given as
\begin{align}
\bar{\bm{w}}(k+1)&=\bar{\bm{w}}(k) - \alpha_k \sum_{i=1}^n\,\nabla U_i\left( \bm{w}_i(k),\bm{X}_i\right) + \sqrt{2\alpha_k} \bar{\bm{v}}(k),
\end{align}
where $\bar{\bm{w}}(k) = \frac{1}{n}\sum_{i=1}^n\,\bm{w}_i(k)$ and $\bar{\bm{v}}(k) = \frac{1}{n}\sum_{i=1}^n\,\bm{v}_i(k)$ is a  zero-mean, unit-variance Gaussian random vector. Now adding and subtracting $\alpha_k \sum_{i=1}^n\,\nabla U_i\left( \bar{\bm{w}}(k),\bm{X}_i\right) = \alpha_k \nabla \bm{U}\left( \bar{\bm{w}}(k),\mathbf{X}\right)$ yields
\begin{align}\label{eq_1e}
\bar{\bm{w}}(k+1)&=\bar{\bm{w}}(k) - \alpha_k \nabla \bm{U}\left( \bar{\bm{w}}(k),\mathbf{X}\right) - \alpha_k \zeta(\bar{\bm{w}}(k),\tilde{\mathbf{w}}(k)) + \sqrt{2\alpha_k} \bar{\bm{v}}(k), 
\end{align}
where $\bm{U}\left( \bar{\bm{w}}(k),\mathbf{X}\right)$ is defined in \eqref{eq:bmU}, the consensus error $\tilde{\mathbf{w}}(k)$ is defined as $\tilde{\mathbf{w}}(k) = \left(\left(I_{n} - \frac{1}{n}\mathbf{1}_{n}\mathbf{1}_{n}^\top \right) \otimes I_{d_w}\right)\mathbf{w}(k)$ 
and $\zeta(\bar{\bm{w}}(k),\tilde{\mathbf{w}}(k))$ is defined as
\begin{align}
    \zeta(\bar{\bm{w}}(k),\tilde{\mathbf{w}}(k)) = \sum_{i=1}^n\,\left(\nabla U_i\left( \bar{\bm{w}}(k) + \tilde{\bm{w}_i}(k),\bm{X}_i\right) - \nabla U_i\left( \bar{\bm{w}}(k),\bm{X}_i\right) \right).
\end{align}
For all $k\geq 0$, let $[t_k,\,\,t_{k+1})$ denote the current time-interval, i.e., $t \in [t_k,\,\,t_{k+1})$, where $t_k$ is defined as $t_k = \sum_{j=0}^{k-1}\alpha_j$. 
Here, $t_{k+1}=t_k+\alpha_k$. 
Define $\tilde{\omega}(t)$ as
 \begin{align} \label{eq_1}
     \tilde{\omega}(t) = \tilde{\mathbf{w}}(k),\,\,\forall t\in [t_{{k}},\,\,\,t_{{k}+1}), \quad {k} \geq 0.
 \end{align}
Now \eqref{eq_1e} can be written as 
\begin{align}\label{eq_1e1}
\bar{\bm{w}}(t_{k+1}) &= \bar{\bm{w}}(t_k) - \alpha_k \nabla \bm{U}\left( \bar{\bm{w}}(t_k),\mathbf{X}\right) - \alpha_k \zeta(\bar{\bm{w}}(t_k),\tilde{\mathbf{w}}(t_k)) + \sqrt{2}\left( B_{t_{k+1}} - B_{t_k} \right), 
\end{align}
where $B_t$ is a $d_w$-dimensional Brownian motion. Thus, for $t_k \leq t < t_{k+1}$, the discretized equation of \eqref{eq_1e} is given by
\begin{align}
d\bar{\bm{w}}(t) &= -\nabla \bm{U}\left( \bar{\bm{w}}(t_{{k}}),\mathbf{X}\right) dt + \sqrt{2}dB_t -\zeta(\bar{\bm{w}}(t_{{k}}),\tilde{\omega}(t)) dt. \label{eqp4}
\end{align}
Let $\bar{\bm{w}}(t)$ in \eqref{eqp4} admits a probability distribution $p_t(\bar{\bm{w}})$ for $t_k \leq t < t_{k+1}$. Here we aim to show that $p_{t_k}(\bar{\bm{w}})$ $\rightarrow$ $p^*$ as $k\rightarrow\infty$.
\setlength{\textfloatsep}{0.5cm}
\setlength{\floatsep}{0.5cm}
\begin{algorithm}[t]
 \caption{Decentralized ULA (D-ULA)}
 \label{Algorithm1}
 \begin{algorithmic}[1]
  \STATE \textit{Initialization} : $\mathbf{w}(0) = \begin{bmatrix} {\bm{w}}_1^\top(0) & \ldots & {\bm{w}}_n^\top(0)\end{bmatrix}^\top$
  \STATE \textit{Input} : $a$, $b$, $\delta_1$ and $\delta_2$ 
  \FOR {$k \geq 0$}
  \FOR {$i = 1$ to $n$}
  \STATE \textit{Sample  } $\bm{v}_i(k) \sim \mathcal{N}(\mathbf{0}, nI_{d_w})$ \& \textit{compute } $\mathbf{g}_i\left( \bm{w}_i(k), \bm{X}_i \right)$ \label{Step041}
  \STATE \textit{Compute  } $\hat{\bm{w}}_i(k) = \sum_{j=1}^{n}\,a_{i,j} \left( \bm{w}_i(k) - \bm{w}_j(k) \right)$ \label{Step051}
  \STATE \textit{Update  } $\bm{w}_i(k + 1) = \bm{w}_i(k) - \beta_k \, \hat{\bm{w}}_i(k) - \alpha_k n \,\mathbf{ g}_i\left( \bm{w}_i(k), \bm{\xi}_i(k) \right) + \sqrt{2\alpha_k} \bm{v}_i(k)$
  \ENDFOR
  \ENDFOR
 \end{algorithmic}
 \end{algorithm}
\subsection{Kullback-Leibler (KL) divergence and $\log$-Sobolev inequality}
Sampling can be viewed as optimization in the space of measures, where the objective function in the space of measures attains its minimum at the target distribution. Following \cite{pmlr-v75-wibisono18a, NIPS2019_9021, ma2019sampling, cheng2018sharp}, we use the relative entropy or the KL-divergence of $p_t(\bar{\bm{w}})$ to the target distribution $p^*$, denoted by $F(p_t(\bar{\bm{w}}))$, as the objective, i.e., 
\begin{align}
     F(p_t(\bar{\bm{w}})) &= \int p_t(\bar{\bm{w}})\log\left(\frac{p_t(\bar{\bm{w}})}{p^*(\bar{\bm{w}})}\right)d\bar{\bm{w}}.
\end{align}
KL-divergence is non-negative and it is minimized at the target distribution, i.e., $F(p_t(\bar{\bm{w}})) \geq 0$ and $F(p_t(\bar{\bm{w}})) = 0$ if and only if $p_t = p^*$. The property of $p^*$ that we rely on to show convergence of the proposed algorithm is that it satisfies a \emph{$\log$-Sobolev inequality}. Consider a Sobolev space defined by the weighted norm: $\int \, g(\bar{\bm{w}})^2 p^*(\bar{\bm{w}}) \, d\bar{\bm{w}}$,  
where $p^*(\bar{\bm{w}}) \propto \exp(-U(\bar{\bm{w}}))$. We say that $p^*(\bar{\bm{w}})$ satisfies a log-Sobolev inequality if there exists a constant $\rho_U > 0$ such that for any smooth function $g$ satisfying $\int \, g(\bar{\bm{w}}) p^*(\bar{\bm{w}}) \, d\bar{\bm{w}} = 1$, we have:
 \begin{eqnarray} 
 \int g(\bar{\bm{w}}) \log g(\bar{\bm{w}}) {p}^{*}(\bar{\bm{w}})\, d\bar{\bm{w}} \leq \frac{1}{2\rho_U}\int \frac{\left\| \nabla g (\bar{\bm{w}})\right\|^2}{g(\bar{\bm{w}})} {p}^{*}(\bar{\bm{w}})\, d\bar{\bm{w}},
 \end{eqnarray}
where $\rho_U$ is the log-Sobolev constant. Let $g({\bar{\bm{w}}}) = \displaystyle \frac{p_t({\bar{\bm{w}}})}{p^*({\bar{\bm{w}}})}$. Thus we have
\begin{align}
     F(p_t(\bar{\bm{w}})) = \mathbb{E}_{p_{t}(\bar{\bm{w}})}  \left[  \log\left(\frac{p_t(\bar{\bm{w}})}{p^*(\bar{\bm{w}})}\right) \right] \leq \frac{1}{2 \rho_U} \mathbb{E}_{p_{t}(\bar{\bm{w}})}  \left[ \left\|\nabla \log\left(\frac{p_t(\bar{\bm{w}})}{p^*(\bar{\bm{w}})}\right)\right\|_2^2 \right]. \label{eq:LSI}
\end{align}
Now we present our first result, which shows that the average-consensus error $\tilde{\mathbf{w}}_k$ is decreasing at the rate  $\mathcal{O}\left(\frac{1}{(k+1)^{\delta_2-2\delta_1}}\right)$ (see \eqref{eq:ConsensusRate} for an explicit expression). This implies that the individual samples $\bm{w}_i(k)$ are converging to $\bm{\bar{w}}_k$ and this is possible only because of the decaying step-size $\alpha_k$, which also multiplies the additive Gaussian noise.  
\begin{theorem}\label{Theorem:Consensus}
  Consider the decentralized ULA (D-ULA) given in Algorithm~\ref{Algorithm1} under Assumptions 1-3. Then, for the average-consensus error defined as $\tilde{\mathbf{w}}_k = \left(I_{nd_w} - \frac{1}{n}\mathbf{1}_{n}\mathbf{1}_{n}^\top \otimes I_{d_w}\right)\mathbf{w}_k$, there holds:
  \begin{align}\label{Eq:MSbound1a}
      \begin{split}
    \mathbb{E} \left[ \| \tilde{\mathbf{w}}_{k+1} \|_2^2 \right] &\leq 
    \frac{W_3}{\exp{\left(W_1(k+1)^{1-\delta_1}\right)}}   + \frac{W_2}{(k+1)^{\delta_2-2\delta_1}}
\end{split},
  \end{align}
where $W_1$, $W_2$ and $W_3$ are positive constants defined in \eqref{eq:Thrm1W1}, \eqref{eq:Thrm1W2}, \eqref{eq:Thrm1W3} and \eqref{eq:Thrm1kBAR}.
\end{theorem}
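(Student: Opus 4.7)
The plan is to project the D-ULA update \eqref{DULA2} onto the consensus-error subspace, derive a scalar mean-square recursion with contractive factor $1-\beta_k\lambda_2(\mathcal{L})$ and forcing of order $(k+1)^{\delta_1-\delta_2}$, and then solve it in closed form. The exponential term in \eqref{Eq:MSbound1a} will come from the contractive factor, while the polynomial term will come from the forcing; the amplification $\beta_k^{-1}\sim(k+1)^{\delta_1}$ produced by integrating the forcing against the decaying contraction will explain the shift from the naive rate $\delta_2-\delta_1$ to the claimed rate $\delta_2-2\delta_1$ in the polynomial exponent.

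First I would apply the consensus projector $M\otimes I_{d_w}$, with $M := I_n-\tfrac{1}{n}\mathbf{1}_n\mathbf{1}_n^\top$, to both sides of \eqref{DULA2}. Because $\mathcal{L}\mathbf{1}_n=0$, the Laplacian commutes with $M$ and $\mathcal{L}M=\mathcal{L}$, so $M\mathcal{W}_k=(I_n-\beta_k\mathcal{L})M$ leaves $\mathrm{range}(M)$ invariant with spectral radius $1-\beta_k\lambda_2(\mathcal{L})$ on that subspace (valid once $\beta_k\leq 1/\lambda_n(\mathcal{L})$, which holds from some index on by the step-size schedule). Using $M\mathbf{w}_k=\tilde{\mathbf{w}}_k$ this yields
\begin{equation*}
\tilde{\mathbf{w}}_{k+1}=\bigl((I_n-\beta_k\mathcal{L})\otimes I_{d_w}\bigr)\tilde{\mathbf{w}}_k - \alpha_k n\,\tilde{\mathbf{g}}(\mathbf{w}_k,\mathbf{X}) + \sqrt{2\alpha_k}\,(M\otimes I_{d_w})\mathbf{v}_k.
\end{equation*}
Taking squared Euclidean norm and expectations, the Gaussian term is independent with covariance $nI_{nd_w}$, so its cross terms vanish and $\mathbb{E}\|(M\otimes I_{d_w})\mathbf{v}_k\|_2^2 = n\cdot\mathrm{tr}(M)\cdot d_w = n(n-1)d_w$. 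Applying Young's inequality $2\langle a,b\rangle\leq \epsilon\|a\|^2+\epsilon^{-1}\|b\|^2$ with $\epsilon=\beta_k\lambda_2(\mathcal{L})$ to the cross term between the contractive part and $-\alpha_k n\tilde{\mathbf{g}}$, and invoking the elementary identity $(1+x)(1-x)^2\leq 1-x$ for $x\in[0,1]$ together with Assumption~3, produces the scalar recursion
\begin{equation*}
\mathbb{E}\|\tilde{\mathbf{w}}_{k+1}\|_2^2 \leq (1-\beta_k\lambda_2(\mathcal{L}))\,\mathbb{E}\|\tilde{\mathbf{w}}_k\|_2^2 + \frac{2\alpha_k^2\,n^3\mu_g\,(1+k)^{\delta_2}}{\beta_k\lambda_2(\mathcal{L})} + 2\alpha_k n(n-1)d_w,
\end{equation*}
whose net forcing $f_k$ decays like $(k+1)^{\delta_1-\delta_2}$ once the step-size schedules are substituted (the $\delta_1\geq 0$ assumption makes the first term dominate).

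I would then unroll this linear recursion as $\mathbb{E}\|\tilde{\mathbf{w}}_{k+1}\|_2^2 \leq \prod_{j=0}^{k}(1-\beta_j\lambda_2)\,\mathbb{E}\|\tilde{\mathbf{w}}_0\|_2^2+\sum_{j=0}^{k}\prod_{\ell=j+1}^{k}(1-\beta_\ell\lambda_2)\,f_j$. For the homogeneous term, $\log(1-x)\leq -x$ combined with $\sum_{\ell=0}^{k}\beta_\ell\geq \tfrac{b}{1-\delta_1}\bigl((k+1)^{1-\delta_1}-1\bigr)$ gives $\prod_{j=0}^{k}(1-\beta_j\lambda_2)\leq \exp(W_1)\exp(-W_1(k+1)^{1-\delta_1})$ with $W_1 = b\lambda_2(\mathcal{L})/(1-\delta_1)$, matching the first term of \eqref{Eq:MSbound1a} after the constant prefactor is absorbed into $W_3$. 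For the weighted sum, I would use $\prod_{\ell=j+1}^{k}(1-\beta_\ell\lambda_2)\leq \exp\bigl(-W_1\bigl[(k+1)^{1-\delta_1}-(j+1)^{1-\delta_1}\bigr]\bigr)$ and change variables $s=(j+1)^{1-\delta_1}$; a Laplace-type integration-by-parts argument then shows the sum is $O(f_k/\beta_k)=O((k+1)^{2\delta_1-\delta_2})$, yielding the second term of \eqref{Eq:MSbound1a}.

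The main technical obstacle will be the sharpness of the weighted-sum estimate: a crude bound $\sum_j\prod_{\ell=j+1}^{k}(1-\beta_\ell\lambda_2)f_j\leq \sum_j f_j$ would miss the crucial $\beta_k^{-1}$ factor that upgrades the forcing rate to the sharp $(k+1)^{2\delta_1-\delta_2}$. Recovering that factor with explicit, non-asymptotic constants will require splitting the sum at a threshold index $\bar k$ beyond which $\beta_k\lambda_2(\mathcal{L})<\delta_2-\delta_1$ makes the Laplace approximation rigorous; indices $j\leq\bar k$ can then be absorbed into the exponentially-decaying constant $W_3$, whereas the tail beyond $\bar k$ produces the polynomial term with explicit constant $W_2$.
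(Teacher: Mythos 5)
Your proposal follows essentially the same route as the paper: project the update onto the consensus subspace via $M\otimes I_{d_w}$, obtain the mean-square recursion $\mathbb{E}\|\tilde{\mathbf{w}}_{k+1}\|_2^2\leq(1-\beta_k\lambda_2(\mathcal{L}))\mathbb{E}\|\tilde{\mathbf{w}}_k\|_2^2+O\bigl((k+1)^{-(\delta_2-\delta_1)}\bigr)$, and solve it by unrolling, bounding the products by $\exp(-W_1(k+1)^{1-\delta_1})$, and splitting the weighted sum at a threshold index so that a Laplace-type estimate yields the $(k+1)^{-(\delta_2-2\delta_1)}$ tail — exactly the content of the paper's Lemma~\ref{Lemma:BaiGeorge}. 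The only (cosmetic) difference is that you handle the cross terms by Young's inequality on the squared norm and the martingale property of the noise, whereas the paper first takes norms and then applies the weighted inequality $(x+y)^2\leq(1+\theta)x^2+(1+\theta^{-1})y^2$ twice; both give the same contraction factor and the same forcing order, so your argument is sound.
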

Detailed proof of Theorem~\ref{Theorem:Consensus} is given in \ref{Proof:ThrmConsen}. Now we present our main result which shows that the KL-divergence between $p_t$ and $p^*$ is in fact decreasing. 
\begin{theorem}\label{Theorem:KLconvergence}
  Consider the decentralized ULA (D-ULA) given in Algorithm~\ref{Algorithm1} under Assumptions 1-3 with $\alpha_k$, $\beta_k$ given in Condition~\ref{Cond:AlphaBeta} and $a$ in $\alpha_k$ selected as
\begin{align}
    a = \frac{1}{n^{{\gamma}}} \left(\frac{\rho_U (3\delta_2-1)}{25 L^4 \delta_2} \right)^{\frac{1}{3}},\quad {\gamma>2}.
\end{align}
Given that the target distribution satisfies the log-Sobolev inequality~\eqref{eq:LSI} with a constant $\rho_U >0$, and has a bounded second moment, i.e., $\int\,\|\bar{\bm{w}}\|^2_2\,p^*(\bar{\bm{w}})\,d\bar{\bm{w}} \leq c_1$ for some bounded positive constant $c_1$, then for all initial distributions $p_{t_0}(\bar{\bm{w}})$ satisfying  $F(p_{t_0}(\bar{\bm{w}})) \leq c_2$, we have 
\begin{align}
    \begin{split}
       F(p_{t_{k+1}}(\bar{\bm{w}})) \leq \frac{ F(p_{t_{0}}(\bar{\bm{w}})) + 
       \bar{C}_{_{F_1}}
       }{\exp\left(\rho_U\sum_{{\ell}=0}^k\alpha_{\ell}\right)}  &+ 
       \frac{1}{n^{{\gamma-2}}}
       \frac{\bar{C}_{_{F_2}}}{(k+1)^{\delta_2-2\delta_1}} + 
       \frac{\bar{C}_{_{F_3}}}{\exp\left( \frac{\rho_U a}{1-\delta_2}(k+1)^{1-\delta_2}\right)}
   \end{split}\label{eq:FpbarW7case1}
    \end{align}
where the positive constants $\bar{C}_{_{F_1}}$, $\bar{C}_{_{F_2}}$, $\bar{C}_{_{F_3}}$ and associated parameters are defined in \eqref{eq:CF1}-\eqref{eq:CFCW2}.
\end{theorem}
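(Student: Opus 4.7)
The plan is to track the Kullback–Leibler divergence $F(p_t) = \text{KL}(p_t \| p^*)$ along the interpolating SDE \eqref{eqp4} on each interval $[t_k, t_{k+1})$, derive a Grönwall-type differential inequality via the log-Sobolev inequality \eqref{eq:LSI}, control the two perturbations (time-discretization bias and consensus error) using Theorem~\ref{Theorem:Consensus} together with the Lipschitz assumption, and then unroll the resulting one-step recursion across $k$ to read off the three-term decomposition \eqref{eq:FpbarW7case1}. First I would write the Fokker–Planck equation for $p_t$ on $[t_k,t_{k+1})$: because the drift in \eqref{eqp4} is frozen at $\bar{\bm{w}}(t_k)$, one obtains $\partial_t p_t = \nabla\!\cdot\!\big(p_t\,[\nabla \bm{U}(\bar{\bm{w}}(t_k),\mathbf{X}) + \zeta(\bar{\bm{w}}(t_k),\tilde{\omega}(t))]\big) + \Delta p_t$ (interpreted in the conditional sense). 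Differentiating $F(p_t)$ and applying integration by parts yields, in the standard Vempala–Wibisono style, $\tfrac{d}{dt}F(p_t) = -I(p_t\|p^*) + \mathbb{E}_{p_t}\langle \nabla\log(p_t/p^*),\, \nabla \bm{U}(\bar{\bm{w}}(t),\mathbf{X})-\nabla \bm{U}(\bar{\bm{w}}(t_k),\mathbf{X}) - \zeta(\bar{\bm{w}}(t_k),\tilde{\omega}(t))\rangle$, where $I(\cdot\|\cdot)$ is the relative Fisher information. Young's inequality splits the cross term into $\tfrac{1}{2}I(p_t\|p^*)$ plus squared perturbation norms, and then \eqref{eq:LSI} gives $I(p_t\|p^*) \geq 2\rho_U F(p_t)$, producing
\begin{align*}
\frac{d}{dt}F(p_t) \leq -\rho_U F(p_t) + C_1\,\mathbb{E}\|\nabla \bm{U}(\bar{\bm{w}}(t)) - \nabla \bm{U}(\bar{\bm{w}}(t_k))\|^2 + C_2\,\mathbb{E}\|\zeta(\bar{\bm{w}}(t_k),\tilde{\omega}(t))\|^2.
\end{align*}

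Second, I would bound the two perturbation terms separately. For the discretization bias, Lipschitz continuity of $\nabla \bm{U}$ (aggregated constant $L = \sum L_i$) reduces the bound to $L^2\,\mathbb{E}\|\bar{\bm{w}}(t)-\bar{\bm{w}}(t_k)\|^2$. Integrating \eqref{eqp4} over $[t_k,t]$, this is in turn controlled by $\alpha_k^2\,\mathbb{E}\|\nabla \bm{U}(\bar{\bm{w}}(t_k))\|^2 + 2d_w\alpha_k + \alpha_k^2\,\mathbb{E}\|\zeta\|^2$, and $\mathbb{E}\|\nabla \bm{U}(\bar{\bm{w}}(t_k))\|^2$ is tied back to $F(p_{t_k})$ via the bounded second moment of $p^*$ and standard concentration of $\|\nabla \bm{U}\|^2$ along the flow (this is the step that dictates the cubic-root scaling of $a$ in terms of $\rho_U/L^4$). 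For the consensus perturbation, Lipschitz continuity gives $\|\zeta(\bar{\bm{w}},\tilde{\mathbf{w}})\|^2 \leq L^2\,\|\tilde{\mathbf{w}}\|^2$, and Theorem~\ref{Theorem:Consensus} bounds $\mathbb{E}\|\tilde{\mathbf{w}}_k\|^2$ by a sum of an $\exp(-W_1(k+1)^{1-\delta_1})$ transient and an $\mathcal{O}(1/(k+1)^{\delta_2-2\delta_1})$ polynomial tail.

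Third, integrating the differential inequality across $[t_k,t_{k+1})$ of length $\alpha_k$ yields a one-step recursion of the form
\begin{align*}
F(p_{t_{k+1}}) \leq e^{-\rho_U \alpha_k}\,F(p_{t_k}) + \alpha_k\,E^{\mathrm{disc}}_k + \alpha_k\,E^{\mathrm{cons}}_k,
\end{align*}
where $E^{\mathrm{disc}}_k = \mathcal{O}(\alpha_k\,L^2(d_w + \text{moments}))$ and $E^{\mathrm{cons}}_k = \mathcal{O}(L^2\,\mathbb{E}\|\tilde{\mathbf{w}}_k\|^2)$. Iterating this recursion from $k=0$ gives $F(p_{t_{k+1}}) \leq \exp(-\rho_U\sum_{\ell=0}^k\alpha_\ell) F(p_{t_0}) + \sum_{\ell=0}^k \exp(-\rho_U\sum_{j=\ell+1}^k \alpha_j)\,\alpha_\ell (E^{\mathrm{disc}}_\ell + E^{\mathrm{cons}}_\ell)$. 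Using $\alpha_k = a/(k+1)^{\delta_2}$ with $\delta_2 < 1$, the partial sums $\sum_{\ell=0}^k \alpha_\ell$ grow like $(k+1)^{1-\delta_2}/(1-\delta_2)$, which gives the first exponential term of \eqref{eq:FpbarW7case1}. The polynomial portion of $E^{\mathrm{cons}}_\ell$ (of order $\ell^{-(\delta_2-2\delta_1)}$), weighted by $\alpha_\ell = \mathcal{O}(\ell^{-\delta_2})$ and summed against the exponential kernel, yields the second term of \eqref{eq:FpbarW7case1} at rate $(k+1)^{-(\delta_2-2\delta_1)}$; the $1/n^{\gamma-2}$ speed-up emerges by tracking how $a\propto n^{-\gamma}$ propagates through $W_2$ (which carries an $n^2 a^2$ factor from the gradient-disagreement bound) and through the $\alpha_k$-weighting of $E^{\mathrm{cons}}_k$. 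The fast-decaying transient part of $\mathbb{E}\|\tilde{\mathbf{w}}_\ell\|^2$ yields the third exponential term of \eqref{eq:FpbarW7case1}.

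The main obstacle is the second step, namely closing the discretization-bias bound without forfeiting the exponential contraction rate: $\mathbb{E}\|\bar{\bm{w}}(t)-\bar{\bm{w}}(t_k)\|^2$ introduces $\mathbb{E}\|\nabla\bm{U}(\bar{\bm{w}}(t_k))\|^2$, which must be reabsorbed into $-\rho_U F(p_t)$ via the log-Sobolev inequality, bounded second moment of $p^*$, and a Gaussian smoothing lemma. This absorption is only possible if $\alpha_k L^2$ is small enough — which is precisely what the stated choice $a \propto n^{-\gamma}(\rho_U(3\delta_2-1)/(25 L^4\delta_2))^{1/3}$ ensures. A secondary difficulty is that $\zeta$ is not zero-mean, so the consensus perturbation enters the KL evolution as a drift bias rather than a martingale increment; handling this requires separating the contribution via Cauchy–Schwarz in the Fisher-information inner product and carefully summing the heterogeneous (transient vs polynomial) components of the Theorem~\ref{Theorem:Consensus} bound against the time-varying step sizes.
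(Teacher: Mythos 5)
Your proposal follows essentially the same route as the paper's proof: a Fokker--Planck equation for the interpolated SDE with frozen drift, differentiation of the KL divergence, Young's inequality to split the cross term against half the relative Fisher information, the log-Sobolev inequality to obtain the $-\rho_U F$ contraction, Lipschitz bounds reducing both perturbations to $\mathbb{E}\|\bar{\bm{w}}(t)-\bar{\bm{w}}(t_k)\|_2^2$ and $\mathbb{E}\|\tilde{\omega}(t_k)\|_2^2$ (the latter handled by Theorem~\ref{Theorem:Consensus}), and an unrolled one-step recursion whose sums are split into transient and polynomial parts. You also correctly identify the crux — reabsorbing $\mathbb{E}\|\nabla\bm{U}(\bar{\bm{w}}(t_k),\mathbf{X})\|_2^2$ without losing the contraction, which the paper closes by an induction on the second moment (Lemma~\ref{lem:bnd2}, via Talagrand's inequality) that succeeds precisely because of the stated choice of $a$ — so the proposal is sound and matches the paper's argument.
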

Proof of Theorem~\ref{Theorem:KLconvergence} is given in \ref{Proof:ThrmKL}. Compared to the existing results, by using a decaying step-size, we are able to remove the constant bias term present in the KL-divergence due to the additive noise. In \eqref{eq:FpbarW7case1}, the constants $\bar{C}_{_{F_1}}$ and $\bar{C}_{_{F_3}}$ are dominated by the consensus-error while the additive noise contributes most to $\bar{C}_{_{F_2}}$. Note that the exponential convergence rate for the initial KL-divergence $F(p_{t_{0}}(\bar{\bm{w}}))$ is similar to what is currently known in the literature \cite{cheng2018convergence,ma2019sampling,NIPS2019_9021}. More importantly, the constant bias-term present in the existing results for the KL-divergence between the actual and target distribution, which is absorbed into the constant $\bar{C}_{_{F_2}}$, is decreasing and we do see a speed-up for decay with the number of agents due to the $n^{\gamma-2}$-term. Even though this speed-up increases with $\gamma$, an increasing $\gamma$ in fact decreases the exponential rates of the first and the third terms in \eqref{eq:FpbarW7case1}. Furthermore, constants $\bar{C}_{_{F_1}}$, $\bar{C}_{_{F_2}}$ are $\bar{C}_{_{F_3}}$ are polynomial in the problem dimension $d_w$.   
\begin{corollary}\label{Corollary:kStar}
For the decentralized ULA (D-ULA) given in Algorithm~\ref{Algorithm1} under the conditions of Theorem~\ref{Theorem:KLconvergence} and error tolerance $\epsilon \in (0,\,\,1)$, there holds
\begin{align}
    F(p_{t_{k}}(\bar{\bm{w}}))
    \leq \epsilon,\qquad \forall\,k \geq k^*
\end{align}
where $$k^* = \max \left\{ \left( \frac{1-\delta_2}{a \rho_U }\log\left( \frac{2Q_1}{\epsilon} \right) \right)^{\frac{1}{1-\delta_2}},\,\, \left( \frac{2Q_2}{\epsilon} \right)^{\frac{1}{\delta_2-2\delta_1}}
    \right\},$$ 
$Q_1 = \left( F(p_{t_{0}}(\bar{\bm{w}})) + \bar{C}_{_{F_1}} \right) \exp\left( \frac{a \rho_U }{1-\delta_2} \right) + \bar{C}_{_{F_3}}$ and $Q_2 = \frac{\bar{C}_{_{F_2}}}{n^{\gamma-2}}$. 
\end{corollary}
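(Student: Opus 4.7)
The plan is to start from the three-term bound established in Theorem~\ref{Theorem:KLconvergence}, consolidate the two exponentially-decaying terms into a single one, and then require each of the two resulting decay terms to be at most $\epsilon/2$. Equating each bound to $\epsilon/2$ and solving for $k$ will produce the two arguments of the $\max$ in the definition of $k^*$.

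First I would control the discrete sum $\sum_{\ell=0}^{k}\alpha_\ell$ in the first denominator of \eqref{eq:FpbarW7case1} by a standard integral-comparison argument. Since $\alpha_\ell = a/(\ell+1)^{\delta_2}$ with $\delta_2<1$, monotonicity gives
\begin{align*}
\sum_{\ell=0}^{k}\frac{a}{(\ell+1)^{\delta_2}} \;\geq\; \int_{1}^{k+1}\frac{a}{x^{\delta_2}}\,dx \;=\; \frac{a}{1-\delta_2}\bigl((k+1)^{1-\delta_2}-1\bigr),
\end{align*}
so that
\begin{align*}
\exp\!\Bigl(\rho_U\!\sum_{\ell=0}^{k}\alpha_\ell\Bigr)^{-1} \;\leq\; \exp\!\Bigl(\tfrac{a\rho_U}{1-\delta_2}\Bigr)\,\exp\!\Bigl(-\tfrac{a\rho_U}{1-\delta_2}(k+1)^{1-\delta_2}\Bigr).
\end{align*}
Substituting into \eqref{eq:FpbarW7case1}, the first and third terms of the bound share the same exponential factor $\exp\!\bigl(-\tfrac{a\rho_U}{1-\delta_2}(k+1)^{1-\delta_2}\bigr)$, and combining their prefactors yields exactly the constant $Q_1=(F(p_{t_0}(\bar{\bm{w}}))+\bar{C}_{_{F_1}})\exp\!\bigl(\tfrac{a\rho_U}{1-\delta_2}\bigr)+\bar{C}_{_{F_3}}$ in the statement. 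With $Q_2=\bar{C}_{_{F_2}}/n^{\gamma-2}$, the bound reduces to
\begin{align*}
F(p_{t_{k+1}}(\bar{\bm{w}})) \;\leq\; Q_1\,\exp\!\Bigl(-\tfrac{a\rho_U}{1-\delta_2}(k+1)^{1-\delta_2}\Bigr) \;+\; \frac{Q_2}{(k+1)^{\delta_2-2\delta_1}}.
\end{align*}

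Next I would split the target tolerance evenly across the two remaining terms and invert each. Imposing $Q_1\exp\!\bigl(-\tfrac{a\rho_U}{1-\delta_2}(k+1)^{1-\delta_2}\bigr)\leq\epsilon/2$ gives $(k+1)^{1-\delta_2}\geq\tfrac{1-\delta_2}{a\rho_U}\log(2Q_1/\epsilon)$, equivalently
\begin{align*}
k+1 \;\geq\; \Bigl(\tfrac{1-\delta_2}{a\rho_U}\log(2Q_1/\epsilon)\Bigr)^{\!1/(1-\delta_2)},
\end{align*}
which matches the first argument of the $\max$. Similarly, imposing $Q_2/(k+1)^{\delta_2-2\delta_1}\leq\epsilon/2$, which is well-defined since $\delta_2-2\delta_1>0$ by Condition~\ref{Cond:AlphaBeta}, gives $k+1\geq(2Q_2/\epsilon)^{1/(\delta_2-2\delta_1)}$, matching the second argument. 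Taking $k^*$ to be the larger of these two thresholds guarantees both terms are simultaneously $\leq\epsilon/2$, hence their sum is $\leq\epsilon$.

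Most of this is bookkeeping rather than genuine obstacle; the only care point is the integral comparison for $\sum\alpha_\ell$, where one must be slightly careful about the lower limit and the subtracted constant $1$ (which is absorbed into the multiplicative factor that becomes part of $Q_1$). I would also briefly verify that $\epsilon\in(0,1)$ ensures $\log(2Q_1/\epsilon)>0$ so the first threshold is well-defined, and note that replacing $k+1$ by $k$ in the final statement only tightens the bound, so writing the conclusion as $\forall\,k\geq k^*$ is consistent with the shifted-index form of \eqref{eq:FpbarW7case1}.
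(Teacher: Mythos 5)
Your proposal is correct and follows essentially the same route as the paper's own proof: the same integral-comparison lower bound on $\sum_{\ell=0}^{k}\alpha_\ell$, the same consolidation of the two exponential terms into $Q_1$, and the same $\epsilon/2$ splitting with inversion of each term to obtain the two arguments of the $\max$. Your extra remarks on the index shift and the positivity of $\log(2Q_1/\epsilon)$ are sound and only make the bookkeeping more explicit than the paper's version.
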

Corollary~\ref{Corollary:kStar} follows from Theorem~\ref{Theorem:KLconvergence} and the proof is given in \ref{Proof:Corol:kStar}. Corollary~\ref{Corollary:kStar} provides the minimum number of iterations required to decrease the KL-divergence below a given error-tolerance $\epsilon$.


\section{Numerical experiments}
We apply the proposed algorithm to perform decentralized Bayesian learning for Gaussian mixture modeling, logistic regression, and classification and empirically compare our proposed algorithm to centralized ULA (C-ULA). In all the experiments, we have used a network of five agents in an undirected unweighted ring topology for the decentralized setting. Additional details of all the experiments including step sizes and number of epochs are provided in the Supplementary material (see \ref{sec:SupExpmts}).
%
\subsection{Parameter estimation for Gaussian mixture} \label{sub:GM}
In this section, we compare the efficiency of D-ULA against the C-ULA for parameter estimation of a multimodal Gaussian mixture with tied means \cite{welling2011bayesian}. The Gaussian mixture  is given by
\begin{align}
    \theta_{1} \sim & \mathcal{N}(0, \sigma_1^2);  \quad\theta_{2} \sim \mathcal{N}(0, \sigma_2^2)  \quad \text{and} \quad 
    x_i \sim \frac{1}{2} \mathcal{N}({\theta}_1, \sigma_x^2)+ \frac{1}{2} \mathcal{N} ({\theta}_1+{\theta}_2, \sigma_x^2) \nonumber
\end{align}
where $\sigma_1^2=10$, $\sigma_2^2=1$,  $\sigma_x^2=2$ and $\bm{w} \triangleq [\theta_1, \theta_2]^\top \in \mathbb{R}^2$. For the centralized setting, similar to \cite{welling2011bayesian}, 100 data samples are drawn from the model with $\theta_1 = 0$ and $\theta_2 = 1$. Available  $100$ data samples are randomly divided into 5 sets of 20 samples that are made available to each agent in the decentralized network. The posterior distribution of the parameters is bimodal with negatively correlated modes at $\bm{w}=[0,\, 1]$ and  $\bm{w}=[1,\,-1]$. As shown in Figure~\ref{fig:GM1}, the posteriors estimated by D-ULA and C-ULA replicate the true posterior distribution of parameters. Quality of estimated posteriors are compared using an approximate Wasserstein measure \cite{cuturi2013sinkhorn}. With accurate metric  being computational complex, we resort to the Sinkhorn distance and Sinkhorn's algorithm introduced in \cite{cuturi2013sinkhorn}  which in essence defines the cost incurred while mapping the estimated  posterior to the  true posterior using a transport matrix. The regularization parameter, $\lambda$ in Sinkhorn algorithm is set to 0.1. The experiments are performed for networks of size 1, 5 and  10 and the corresponding Sinkhorn distances, $d_{M}$, are  given by  $0.259$, $0.251$ and $0.244$, respectively.  

\begin{figure}
\begin{centering}
\begin{subfigure}{.25\linewidth}
\centering
\includegraphics[clip, trim=0cm 8cm 0cm 8.3cm,scale=.17]{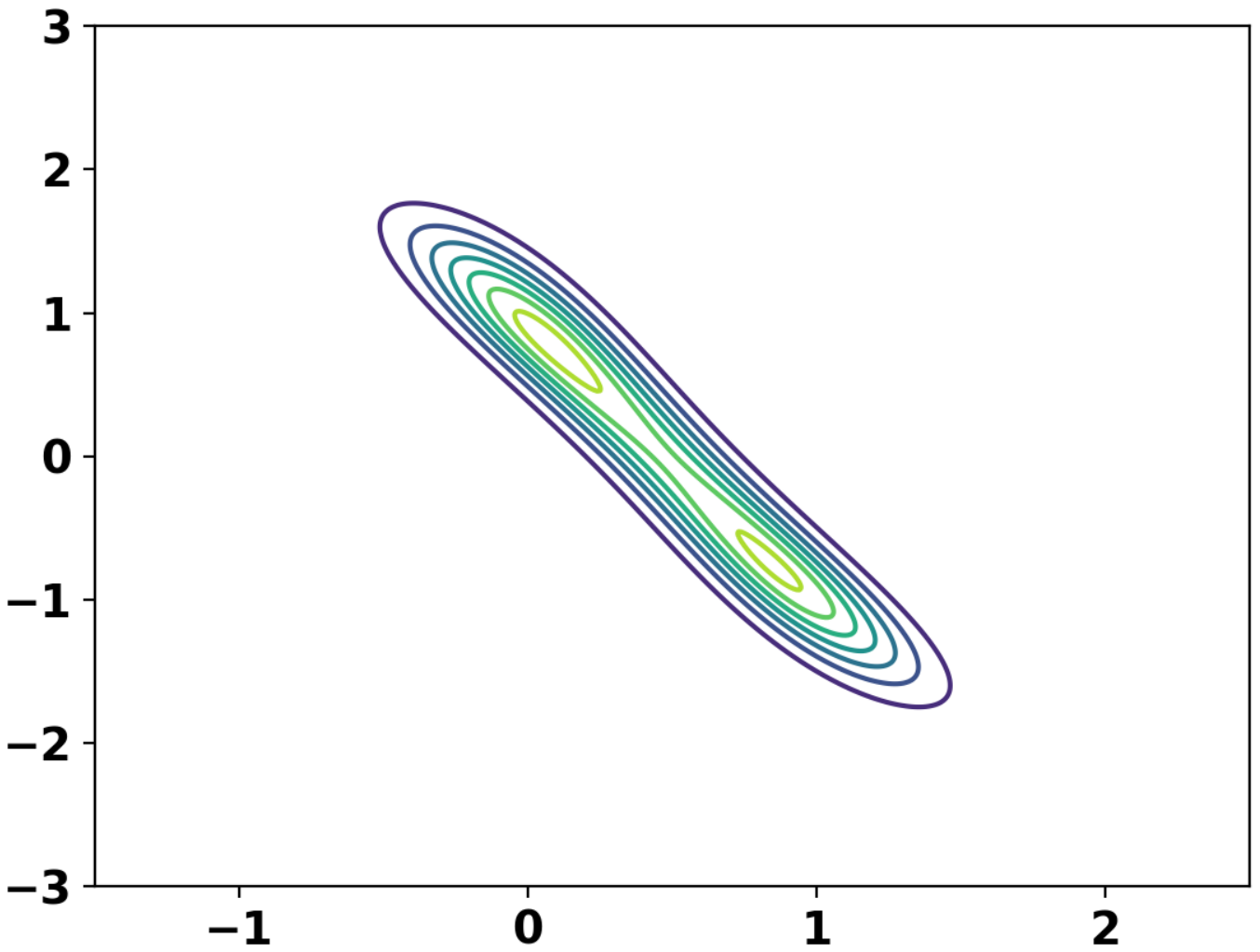}
\caption{}
\label{fig:sub11}
\end{subfigure}%
\begin{subfigure}{.25\linewidth}
\centering
\includegraphics[scale=.15]{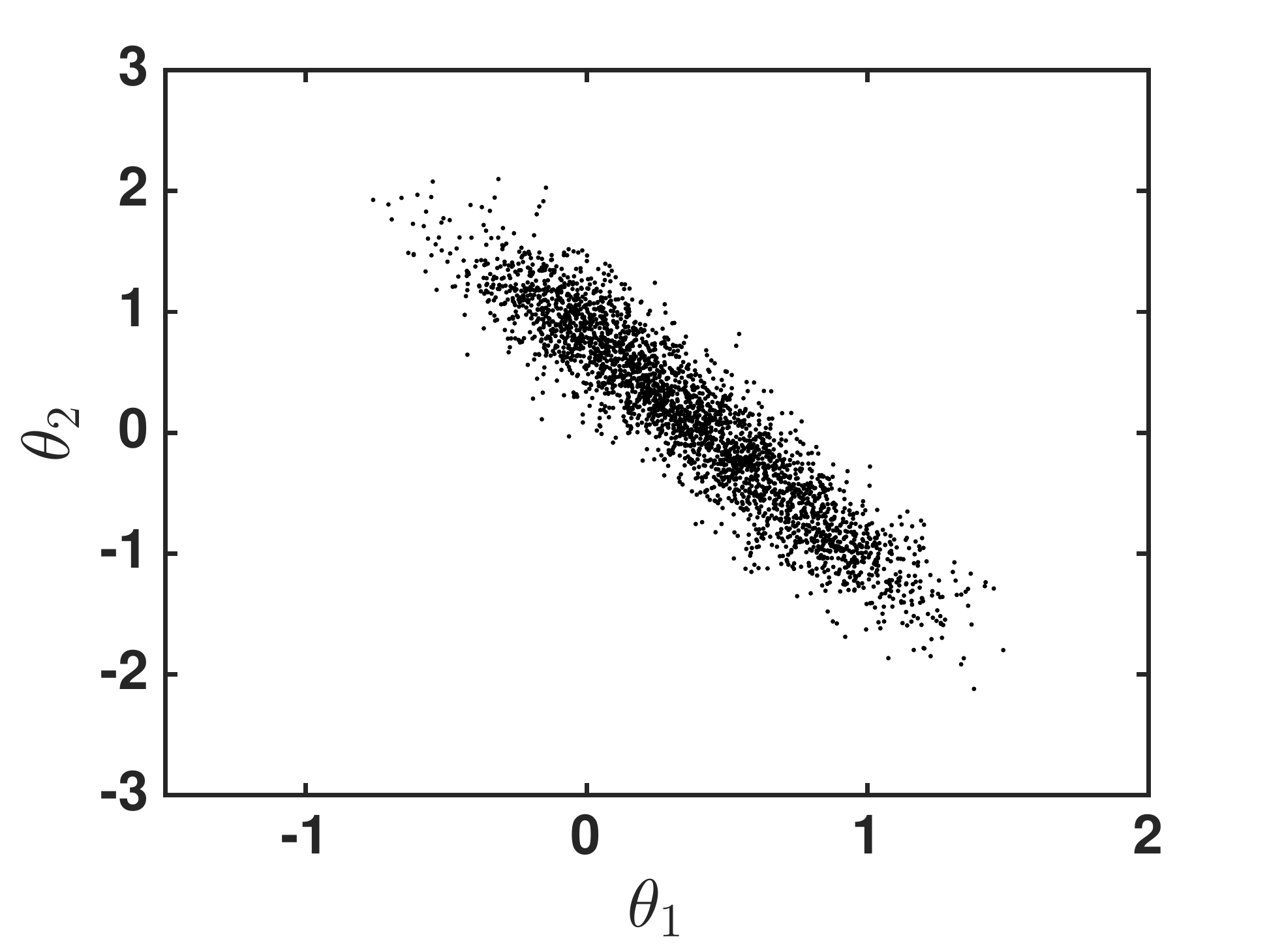}
\caption{}
\label{fig:sub12}
\end{subfigure}%
\begin{subfigure}{.25\linewidth}
\centering
\includegraphics[scale=.15]{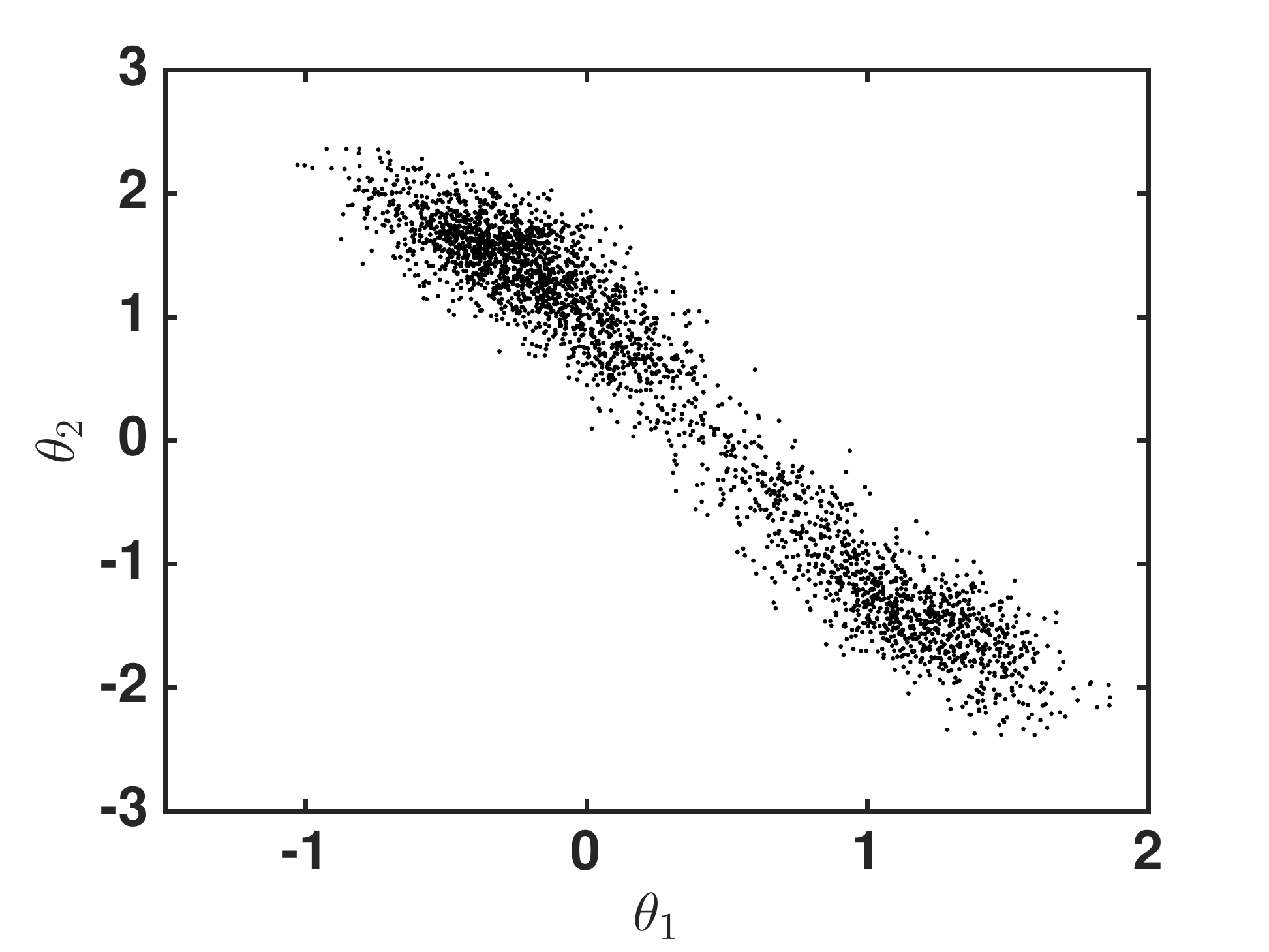}
\caption{}
\label{fig:sub21}
\end{subfigure}%
\begin{subfigure}{.25\linewidth}
\centering
\includegraphics[scale=.15]{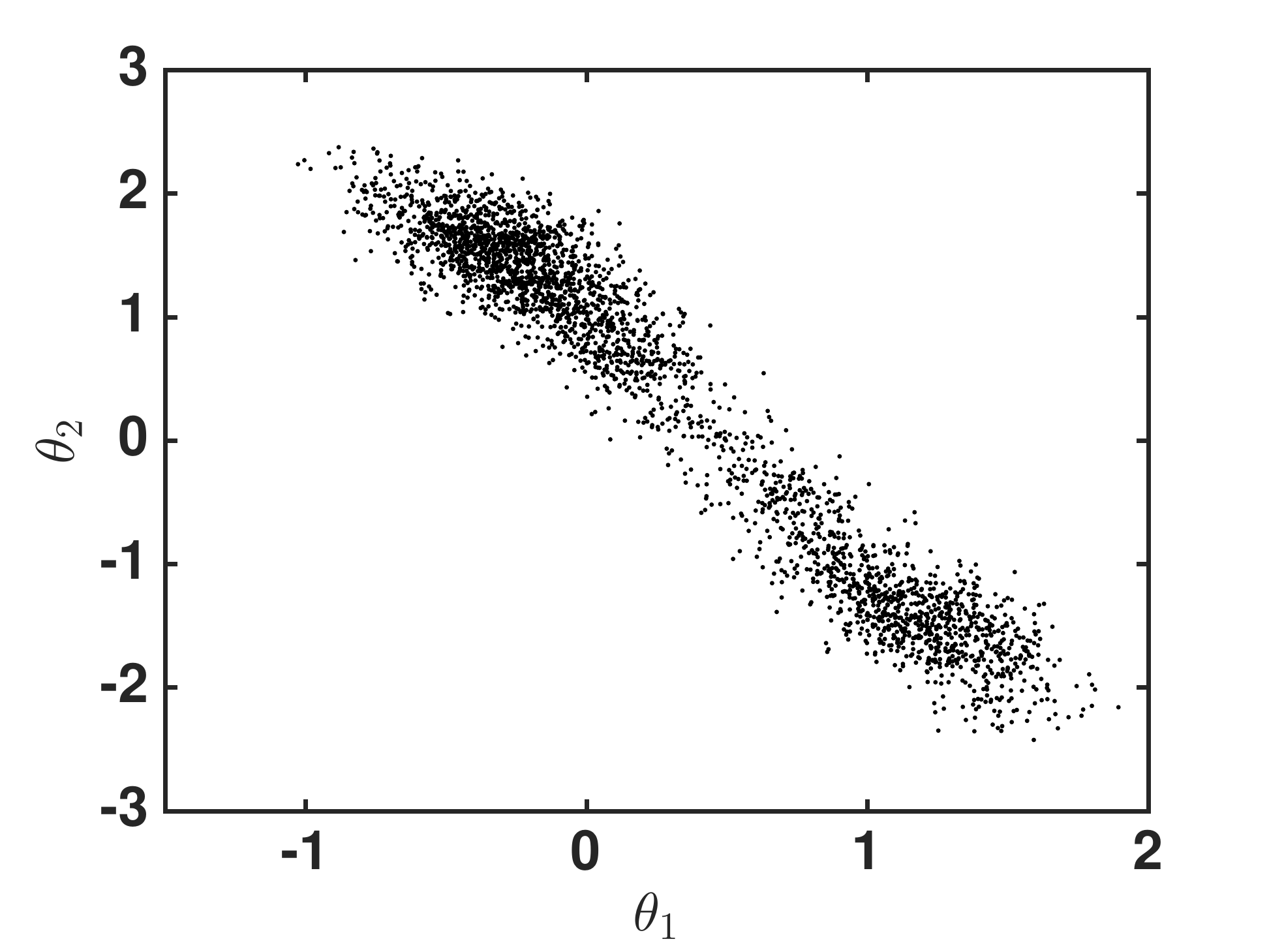}
\caption{}
\label{fig:sub22}
\end{subfigure}
\begin{subfigure}{.25\linewidth}
\centering
\includegraphics[scale=.16]{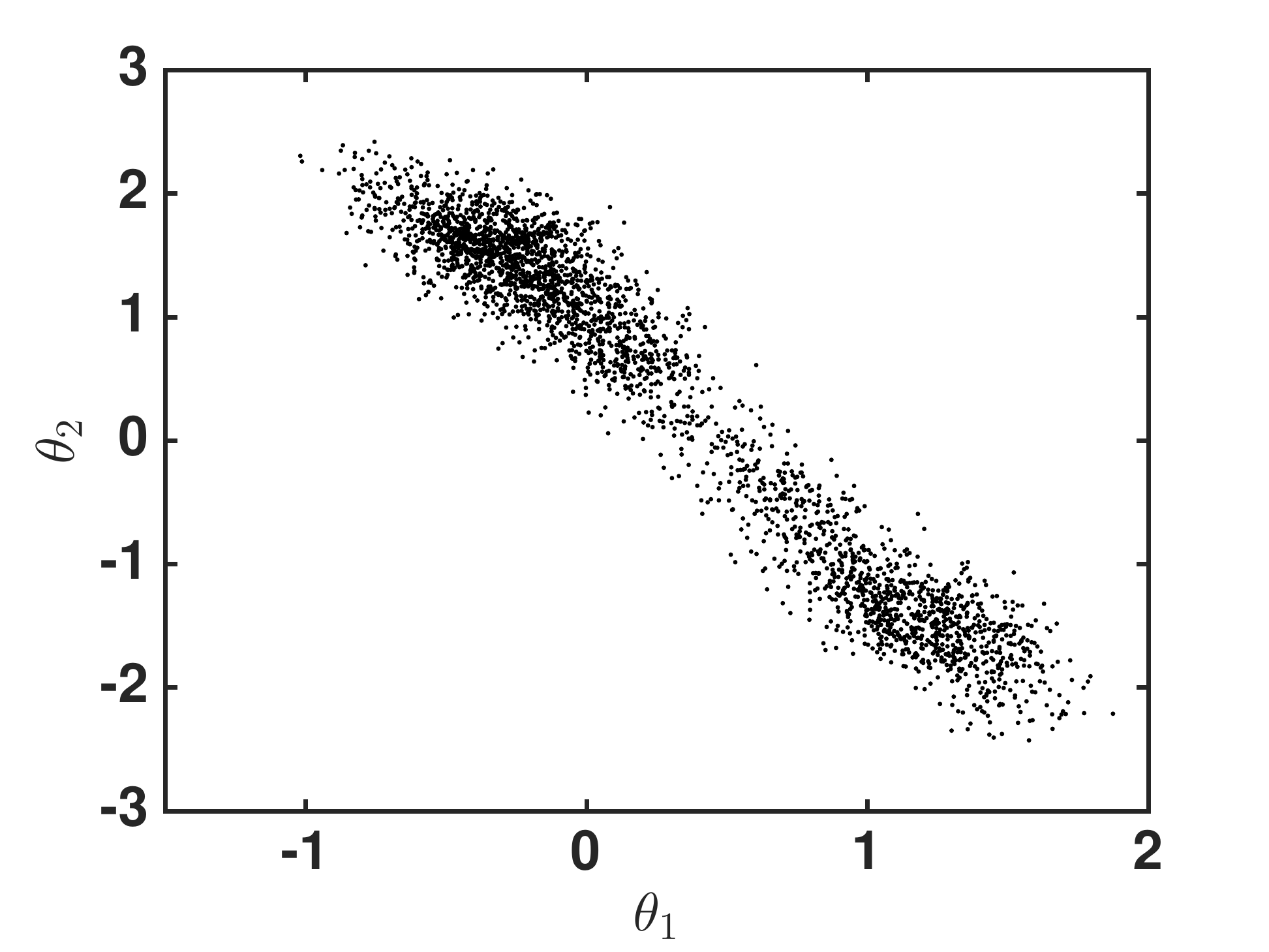}
\caption{}
\label{fig:sub23}
\end{subfigure}%
\begin{subfigure}{.25\linewidth}
\centering
\includegraphics[scale=.16]{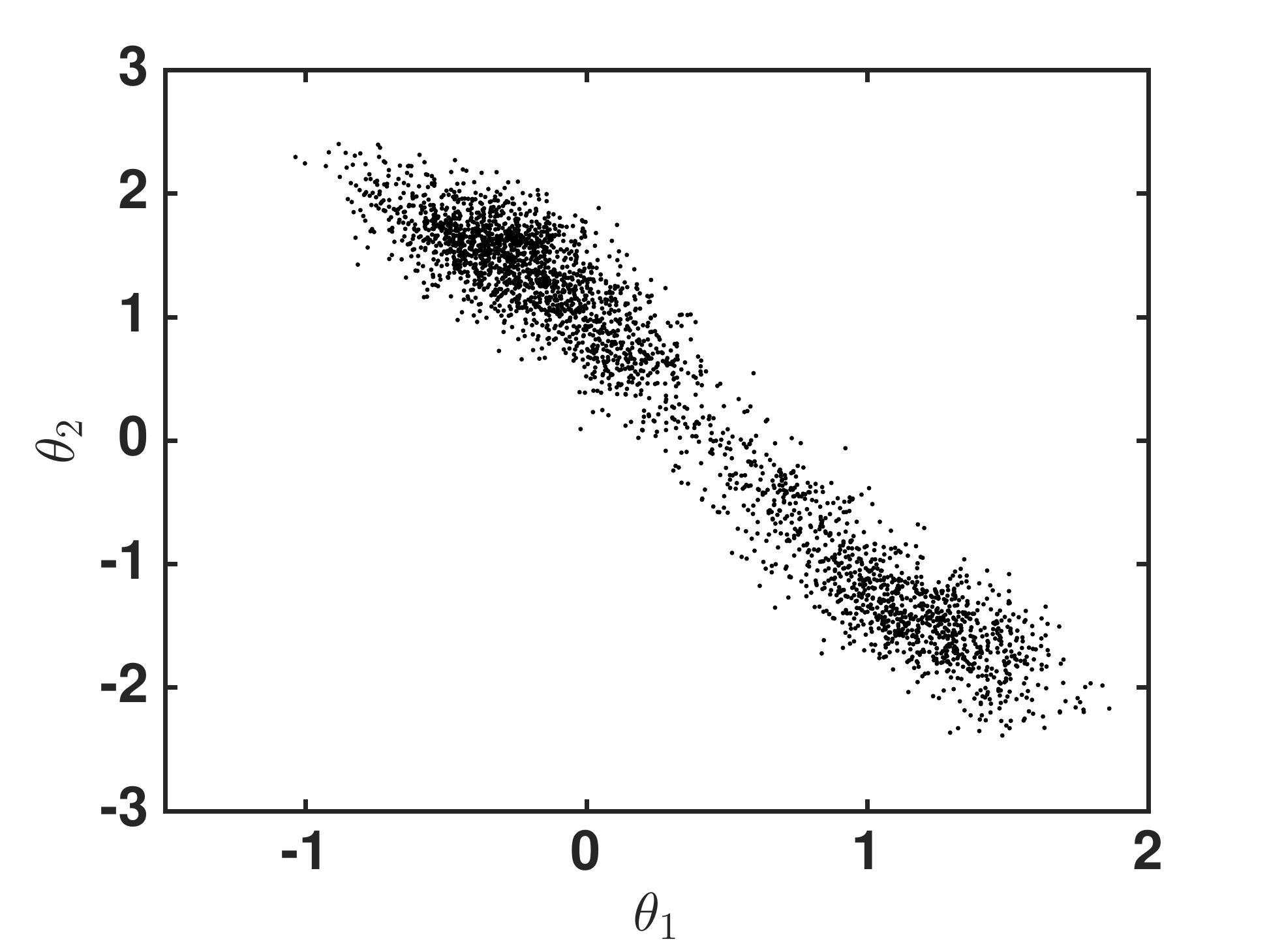}
\caption{}
\label{fig:sub24}
\end{subfigure}%
\begin{subfigure}{.25\linewidth}
\centering
\includegraphics[scale=.16]{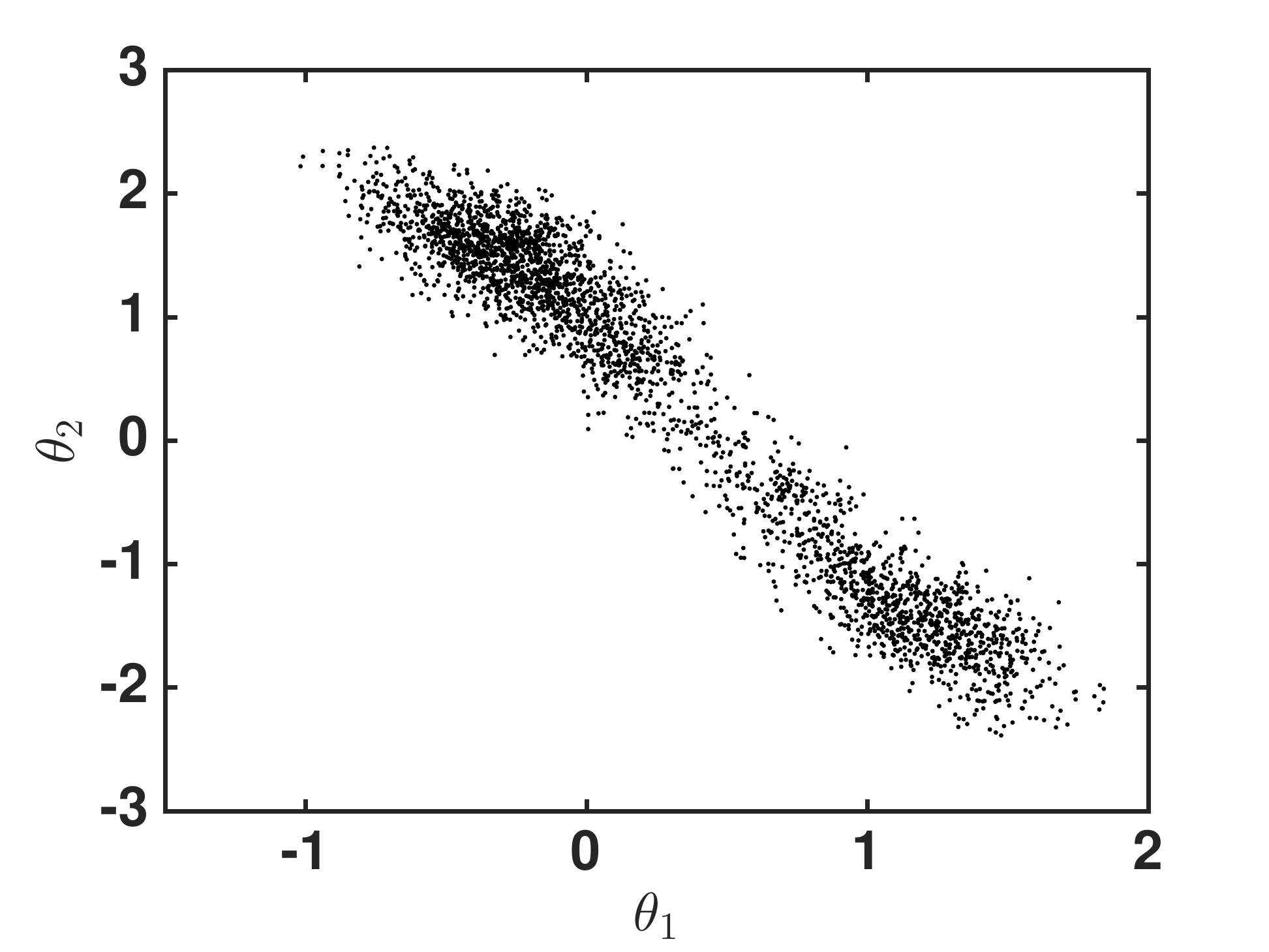}
\caption{}
\label{fig:sub25}
\end{subfigure}
\caption{(a) True posterior (b) Estimated posterior by C-ULA (c)-(g) Posteriors estimated by D-ULA}
\label{fig:GM1}
\end{centering}
\end{figure}
\subsection{Bayesian logistic regression}\label{sec:logit}
We compare the performance of D-ULA and C-ULA for Bayesian inference of logistic regression models using $a9a$ dataset available at the UCI machine learning repository  \footnote{http://www.csie.ntu.edu.tw/~cjlin/libsvmtools/ datasets/binary/a9a}. The dataset contains 32561 observations and 123 parameters. We use a Laplace prior with a scale of 1 on the parameters. Test accuracy averaged over 50 runs for both approaches are shown in Figure~\ref{LR_u}.  During each run,  we chose random $80\% $ of data for training and the remaining $20\%$ for testing as in \cite{welling2011bayesian}. For D-ULA, we consider networks with  5, 10, and 25  agents.  The training data for each case is divided into random sets of equal sizes and made available to agents in the decentralized network. During each run, the same $20\%$ partition is used to test the performance of C-ULA and D-ULA. Test results over ten epochs averaged over 50 runs indicate that the performance of D-ULA is comparable to that of C-ULA. Figures \ref{fig:sub22_LRu}, \ref{fig:sub23_LRu}, \ref{fig:sub23_LRu}, and \ref{fig:sub24_LRu} are zoomed into first 1200 iterations to better show faster convergence with increase in network size.  Corresponding accuracy values for C-ULA and D-ULA networks with agents 5, 10, and 25 are 83.89 $\%$, 84.38$\%$, 84.5637$\%$, and 84.5637$\%$. Insets of Figures~\ref{fig:sub22_LRu}, \ref{fig:sub23_LRu}, \ref{fig:sub23_LRu}, and \ref{fig:sub24_LRu} indicates faster convergence of  D-ULA compared to C-ULA. Test accuracy of all the agents in  D-ULA networks settle to the same accuracy level as shown in the insets. The shaded region in the figures indicates one standard deviation. 
\begin{figure}
\begin{subfigure}{.5\linewidth}
\centering
\includegraphics[scale=.35]{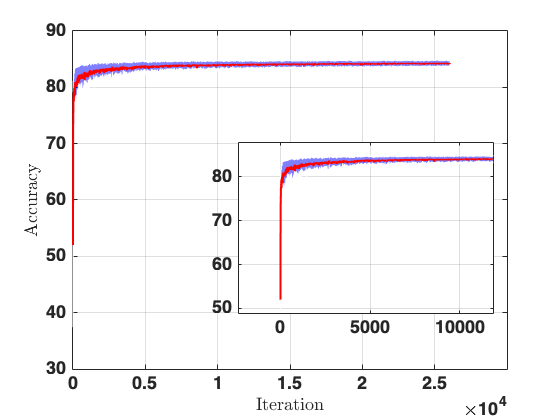}
\caption{Centralized ULA}
\label{fig:sub21_LRu}
\end{subfigure}%
\begin{subfigure}{.5\linewidth}
\centering
\psfrag{Time(sec)}{\small{Time $(t)$}}
\includegraphics[scale=.35]{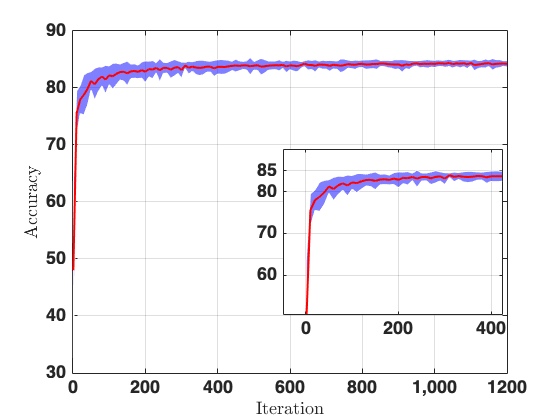}
\caption{Set of 5 agents}
\label{fig:sub22_LRu}
\end{subfigure}
\begin{subfigure}{.5\linewidth}
\centering
\psfrag{Time(sec)}{\small{Time $(t)$}}
\includegraphics[scale=.35]{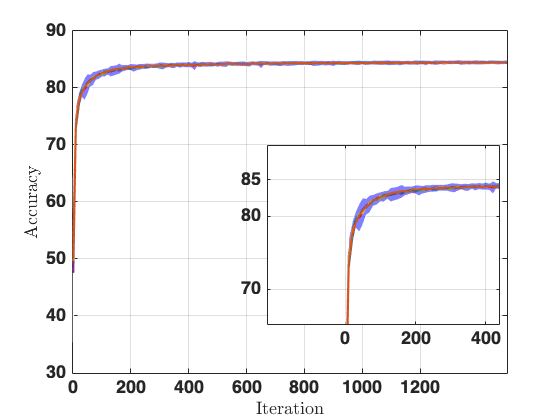}
\caption{Set of 10 agents}
\label{fig:sub23_LRu}
\end{subfigure}%
\begin{subfigure}{.5\linewidth}
\centering
\psfrag{Time(sec)}{\small{Time $(t)$}}
\includegraphics[scale=.35]{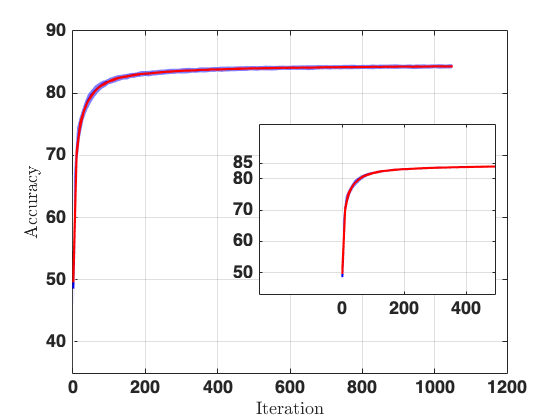}
\caption{Set of 25 agents}
\label{fig:sub24_LRu}
\end{subfigure}
\caption{{Test accuracy  averaged over 50 runs}}
\label{LR_u}
\end{figure}
\subsection{Bayesian learning for handwritten digit classification and OOD detection}\label{sec:class}
In this section, we present decentralized Bayesian learning as a potential strategy to recognize handwritten digits in images. For this, we use the MNIST data set containing 60000 gray scale images of 10 digits (0-9) for training and 10000 images for testing. Each agent in D-ULA aims to train its own neural network, which is a randomly initialized LeNet-5~\cite{lecun1998gradient} with Kaiming uniform prior \cite{he2015delving} on the parameters of the network. Each agent has access to 12000 randomly chosen training samples. Test accuracy obtained using stochastic gradient descent (SGD), C-ULA, and 5 agents of D-ULA after 10 epochs are 98.15$\%$, 98.16$\%$, 98.52$\%$, 98.52$\%$, 98.39$\%$, 98.45$\%$ and 98.47$\%$, respectively. 

Next, we explore the efficacy of the proposed algorithm to detect out-of-distribution (OOD) samples or outliers in the datasets. We train each LeNet-5 neural network on the MNIST training data set and test it on MNIST test data set for normalcy class and Street View House Numbers (SVHN)\footnote{http://ufldl.stanford.edu/housenumbers/} test data set for OOD data. SVHN data set is similar to MNIST, but with color images of 10 digits (0-9) and extra confusing digits around the central digit of interest. We converted them to gray scale for this experiment. Networks trained on MNIST are expected to give relatively low prediction probabilities for SVHN data samples. Table~\ref{table:2} summarizes the mean and standard deviation of probabilities of predicted labels obtained for all the approaches. Since SGD is a maximum a posteriori point estimate, it fails to recognize out of sample data sets, and gives high prediction probabilities even for OOD SVHN data. One the other hand, C-ULA and D-ULA show an improved performance in detecting OOD SVHN data by giving lower prediction probabilities for SVHN data, but giving high prediction probablities for MNIST test data as seen in Table~\ref{table:2}. The plots of probability density of predicted labels corresponding to all the approaches are provided in the Supplementary material.

The decentralized ULA results in Section~\ref{sec:logit} and~\ref{sec:class} were obtained using a ``mini-batch'' version of the proposed D-ULA algorithm, where the log-likelihood was obtained from random mini-batches of $\bm{X}_i$ for agent $i$, $i=1,\cdots,n$. Although our theoretical analysis is based on the likelihood from the entire $\bm{X}_i$, the empirical results in these two sections show that the ``mini-batch'' D-ULA algorithm is also effective. This is plausible  since the additive noise $\sqrt{2\alpha_k}\mathbf{v}_i(k)$ in~\eqref{DULA1} will dominate the noise in the local posterior term as $k$ increases.
\begin{table}
\caption{Probability of predicted labels (mean/standard deviation)}
\label{table:2}
\renewcommand{\arraystretch}{1.5}
\centering
\resizebox{.75\textwidth}{!}{
\begin{tabular}{ccccccccc}
    \toprule
& & SGD & C-ULA & Agent 1 & Agent 2 & Agent 3 & Agent 4 & Agent 5 \\
\midrule
\parbox[t]{10mm}{\multirow{2}{1mm}{\rotatebox[origin=c]{90}{\small{MNIST}}}} & Mean  & 0.974  & 0.968 & 0.973 & 0.972 & 0.972 & 0.973 & 0.973  \\
& Std. dev.  & 0.078  & 0.086 & 0.079 & 0.08 & 0.08 & 0.08 & 0.08  \\
\midrule
\parbox[t]{10mm}{\multirow{2}{1mm}{\rotatebox[origin=c]{90}{\small{SVHN}}}} & Mean  & 0.849  & 0.604 & 0.659 & 0.6588 & 0.653 & 0.663 & 0.651 \\
& Std. dev  & 0.154 & 0.169 & 0.188  & 0.189 & 0.188 & 0.19  & 0.187 \\
\bottomrule
\end{tabular}}
\end{table}


\section{Conclusion}
In this paper, we present a decentralized collaborative approach for a group of agents to sample the posterior distribution of a parameter of interest with locally available data sets. We assume an undirected connected communication topology between the agents. We propose a decentralized unadjusted Langevin algorithm with time-varying step-sizes and establish conditions on the step-sizes for asymptotic convergence to the target distribution. The algorithm also exhibits a guaranteed speed-up in convergence in the number of agents. We conducted three experiments on Gaussian mixtures, logistic regression, and image classification. The experimental results demonstrated that the proposed algorithm offers improved accuracy with enhanced speed of convergence. The results from the last experiment also suggest a potential application of the proposed algorithm for outlier detection.

\section*{Broader Impact}
This work presents a basic line of research on reducing computational complexity, enhancing speed of convergence, and addressing potential privacy issues associated with centralized Bayesian learning. Experiments and empirical results cover a broad set of applications including parameter estimation for local non-convex models, logistic regression, image classification and outlier detection. We have used publicly available datasets, which have no implications on machine learning bias, fairness or ethics. Hence, we believe that this section about potential negative impact of our work on society is not applicable to the proposed work.
\section*{Acknowledgement}
This work was supported by the CCDC Army Research Laboratory under Cooperative Agreement W911NF-16-2-0008. The work of the second author was supported in part by the National Science Foundation under Grant No. 1925147. The views and conclusions contained in this document are those of the authors and should not be interpreted as representing the official policies, either expressed or implied, of the Army Research Laboratory or the U.S. Government. The U.S. Government is authorized to reproduce and distribute reprints for Government purposes not withstanding any copyright notation here on.

\bibliographystyle{IEEEtran}
\bibliographystyle{unsrt} 
\bibliography{reference.bib}

\newpage

\beginsupplement

\setlength{\abovedisplayskip}{7pt}
\setlength{\belowdisplayskip}{7pt}

\begin{center}{\Large{\bf Decentralized Langevin Dynamics for Bayesian Learning}}
\end{center}

\begin{center}{\Large{\bf(Supplementary Material)}}
\end{center}
\vspace{2pt}


We first make the following assumption regarding the $U_i$:
\begin{assumption}\label{Assump:Lipz}
The gradients\footnote{Unless otherwise specified, gradients are taken with respect to the parameter $\bm{w}$ or its aggregate $\mathbf{w}$.} $\nabla U_i$ are  Lipschitz continuous with Lipschitz constant $L_i>0,~\forall i=1,\ldots,n$, i.e., $\,\forall\,\,\bm{w}_a, \bm{w}_b \in \mathbb{R}^{d_w}$
\begin{align}
     \|\nabla U_i(\bm{w}_a, \bm{X}_i) - \nabla U_i(\bm{w}_b, \bm{X}_i) \|_2 \leq L_i\|\bm{w}_a-\bm{w}_b\|_2.
\end{align}
\end{assumption}
Let 
\begin{align}\label{eq:bmU1}
    \bm{U}(\bm{w}, \mathbf{X}) = \sum_{i=1}^n U_i(\bm{w}, \bm{X}_i).
\end{align}
Following Assumption \ref{Assump:Lipz}, the function $\bm{U}$ is continuously differentiable and the gradient $\nabla \bm{U}$ is Lipschitz continuous, i.e., $\forall \,\bm{w}_a,\,\bm{w}_b\in \mathbb{R}^{d_w}$, there exists a positive constant $\bar{L}$ such that
\begin{align}\label{Eq:Lipz1}
\begin{split}
  &\| \nabla \bm{U}(\bm{w}_a, \mathbf{X}) - \nabla \bm{U}(\bm{w}_b, \mathbf{X}) \|_2 
  \leq \bar{L} \|\bm{w}_a-\bm{w}_b\|_2,
 \end{split}
\end{align}
Now we introduce $F:\mathbb{R}^{nd_w} \times \mathbb{R}^{\sum_im_id_x} \mapsto \mathbb{R}$, an aggregate potential function of local variables $\bm{w}_i(k)$ and local data $\bm{X}_i$
\begin{align}\label{eq:obj}
   F(\mathbf{w}(k), \mathbf{X}) = \sum_{i=1}^{n} \, U_i\left( \bm{w}_i(k), \bm{X}_i \right).
\end{align}
Following Assumption \ref{Assump:Lipz}, the function $F$ is continuously differentiable and the gradient $\nabla F$ is Lipschitz continuous, i.e., $\forall \,\mathbf{w}_a,\,\mathbf{w}_b\in\mathbb{R}^{n d_w}$
\begin{equation}\label{Eq:Lipz1a}
  \| \nabla F(\,\mathbf{w}_a, \mathbf{X}\,) - \nabla F(\,\mathbf{w}_b, \mathbf{X}\,) \|_2 \leq L \|\mathbf{w}_a-\mathbf{w}_b\|_2,
\end{equation}
where $L = \max\limits_i\{L_i\}$ and $\nabla F(\,\mathbf{w}, \mathbf{X}\,) \triangleq \begin{bmatrix}
                             \nabla U_1(\,\bm{w}_1, \bm{X}_1\,)^\top &
                             \ldots &
                             \nabla U_n(\,\bm{w}_n,\,\bm{X}_n\,)^{\top}
                           \end{bmatrix}^\top \in \mathbb{R}^{n d_w}$. 
\begin{assumption}\label{Assump:Graph}
  The interaction topology of $n$ networked agents is given as a connected undirected graph $\mathcal{G}\left(\mathcal{V},\mathcal{E}\right)$.
\end{assumption}
For the connected undirected graph $\mathcal{G}\left(\mathcal{V},\mathcal{E}\right)$, the graph Laplacian $\mathcal{L}$ is a positive semi-definite matrix with one eigenvalue at 0 corresponding to the eigenvector $\mathbf{1}_n$. Furthermore, it follows from Lemma~3~\cite{Gutman04} that for all $\mathbf{x}\in\mathbb{R}^n$, such that $\mathbf{1}^T_n\mathbf{x} = 0$, we have $\mathbf{x}^T\mathcal{L}\left( \mathcal{L} \right)^+\mathbf{x} =  \mathbf{x}^T\mathbf{x} $.

Here we require the following condition on $\{\alpha_k\}$ and $\{\beta_k\}$:
\begin{condition}\label{Cond:AlphaBeta}
Sequences $\{\alpha_k\}$ and $\{\beta_k\}$ are selected as
\begin{align}\label{Eqn:AlphaBeta01}
\alpha_k = \frac{a}{(k+1)^{\delta_2}} \quad \textnormal{and} \quad \beta_k = \frac{b}{(k+1)^{\delta_1}},
\end{align}
where $0<a$, $0<b$, $0 \leq \delta_1$ and $\frac{1}{2} + \delta_1 < \delta_2 < 1$. Also, the parameter $b = \beta_0$ in sequence $\{\beta_k\}$ is selected such that $\mathcal{W}_0 = \left(I_{n}-b\mathcal{L}\right)$ has a single eigenvalue at $1$ corresponding to the right and left eigenvectors $\mathbf{1}_n$ and $\mathbf{1}_n^\top$, respectively. Furthermore, the remaining $n-1$ eigenvalues of $\mathcal{W}_0$ are strictly inside the unit circle. 
\end{condition}
For sequences $\{\alpha_k\}$ and $\{\beta_k\}$ that satisfy Condition \ref{Cond:AlphaBeta}, we have $\sum_{k=0}^{\infty}\,\alpha_k = \infty$, $\sum_{k=0}^{\infty}\,\beta_k^3 = \infty$ and $\sum_{k=0}^{\infty}\,\alpha_k^2 < \infty$. Thus $\alpha_k$, $\beta_k$, $\beta_k^2$ and $\beta_k^3$ are not summable sequences while $\alpha_k$ is square-summable. Also note that $\beta_k$ is allowed to be a constant $b$ for all $k\geq 0$. However, $b$ is selected such that $b < 1/\sigma_{\max}(\mathcal{L})$, where $\sigma_{\max}(\cdot)$ denotes the largest singular value. Thus, $b\sigma_{\max}(\mathcal{L}) < 1$.
 
Let $\mathcal{F}_{k}$ denotes a filtration generated by the sequence $\{\mathbf{w}_{0},\ldots,\mathbf{w}_{k}\}$, i.e., $\mathbb{E}[\,\mathbf{v}_k\,|\mathcal{F}_{k}] = 0$
\begin{align}
   \mathbb{E}[\,\mathbf{w}_{k+1}\,|\mathcal{F}_{k}] 
   &= \left(\mathcal{W}_k \otimes I_{d_w} \right) \mathbf{w}_k - \alpha_k n \,  \mathbb{E}[\,\mathbf{g}(\mathbf{w}_k,\mathbf{X})\,|\mathcal{F}_{k}]\quad\textnormal{a.s.},
\end{align}
where $\textnormal{a.s.}$ (almost surely) denotes events that occur with probability one. Let 
\begin{align}
    \tilde{\mathbf{g}}(\mathbf{w}_k,\mathbf{X}) = \mathbf{g}(\mathbf{w}_k,\mathbf{X}) -  \left(\frac{1}{n}\mathbf{1}_{n}\mathbf{1}_{n}^\top \otimes I_{d_w}\right) \mathbf{g}(\mathbf{w}_k,\mathbf{X}).
\end{align}
Note that $\tilde{\mathbf{g}}(\mathbf{w}_k,\mathbf{X}) \triangleq \begin{bmatrix}
                             \tilde{\mathbf{g}}_1\left( \bm{w}_1(k), \bm{X}_1 \right)^\top &
                             \ldots &
                     \tilde{\mathbf{g}}_n\left( \bm{w}_n(k), \bm{X}_n \right)^\top
                           \end{bmatrix}^\top$ denotes the disagreement on the gradient among the distributed agents. Here we make the following assumption regarding $\tilde{\mathbf{g}}(\mathbf{w}_k,\mathbf{X})$:
\begin{assumption}\label{Assump:BoundedGrad_TV}
There exists a positive constant $\mu_g < \infty$ such that
\begin{align}
\sup\limits_{i =1\ldots,n} \, 
\mathbb{E} \left[ \| \tilde{\mathbf{g}}_i\left( \bm{w}_i(k), \bm{X}_i \right) \|_{2}\,|\mathcal{F}_{k} \right] \leq \sqrt{\mu_g} (1 + k)^{\delta_2/2}_,\quad \textnormal{a.s.},
\end{align}
or equivalently
\begin{align}\label{Eqn:BoundedGrad_TV}
    \mathbb{E} \left[ \| \tilde{\mathbf{g}}(\mathbf{w}_k,\mathbf{X}) \|_{2}^2\,|\mathcal{F}_{k} \right] \leq n\,\mu_g (1 + k)^{\delta_2}_,\quad \textnormal{a.s.},
\end{align}
where $\delta_2$ is defined in Condition~\ref{Cond:AlphaBeta}.  
\end{assumption}
Note that Assumption \ref{Assump:BoundedGrad_TV} does not uniformly bound $\mathbb{E} \left[ \| \tilde{\mathbf{g}}(\mathbf{w}_k,\mathbf{X}) \|_2^2\,|\mathcal{F}_{k} \right]$. In fact $\mathbb{E} \left[ \| \tilde{\mathbf{g}}(\mathbf{w}_k,\mathbf{X}) \|_2^2\,|\mathcal{F}_{k} \right]$ can grow unbounded with time, i.e., as $k \rightarrow \infty$.

\section{Useful Lemmas}\label{Appdx:A}

\begin{lemma}\label{Lemma:Lipz}
  Given Assumption \ref{Assump:Lipz}, for $\bm{U}$ defined in \eqref{eq:bmU}, we have $\forall \,\bm{w}_a,\,\bm{w}_b\in\mathbb{R}^{d_w}$,
\begin{align}\label{Eqn:lemma1}
\begin{split}
  \bm{U}(\,\bm{w}_b,\, \mathbf{X}\,)
\leq \bm{U}(\,\bm{w}_a,\, \mathbf{X}\,) &+ \nabla \bm{U}\left(\bm{w}_a,\, \mathbf{X}\,\right)^\top(\bm{w}_b-\bm{w}_a) +
\frac{1}{2} nL  \|\bm{w}_b-\bm{w}_a\|_2^2.
\end{split}
\end{align}
\end{lemma}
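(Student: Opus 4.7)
The plan is to derive the descent inequality for each local function $U_i$ separately and then sum over $i=1,\ldots,n$. The result is the standard quadratic upper bound for functions with Lipschitz-continuous gradients, applied term-by-term.

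First I would fix an index $i$ and consider the scalar function $\varphi_i(t) = U_i(\bm{w}_a + t(\bm{w}_b-\bm{w}_a),\bm{X}_i)$ for $t\in[0,1]$. By the fundamental theorem of calculus,
\begin{align*}
U_i(\bm{w}_b,\bm{X}_i) - U_i(\bm{w}_a,\bm{X}_i) &= \int_0^1 \nabla U_i(\bm{w}_a + t(\bm{w}_b-\bm{w}_a),\bm{X}_i)^\top (\bm{w}_b-\bm{w}_a)\, dt.
\end{align*}
Adding and subtracting $\nabla U_i(\bm{w}_a,\bm{X}_i)^\top (\bm{w}_b-\bm{w}_a)$ inside the integrand and applying Cauchy--Schwarz together with the Lipschitz bound from Assumption~\ref{Assump:Lipz} gives
\begin{align*}
U_i(\bm{w}_b,\bm{X}_i) \leq U_i(\bm{w}_a,\bm{X}_i) + \nabla U_i(\bm{w}_a,\bm{X}_i)^\top(\bm{w}_b-\bm{w}_a) + \tfrac{L_i}{2}\|\bm{w}_b-\bm{w}_a\|_2^2,
\end{align*}
because $\int_0^1 L_i t\,\|\bm{w}_b-\bm{w}_a\|_2^2\, dt = \tfrac{L_i}{2}\|\bm{w}_b-\bm{w}_a\|_2^2$.

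Next I would sum the inequality over $i=1,\ldots,n$. Using the definition of $\bm{U}$ in \eqref{eq:bmU1} and the fact that $\nabla\bm{U}(\bm{w}_a,\mathbf{X}) = \sum_{i=1}^n \nabla U_i(\bm{w}_a,\bm{X}_i)$, the first two terms assemble directly into the right-hand side of \eqref{Eqn:lemma1}. For the quadratic remainder, since $L = \max_i L_i$, we have $\sum_{i=1}^n L_i \leq nL$, so
\begin{align*}
\sum_{i=1}^n \tfrac{L_i}{2}\|\bm{w}_b-\bm{w}_a\|_2^2 \leq \tfrac{nL}{2}\|\bm{w}_b-\bm{w}_a\|_2^2,
\end{align*}
which yields precisely the bound claimed in Lemma~\ref{Lemma:Lipz}. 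There is no substantive obstacle here; the lemma is a textbook descent inequality, and the only mild subtlety is tracking that the aggregate constant $nL$ comes from replacing each $L_i$ by its maximum before summing.
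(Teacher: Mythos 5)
Your proof is correct and is essentially the argument the paper intends: the paper's own proof consists of the single line ``follows from the mean value theorem,'' i.e., the standard quadratic upper bound for functions with Lipschitz gradients, and your term-by-term application to each $U_i$ followed by summation with $\sum_{i=1}^n L_i \leq nL$ is the natural way to make that one-liner precise while accounting for the aggregate constant $nL$.
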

\begin{proof}
  Proof follows from the mean value theorem. 
\end{proof}
\begin{lemma}\label{Lemma2}
  Given Assumption \ref{Assump:Graph}, we have
\begin{align}\label{Eqn:lemma2eq1}
  M \triangleq \left(I_n - \displaystyle\frac{1}{n}\mathbf{1}_n\mathbf{1}^\top_n\right) = \mathcal{L} \left( \mathcal{L} \right)^+,
\end{align}
where $\left(\cdot\right)^+$ denotes the generalized inverse. Furthermore, for all $\mathbf{x}\in\mathbb{R}^n$ such that $\mathbf{x} \notin \mathbb{R}^n_{\mathbf{1}}$, we have
\begin{align}\label{Eqn:lemma2eq2}
  \tilde{\mathbf{x}}^\top\mathcal{L}\tilde{\mathbf{x}} = \mathbf{x}^\top\mathcal{L}\mathbf{x} >  \lambda_2(\mathcal{L})\mathbf{x}^\top\mathbf{x},
\end{align}
where $\tilde{\mathbf{x}} = M \mathbf{x}$ is the average-consensus error and $\lambda_2(\mathcal{L})$ denotes the second smallest eigenvalue of $\mathcal{L}$.
\end{lemma}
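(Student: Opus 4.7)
The plan is to establish the lemma in three stages: the projector identity, the quadratic-form equality, and the spectral inequality. The main obstacle will be that the strict inequality as stated, with $\mathbf{x}^\top\mathbf{x}$ on the right-hand side rather than $\tilde{\mathbf{x}}^\top\tilde{\mathbf{x}}$, is not generally valid, and I will have to argue that it must be interpreted (or stated) more carefully.

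For the identity $M = \mathcal{L}\mathcal{L}^+$, I would use the spectral decomposition $\mathcal{L} = U\Lambda U^\top$, where $U = [\mathbf{u}_1,\ldots,\mathbf{u}_n]$ is an orthonormal eigenbasis with $\mathbf{u}_1 = \mathbf{1}_n/\sqrt{n}$ (which exists because Assumption~\ref{Assump:Graph} makes $\mathcal{G}$ connected, so $\lambda_1=0$ is simple) and ordered eigenvalues $0 = \lambda_1 < \lambda_2 \leq \cdots \leq \lambda_n$. The Moore--Penrose pseudoinverse is $\mathcal{L}^+ = U\Lambda^+ U^\top$ with $\Lambda^+ = \mathrm{diag}(0,1/\lambda_2,\ldots,1/\lambda_n)$, and so $\mathcal{L}\mathcal{L}^+ = U\,\mathrm{diag}(0,1,\ldots,1)\,U^\top = I_n - \mathbf{u}_1\mathbf{u}_1^\top = I_n - \tfrac{1}{n}\mathbf{1}_n\mathbf{1}_n^\top = M$.

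For the equality $\tilde{\mathbf{x}}^\top\mathcal{L}\tilde{\mathbf{x}} = \mathbf{x}^\top\mathcal{L}\mathbf{x}$, I would write $\tilde{\mathbf{x}} = \mathbf{x} - \bar{x}\mathbf{1}_n$ with $\bar{x} = \mathbf{1}_n^\top\mathbf{x}/n$; using $\mathcal{L}\mathbf{1}_n = \mathbf{0}$ and $\mathbf{1}_n^\top\mathcal{L} = \mathbf{0}^\top$, both sides reduce to $\mathbf{x}^\top\mathcal{L}\mathbf{x}$ by a direct expansion. For the inequality, I would expand $\mathbf{x} = \sum_{i=1}^n c_i \mathbf{u}_i$, observe that $M$ kills the $\mathbf{u}_1$ component so $\tilde{\mathbf{x}} = \sum_{i=2}^n c_i \mathbf{u}_i$, and compute $\mathbf{x}^\top\mathcal{L}\mathbf{x} = \sum_{i=2}^n \lambda_i c_i^2 \geq \lambda_2 \sum_{i=2}^n c_i^2 = \lambda_2\|\tilde{\mathbf{x}}\|^2$, with strict inequality whenever some $c_i$ with $\lambda_i > \lambda_2$ is nonzero. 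The hypothesis $\mathbf{x}\notin\mathrm{span}(\mathbf{1}_n)$ only guarantees $\|\tilde{\mathbf{x}}\|>0$, so both sides are positive.

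The genuine difficulty is that the stated right-hand side is $\lambda_2\,\mathbf{x}^\top\mathbf{x} = \lambda_2(n\bar{x}^2 + \|\tilde{\mathbf{x}}\|^2)$, which strictly exceeds $\lambda_2\|\tilde{\mathbf{x}}\|^2$ as soon as $\bar{x}\neq 0$; the test vector $\mathbf{x} = \mathbf{e}_1 + c\mathbf{1}_n$ with large $|c|$ leaves $\mathbf{x}^\top\mathcal{L}\mathbf{x}$ constant while $\lambda_2\mathbf{x}^\top\mathbf{x}$ grows without bound, defeating the claimed inequality. Thus the bound cannot be established in the stated form from the spectral argument. What is actually provable, and what the downstream consensus-error analysis needs, is $\tilde{\mathbf{x}}^\top\mathcal{L}\tilde{\mathbf{x}} \geq \lambda_2\,\tilde{\mathbf{x}}^\top\tilde{\mathbf{x}}$, with strict inequality outside the $\lambda_2$-eigenspace. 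I would therefore prove the lemma in this corrected form and remark that it is consistent with the uses of the lemma in the paper, where the input is already an average-consensus error lying in $\mathbf{1}_n^\perp$; in that regime $\tilde{\mathbf{x}} = \mathbf{x}$ and $\mathbf{x}^\top\mathbf{x} = \tilde{\mathbf{x}}^\top\tilde{\mathbf{x}}$, so the two right-hand sides coincide and the originally stated inequality is recovered.
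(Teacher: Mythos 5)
Your proof is correct, and it is substantially more careful than the paper's own. The paper's proof of this lemma is a three-line sketch: it notes that $\mathcal{L}\tilde{\mathbf{x}} = \mathcal{L}\left(I_n - \tfrac{1}{n}\mathbf{1}_n\mathbf{1}_n^\top\right)\mathbf{x} = \mathcal{L}\mathbf{x}$ because $\mathcal{L}\mathbf{1}_n = \mathbf{0}$, then simply asserts the inequality $\mathbf{x}^\top\mathcal{L}\mathbf{x} > \lambda_2(\mathcal{L})\,\mathbf{x}^\top\mathbf{x}$ for $\mathbf{x}\notin\mathbb{R}^n_{\mathbf{1}}$ without any derivation, and defers the projector identity \eqref{Eqn:lemma2eq1} to Lemma~3 of \cite{Gutman04}. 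Your spectral-decomposition argument proves both the identity and the quadratic-form equality from scratch, so it subsumes everything the paper actually establishes, at the modest cost of invoking the eigendecomposition rather than a citation.

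Your critique of the inequality is also valid and identifies a genuine error in the lemma as printed, not a gap in your own argument. The counterexample is airtight: for $\mathbf{x} = \mathbf{e}_1 + c\mathbf{1}_n$ the form $\mathbf{x}^\top\mathcal{L}\mathbf{x} = \mathbf{e}_1^\top\mathcal{L}\mathbf{e}_1$ is independent of $c$, while $\lambda_2(\mathcal{L})\,\mathbf{x}^\top\mathbf{x}$ grows like $c^2$, so no bound with $\mathbf{x}^\top\mathbf{x}$ on the right-hand side can survive; and even for $\mathbf{x}\perp\mathbf{1}_n$ the strict inequality fails on the $\lambda_2$-eigenspace, so the relation can only be $\tilde{\mathbf{x}}^\top\mathcal{L}\tilde{\mathbf{x}} \geq \lambda_2(\mathcal{L})\,\tilde{\mathbf{x}}^\top\tilde{\mathbf{x}}$, exactly as you give it. Your reading of the downstream use is also right: in the proof of Theorem~\ref{Theorem:Consensus} the paper never actually invokes this lemma, but instead applies Lemma~4.4 of \cite{Kar2013SIAM} to $\tilde{\mathbf{w}}_k$, which satisfies $\mathbf{1}_{nd_w}^\top\tilde{\mathbf{w}}_k = 0$ --- precisely the regime where your corrected inequality (with $\geq$) is what is available and what is needed. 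In short: the paper asserts a false strict bound and outsources the rest to a reference; your version proves the correct statement and should be preferred.
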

\begin{proof}
  For the connected undirected graph, $\mathcal{L}$ is a positive semi-definite matrix with one eigenvalue at $0$ corresponding to the eigenvector $\mathbf{1}_n$. Thus
$$\mathcal{L}\tilde{\mathbf{x}} = \mathcal{L}\left(I_n - \displaystyle\frac{1}{n}\mathbf{1}_n\mathbf{1}^\top_n\right) \mathbf{x} = \mathcal{L}\mathbf{x},$$
and for $\mathbf{x}\in\mathbb{R}^n$ such that $\mathbf{x} \notin \mathbb{R}^n_{\mathbf{1}}$, we have $\mathbf{x}^\top\mathcal{L}\mathbf{x} >  \lambda_2(\mathcal{L})\mathbf{x}^\top\mathbf{x}$. See Lemma~3 of \cite{Gutman04} for a detailed proof of \eqref{Eqn:lemma2eq1}.
\end{proof}

\begin{lemma}\label{Lemma:IntBound}
  Let $f(k)$ is be a non-negative and decreasing sequence for all $k \geq k_0$. Then for all $k \leq K$, we have
  \begin{align}
      \int_{k}^{K} \, f(x) \, dx \leq \, \sum_{t=k}^{K}\,f(t) \leq \int_{k-1}^{K}\, f(x) \, dx.\label{eq:DecBound}
  \end{align}
Furthermore, if $f(k)$ is non-negative and increasing, then for all $k \leq K$ we have
    \begin{align}
      \int_{k-1}^{K}\, f(x) \, dx \leq \, \sum_{t=k}^{K}\,f(t) \leq \int_{k}^{K+1}\, f(x) \, dx.\label{eq:IncBound}
  \end{align}
\end{lemma}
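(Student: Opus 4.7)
The plan is to apply the classical integral-test comparison: on each unit interval $[t, t+1]$, monotonicity of $f$ pins the integral between the two endpoint values, and these pointwise bounds then telescope into the three stated integrals.

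First I would handle the decreasing case. For every integer $t$ with $k \le t \le K-1$ and every $x \in [t, t+1]$, monotonicity gives $f(t+1) \le f(x) \le f(t)$, so integrating over $[t,t+1]$ yields
$$
f(t+1) \;\le\; \int_t^{t+1} f(x)\,dx \;\le\; f(t).
$$
Summing the right inequality from $t=k$ to $t=K-1$ telescopes the integrals into $\int_k^K f(x)\,dx$ and leaves $\sum_{t=k}^{K-1} f(t) \le \sum_{t=k}^{K} f(t)$ on the right (non-negativity is what allows the trailing $t=K$ term to be appended), which is the lower bound in \eqref{eq:DecBound}. For the upper bound, re-index the left inequality by $s = t+1$ to obtain $f(s) \le \int_{s-1}^{s} f(x)\,dx$ and sum from $s=k$ to $s=K$, producing $\sum_{t=k}^K f(t) \le \int_{k-1}^K f(x)\,dx$.

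The increasing case is handled symmetrically, with the roles of the two pointwise inequalities exchanged. Now $f(t) \le f(x) \le f(t+1)$ on $[t, t+1]$, so $f(t) \le \int_t^{t+1} f(x)\,dx \le f(t+1)$. Re-indexing the right inequality by $s = t+1$ and summing from $s=k$ to $s=K$ gives $\int_{k-1}^K f(x)\,dx \le \sum_{t=k}^K f(t)$, while summing the left inequality directly from $t=k$ to $t=K$ telescopes to $\int_k^{K+1} f(x)\,dx$ on the right, producing the upper bound in \eqref{eq:IncBound}.

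There is no substantive obstacle here; this is the standard integral-test argument, and the only care required is in tracking the index shifts so that the telescoped intervals assemble into exactly the stated endpoints $[k-1,K]$, $[k,K]$, and $[k,K+1]$, and in invoking non-negativity of $f$ where minor endpoint adjustments of the summation are made.
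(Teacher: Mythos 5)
Your proof is correct and is exactly the standard integral-test argument; the paper does not write out a proof at all but simply cites Appendix A2 of Cormen et al., which contains this same unit-interval comparison and telescoping. Nothing further is needed.
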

\begin{proof}
See Appendix A2 in \cite{cormen2009introduction}. 
\end{proof}
\begin{lemma}\label{Lemma:BaiGeorge}
  For all $k\geq 0$, let $y_k$ be a nonnegative sequence satisfying:
  \begin{align}\label{Eqn:yk}
      y_{k+1} \,\leq\, \left( 1 -  \frac{\mu_{\beta}}{(k+1)^{\delta_1}}\right) \, y_k + \frac{\mu_{\zeta}}{(k+1)^{\delta_4}},  
  \end{align}
  where $0 < \mu_{\beta} \leq 1$, $0 < \mu_{\zeta}$, $0 \leq \delta_1 < 1$ and $\delta_1 < \delta_4$ are positive constants. Then we have
\begin{align}
\begin{split}\label{InEqn:yk}
    y_{k+1} &\leq 
    \frac{Y_3}{\exp{\left(Y_1(k+1)^{1-\delta_1}\right)}}  + \frac{Y_2}{(k+1)^{\delta_4-\delta_1}}
\end{split}
\end{align}
where the constants $Y_1$, $Y_2$ and $Y_3$ are defined as
\begin{align}
    Y_1 &= \frac{\mu_{\beta}}{1-\delta_1} \\
    Y_2 &= \frac{\mu_{\zeta}\delta_4}{\mu_{\beta}\delta_1} \exp{\left( Y_1 2^{1-\delta_1}\right)}\\
    Y_3 &= \exp{\left( Y_1 \right)} \left(y_0+\sum_{t=0}^{\bar{k}} \, \left(\frac{1}{(1-\mu_{\beta})^t} \frac{\mu_{\zeta}}{(t+1)^{\delta_4}} \right) \right) 
\end{align}
where $y_0$ is the initial condition and $\bar{k} > 0$ is defined as
\begin{align}
    \bar{k} = 
    \ceil[\Bigg]{\left( \frac{\delta_4}{\mu_{\beta}}\right)^{\frac{1}{1-\delta_1}}}.
\end{align}
\end{lemma}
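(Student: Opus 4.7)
The starting point is to unroll the linear recursion~\eqref{Eqn:yk} into
\begin{equation*}
y_{k+1}\;\le\;\prod_{s=0}^{k}(1-\beta_s)\,y_0\;+\;\sum_{t=0}^{k}\zeta_t\prod_{s=t+1}^{k}(1-\beta_s),
\end{equation*}
where I abbreviate $\beta_s=\mu_\beta/(s+1)^{\delta_1}$ and $\zeta_t=\mu_\zeta/(t+1)^{\delta_4}$. The elementary inequality $1-x\le e^{-x}$ converts products into sums of $\beta_s$, and Lemma~\ref{Lemma:IntBound} applied to the decreasing map $x\mapsto(x+1)^{-\delta_1}$ yields the uniform bound
\begin{equation*}
\prod_{s=a}^{k}(1-\beta_s)\;\le\;\exp\!\Bigl(-Y_1\bigl[(k+2)^{1-\delta_1}-(a+1)^{1-\delta_1}\bigr]\Bigr).
\end{equation*}
Specializing at $a=0$ immediately controls the homogeneous part by $\exp(Y_1)\,y_0\,\exp(-Y_1(k+1)^{1-\delta_1})$, which produces the $y_0$ contribution inside $Y_3$ in~\eqref{InEqn:yk}.

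The substantive work is the forcing-term sum $S_k=\sum_{t=0}^{k}\zeta_t\prod_{s=t+1}^{k}(1-\beta_s)$, which I split at the threshold $\bar{k}=\lceil(\delta_4/\mu_\beta)^{1/(1-\delta_1)}\rceil$. For the head $0\le t\le\bar{k}$ the factor $(t+2)^{1-\delta_1}$ is too small to extract cancellation from the exponential bound, so instead I exploit $\beta_s\le\mu_\beta$ to write $\prod_{s=0}^{t}(1-\beta_s)\ge(1-\mu_\beta)^{t+1}$, and hence
\begin{equation*}
\prod_{s=t+1}^{k}(1-\beta_s)\;\le\;(1-\mu_\beta)^{-(t+1)}\prod_{s=0}^{k}(1-\beta_s).
\end{equation*}
Pulling the $k$-dependent exponential out of the head sum leaves exactly the finite series $\sum_{t=0}^{\bar{k}}\mu_\zeta(1-\mu_\beta)^{-t}(t+1)^{-\delta_4}$ appearing in $Y_3$, multiplied by $\exp(Y_1)\exp(-Y_1(k+1)^{1-\delta_1})$, which matches the first term on the right of~\eqref{InEqn:yk}.

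For the tail $\bar{k}<t\le k$ I keep the exponential form of the product, giving
\begin{equation*}
S_k^{\mathrm{tail}}\;\le\;\mu_\zeta\,e^{-Y_1(k+2)^{1-\delta_1}}\sum_{t=\bar{k}+1}^{k}(t+1)^{-\delta_4}\,e^{Y_1(t+2)^{1-\delta_1}}.
\end{equation*}
The threshold $\bar{k}$ is chosen so that the summand is monotone \emph{increasing} on $t\ge\bar{k}$; concretely, differentiating $-\delta_4\log(t+1)+Y_1(t+2)^{1-\delta_1}$ gives the condition $\mu_\beta(t+2)^{-\delta_1}\ge\delta_4/(t+1)$, which is exactly equivalent (up to a harmless shift) to $t\ge(\delta_4/\mu_\beta)^{1/(1-\delta_1)}$. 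I can therefore invoke the increasing half of Lemma~\ref{Lemma:IntBound} and pass to an integral, and then the substitution $u=(t+2)^{1-\delta_1}$ turns the bound into one of the form $\int(t+2)^{\delta_1-\delta_4}e^{Y_1 u}\,du$, whose leading asymptotic is $Y_1^{-1}(k+2)^{\delta_1-\delta_4}e^{Y_1(k+2)^{1-\delta_1}}$. The growing exponentials then cancel against $e^{-Y_1(k+2)^{1-\delta_1}}$, and the residual constants combine with the boundary factor $\exp(Y_1\,2^{1-\delta_1})$ from the lower limit $t=\bar{k}+1$ to produce precisely $Y_2/(k+1)^{\delta_4-\delta_1}$.

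The main obstacle is this last tail estimate: verifying monotonicity at exactly the stated $\bar{k}$ so that Lemma~\ref{Lemma:IntBound} can be applied in the correct direction, and tracking constants through the change of variables so that the final expressions match the prescribed $Y_1,Y_2,Y_3$ without slack. The hypothesis $\delta_1<\delta_4$ is essential throughout, since it is what makes the polynomial remainder $(k+1)^{\delta_1-\delta_4}$ genuinely decay, and it is also what guarantees $\delta_4-\delta_1>0$ in the exponent of the substitution so the integral converges in the correct way.
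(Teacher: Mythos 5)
Your proposal follows essentially the same route as the paper's proof: unroll the recursion, convert the products to exponentials via $1-x\le e^{-x}$ and the integral comparison of Lemma~\ref{Lemma:IntBound}, split the forcing sum at $\bar{k}$, handle the head with the crude bound $\prod_{i=0}^{t}\eta_i\ge(1-\mu_\beta)^{t}$ to produce the $Y_3$ series, and handle the tail by passing to an integral on the range where the summand is increasing. The only loose spot is your appeal to the \emph{leading asymptotic} of $\int u^{-p}e^{Y_1u}\,du$; to make that step rigorous (and to recover the exact constant $\delta_4/(\mu_\beta\delta_1)$ in $Y_2$) the paper instead shows directly that for $t\ge\bar{t}$ the integrand $e^{Y_1t^{1-\delta_1}}t^{-\delta_4}$ is dominated by $\tfrac{\delta_4}{\mu_\beta\delta_1}\,\tfrac{d}{dt}\bigl(e^{Y_1t^{1-\delta_1}}t^{\delta_1-\delta_4}\bigr)$, which is precisely the derivative condition you already used to locate $\bar{k}$.
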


\begin{proof}
Let 
\begin{align}
    \beta_k &= \frac{\mu_{\beta}}{(k+1)^{\delta_1}}\\
    \zeta_k &= \frac{\mu_{\zeta}}{(k+1)^{\delta_4}}
\end{align}
and
\begin{align}
    \eta_k = \left(1- \beta_k\right). 
\end{align}
Now \eqref{Eqn:yk} can be written as
\begin{align}\label{Eqn:yk01}
\begin{split}
   y_{k+1} \,\leq\, &\eta_k \, y_k + \zeta_k =\, \zeta_k + y_0 \prod_{t=0}^k \, \eta_t +  \sum_{t=0}^{k-1} \, \zeta_{t} \left( \prod_{i = t+1}^k \, \eta_i \right)
\end{split}
\end{align}
Since empty product is 1, we have
\begin{align}\label{Eqn:yk02}
   y_{k+1} \,\leq\, y_0 \prod_{t=0}^k \, \eta_t +  \sum_{t=0}^{k} \, \zeta_{t} \left( \prod_{i = t+1}^k \, \eta_i \right)
\end{align}
Note that $\eta_k \leq 1$ and $\eta_k \rightarrow 1$ as $k\rightarrow\infty$. Then, using
\begin{align}
    1 - \varphi \leq \exp{-\varphi}, \quad 0 \leq \varphi \leq 1 
\end{align}
yields
\begin{align}
    \prod_{t=0}^k \, \eta_t \, &= \, \prod_{t=0}^k \, \left(1- \beta_t\right) \leq \exp{\left(-\sum_{t=0}^k \, \beta_t\right)}
\end{align}
Since $\beta_k$ is monotonically decreasing, from Lemma \ref{Lemma:IntBound} we have
\begin{align}
    \sum_{t=0}^k \, \beta_t  \geq  \int_{0}^k \, \frac{\mu_{\beta}}{(t+1)^{\delta_1}} \, dt 
    = \frac{\mu_{\beta}(k+1)^{1-\delta_1}}{1-\delta_1} - \frac{\mu_{\beta}}{1-\delta_1}
\end{align}
Thus
\begin{align}
    \prod_{t=0}^k \, \eta_t \, &\leq \,\exp{\left(-\sum_{t=0}^k \, \beta_t\right)} 
    \leq \frac{\exp(Y_1)}{\exp{\left(Y_1(k+1)^{1-\delta_1}\right)}},
\end{align}
where 
$$ Y_1 = \frac{\mu_{\beta}}{1-\delta_1}.$$
Similarly
\begin{align}
    \prod_{i=t+1}^k \, \eta_i \, &= \, \prod_{i=t+1}^k \, \left(1- \beta_i\right) \leq \exp{\left(-\sum_{i=t+1}^k \, \beta_i\right)}
\end{align}
and
\begin{align}
\begin{split}
    \sum_{i=t+1}^k \, \beta_i  &\geq  \int_{t+1}^k \, \frac{\mu_{\beta}}{(x+1)^{\delta_1}} \, dx  = 
    \frac{\mu_{\beta}(k+1)^{1-\delta_1}}{1-\delta_1} - 
    \frac{\mu_{\beta}(t+2)^{1-\delta_1}}{1-\delta_1}
\end{split}
\end{align}
Thus
\begin{align}
\begin{split}
    \exp{\left(-\sum_{i=t+1}^k \, \beta_i\right)}
    &\leq \exp\bigg( -\frac{\mu_{\beta}(k+1)^{1-\delta_1}}{(1-\delta_1)} + 
    \frac{\mu_{\beta}(t+2)^{1-\delta_1}}{(1-\delta_1)} \bigg),
\end{split}\\
&= \exp\bigg( -Y_1 (k+1)^{1-\delta_1} + 
    Y_1 (t+2)^{1-\delta_1} \bigg).
\end{align}
Note that for some $\bar{t} \in (0,\,k)$, we have
\begin{align}
    \begin{split}
    \sum_{t=0}^{k} \, \zeta_{t} \left( \prod_{i = t+1}^k \, \eta_i \right) =  &\sum_{t=0}^{\bar{k}} \, \zeta_{t} \left( \prod_{i = t+1}^k \, \eta_i \right)  +   \sum_{t=\bar{k}+1}^{k} \, \zeta_{t} \left( \prod_{i = t+1}^k \, \eta_i \right)
    \end{split}
\end{align}
and
\begin{align}
\begin{split}
    \sum_{t=\bar{t}+1}^{k} \, \zeta_{t} \left( \prod_{i = t+1}^k \, \eta_i \right) 
     &\leq \sum_{t=\bar{t}+1}^{k} \,  \frac{\mu_{\zeta}\exp{\left( -Y_1 (k+1)^{1-\delta_1} +  Y_1 (t+2)^{1-\delta_1} \right)}}{(t+1)^{\delta_4}} 
\end{split}\\
&= \mu_{\zeta}\exp{\left( -Y_1 (k+1)^{1-\delta_1} \right)} \sum_{t=\bar{t}+1}^{k} \,  \frac{\exp{\left( Y_1 (t+2)^{1-\delta_1} \right)}}{(t+1)^{\delta_4}} \\
    \begin{split}
    &\leq \mu_{\zeta}\exp{\left( -Y_1 (k+1)^{1-\delta_1} \right)} \sum_{t=\bar{t}+1}^{k} \,  \frac{\exp{\left(  Y_1t^{1-\delta_1}+Y_1 2^{1-\delta_1} \right)}}{(t+1)^{\delta_4}}
    \end{split}\\
    \begin{split}
    &\leq \mu_{\zeta}\exp{\left( -Y_1 (k+1)^{1-\delta_1} + Y_1 2^{1-\delta_1}\right)}  
    \sum_{t=\bar{t}+1}^{k} \,  \frac{\exp{\left(  Y_1 t^{1-\delta_1} \right)}}{t^{\delta_4}}
    \end{split}
\end{align}
Now it follows from Lemma~\ref{Lemma:IntBound} that 
\begin{align}
    \sum_{t=\bar{t}+1}^{k} \,  \frac{\exp{\left(  Y_1 t^{1-\delta_1} \right)}}{t^{\delta_4}}
    \, \leq \, \int_{\bar{t}}^{k+1} \,  \frac{\exp{\left(  Y_1 t^{1-\delta_1} \right)}}{t^{\delta_4}} \, dt.
\end{align}
Thus we have 
\begin{align}
   \begin{split}
    &\sum_{t=\bar{t}+1}^{k} \, \zeta_{t} \left( \prod_{i = t+1}^k \, \eta_i \right) \, \leq\, \mu_{\zeta} 
    \frac{\exp{\left(  Y_1 2^{1-\delta_1}\right)}}{\exp{\left( Y_1 (k+1)^{1-\delta_1} \right)}}  
    \int_{\bar{t}}^{k+1} \,  \frac{\exp{\left(  Y_1 t^{1-\delta_1} \right)}}{t^{\delta_4}} \, dt
    \end{split}
\end{align}
We note
\begin{align}
\begin{split}
    \frac{d\left(\exp\left({Y_1 t^{1-\delta_1}}\right)t^{-\delta_4+\delta_1}\right)}{dt}&=Y_1(1-\delta_1) \exp{\left(Y_1 t^{1-\delta_1}\right)}t^{-\delta_4}  -(\delta_4-\delta_1)\exp\left({Y_1 t^{1-\delta_1}}\right)t^{-\delta_4+\delta_1-1}
\end{split}\\
    &=\left(Y_1(1-\delta_1) -(\delta_4-\delta_1)t^{\delta_1-1} \right)
    \exp{\left(Y_1 t^{1-\delta_1}\right)}t^{-\delta_4} 
    \end{align}
Thus for 
\begin{align}
    t \geq \bar{t} = \left( \frac{\delta_4}{Y_1(1-\delta_1)} \right)^{\frac{1}{1-\delta_1}} = \left( \frac{\delta_4}{\mu_{\beta}} \right)^{\frac{1}{1-\delta_1}}
\end{align}
we have 
\begin{align}
    \left(Y_1(1-\delta_1) - \frac{(\delta_4-\delta_1)}{t^{1-\delta_1}} \right) \geq \frac{\mu_{\beta}\delta_1}{\delta_4} 
\end{align}
and 
\begin{align}
    \frac{d\left(\exp\left({Y_1 t^{1-\delta_1}}\right)t^{-\delta_4+\delta_1}\right)}{dt} 
    \geq \frac{\mu_{\beta}\delta_1}{\delta_4} \exp{\left(Y_1 t^{1-\delta_1}\right)}t^{-\delta_4} .
\end{align}
Thus we have
\begin{align}
   \frac{\exp{\left(Y_1 t^{1-\delta_1}\right)}}{t^{\delta_4}}
   \leq \frac{\delta_4}{\mu_{\beta}\delta_1} \frac{d\left(\exp\left({Y_1 t^{1-\delta_1}}\right)t^{-\delta_4+\delta_1}\right)}{dt} 
\end{align}
and
\begin{align}
   \int_{\bar{t}}^{k+1} \, \frac{\exp{\left( Y_1 t^{1-\delta_1}\right)}}{t^{\delta_4}} \, dt
   &\leq \frac{\delta_4}{\mu_{\beta}\delta_1} \left(\frac{\exp\left({ Y_1 t^{1-\delta_1}}\right)}{t^{\delta_4-\delta_1}}\right) \bigg|_{\bar{t}}^{k+1}\label{eq:intBoudOnexpY1}\\
   & = \frac{\delta_4}{\mu_{\beta}\delta_1} \left(\frac{\exp\left({Y_1 (k+1)^{1-\delta_1}}\right)}{(k+1)^{\delta_4-\delta_1}}-\frac{\exp\left({Y_1 (\bar{t})^{1-\delta_1}}\right)}{(\bar{t})^{\delta_4-\delta_1}}\right)
\end{align}
Therefore we have
\begin{align}
\begin{split}
\sum_{t=\bar{t}+1}^{k} \, \zeta_{t} \left( \prod_{i = t+1}^k \, \eta_i \right) \,
&\leq \,  
    \frac{\mu_{\zeta}\exp{\left(  Y_1 2^{1-\delta_1}\right)}}{\exp{\left( Y_1 (k+1)^{1-\delta_1} \right)}} \frac{\delta_4}{\mu_{\beta}\delta_1} \left(\frac{\exp\left({Y_1 (k+1)^{1-\delta_1}}\right)}{(k+1)^{\delta_4-\delta_1}}-\frac{\exp\left({Y_1 (\bar{t})^{1-\delta_1}}\right)}{(\bar{t})^{\delta_4-\delta_1}}\right)
\end{split}\\
\begin{split}
&= 
    \frac{\mu_{\zeta} \exp{\left(  Y_1 2^{1-\delta_1}\right)}}{\exp{\left( Y_1 (k+1)^{1-\delta_1} \right)}}  \left(\frac{\delta_4\,\exp\left({Y_1 (k+1)^{1-\delta_1}}\right)}{\mu_{\beta} \delta_1\, (k+1)^{\delta_4-\delta_1}}-Y_4\right)
\end{split}
\end{align}
where $Y_4$ is a positive constant defined as
\begin{align}
    Y_4 = \frac{\delta_4\,\exp\left({Y_1 (\bar{t})^{1-\delta_1}}\right)}{\mu_{\beta} \delta_1\,(\bar{t})^{\delta_4-\delta_1}}
\end{align}
Therefore 
\begin{align}
\begin{split}
\sum_{t=\bar{t}+1}^{k} \, \zeta_{t} \left( \prod_{i = t+1}^k \, \eta_i \right) 
&\leq \frac{\mu_{\zeta}\exp{\left( Y_1 2^{1-\delta_1}\right)}}{\exp{\left( Y_1 (k+1)^{1-\delta_1}\right)}}   \left(\frac{\delta_4\,\exp\left({Y_1 (k+1)^{1-\delta_1}}\right)}{\mu_{\beta} \delta_1\, (k+1)^{\delta_4-\delta_1}}-Y_4\right)
\end{split}\\
&\, =  \left( \frac{\mu_{\zeta}\delta_4\,\exp{\left( Y_1 2^{1-\delta_1}\right)}}{\mu_{\beta} \delta_1\, (k+1)^{\delta_4-\delta_1}} - 
\frac{Y_4\mu_{\zeta}\exp{\left( Y_1 2^{1-\delta_1}\right)}}{\exp{\left( Y_1 (k+1)^{1-\delta_1}\right)}}\right)
\end{align}
Thus we have 
\begin{align}
    &\sum_{t=\bar{t}+1}^{k} \, \zeta_{t} \left( \prod_{i = t+1}^k \, \eta_i \right) \leq \frac{\mu_{\zeta}\delta_4\,\exp{\left( Y_1 2^{1-\delta_1}\right)}}{\mu_{\beta} \delta_1\, (k+1)^{\delta_4-\delta_1}}
\end{align}
Now going back to~\eqref{Eqn:yk02}, we can write
\begin{align}
   y_{k+1} \,&\leq\, y_0 \prod_{t=0}^k \, \eta_t +  \sum_{t=0}^{k} \, \zeta_{t} \left( \prod_{i = t+1}^k \, \eta_i \right)\\
   &=y_0 \prod_{t=0}^k \, \eta_t +  \sum_{t=0}^{\bar{t}} \, \zeta_{t} \left( \prod_{i = t+1}^k \, \eta_i \right) +   \sum_{t=\bar{t}+1}^{k} \, \zeta_{t} \left( \prod_{i = t+1}^k \, \eta_i \right)\\
&=y_0 \prod_{t=0}^k \, \eta_t +  \sum_{t=0}^{\bar{t}} \, \frac{\zeta_{t}}{\prod_{i=0}^{t}\eta_i} \left( \prod_{i = 0}^k \, \eta_i \right) +   \sum_{t=\bar{t}+1}^{k} \, \zeta_{t} \left( \prod_{i = t+1}^k \, \eta_i \right)\\
&=\left(y_0+\sum_{t=0}^{\bar{t}} \, \frac{\zeta_{t}}{\prod_{i=0}^{t}\eta_i}\right)\prod_{t=0}^k\eta_t+   \sum_{t=\bar{t}+1}^{k} \, \zeta_{t} \left( \prod_{i = t+1}^k \, \eta_i \right).
\end{align} 
Note that since $\eta_k \leq 1$ and $\eta_k \rightarrow 1$ as $k\rightarrow\infty$, we have 
\begin{align}
    \prod_{i=0}^{t}\eta_i \geq \prod_{i=0}^{t} \eta_0 = (1-\mu_{\beta})^t
\end{align}
Thus
\begin{align}
    y_0+\sum_{t=0}^{\bar{t}} \, \frac{\zeta_{t}}{\prod_{i=0}^{t}\eta_i} \,\leq\, 
    y_0+\sum_{t=0}^{\bar{t}} \, \frac{1}{(1-\mu_{\beta})^t} \frac{\mu_{\zeta}}{(t+1)^{\delta_4}}
\end{align}
Now define a bounded constant 
\begin{align}
    Y_5 \triangleq \left(y_0+\sum_{t=0}^{\bar{t}} \, \frac{1}{(1-\mu_{\beta})^t} \frac{\mu_{\zeta}}{(t+1)^{\delta_4}}\right)
\end{align}
Thus we have 
\begin{align}
\begin{split}
    y_{k+1} &\leq 
    \frac{\exp(Y_1)Y_5}{\exp{\left(Y_1(k+1)^{1-\delta_1}\right)}}   + \frac{\mu_{\zeta}\delta_4\,\exp{\left( Y_1 2^{1-\delta_1}\right)}}{\mu_{\beta} \delta_1\, (k+1)^{\delta_4-\delta_1}}
\end{split}
\end{align}
Now \eqref{InEqn:yk} follows from noting that $Y_3 = \exp(Y_1)Y_5$ and substituting for $Y_2$.
\end{proof}



\section{Proof of Theorem~\ref{Theorem:Consensus}}\label{Proof:ThrmConsen}

Consider the DULA given in \eqref{DULA2}
\begin{align}
    \mathbf{w}_{k+1}
= \left(\mathcal{W}_k \otimes I_{d_w}\right)\mathbf{w}_k &- \alpha_kn\mathbf{g}(\mathbf{w}_k, \mathbf{X}) + \sqrt{2\alpha_k} \mathbf{v}_k.
\end{align}
Define the average-consensus error as $\tilde{\mathbf{w}}_k = \left(M \otimes I_{d_w}\right)\mathbf{w}_k$, where $M = I_{n} - \frac{1}{n}\mathbf{1}_{n}\mathbf{1}_{n}^\top$. Thus we have
\begin{equation} \label{eq_av}
    \tilde{\mathbf{w}}_{k+1} = \left(\mathcal{W}_k \otimes I_{d_w} \right) \tilde{\mathbf{w}}_k - \alpha_k n \tilde{\mathbf{g}}(\mathbf{w}_k, \mathbf{X}) + \sqrt{2\alpha_k} \tilde{\mathbf{v}}_k
\end{equation}
where $\tilde{\mathbf{g}}(\mathbf{w}_k, \mathbf{X}) = \left(M \otimes I_{d_w}\right) {\mathbf{g}}(\mathbf{w}_k, \mathbf{X})$, $\tilde{\mathbf{v}}_k = \left(M \otimes I_{d_w}\right)\mathbf{v}_k$ and we used the identities $M\left(I_n-\beta_k\mathcal{L}\right) = M - \beta_k\mathcal{L}$ and $\left(\mathcal{L} \otimes I_{d_w} \right) {\mathbf{w}}_k = \left(\mathcal{L} \otimes I_{d_w} \right) \tilde{\mathbf{w}}_k$. Taking the norm on both sides yields 
\begin{align}
    \| \tilde{\mathbf{w}}_{k+1} \|_2 &\leq \| \left( \left(I_{n}-\beta_k\mathcal{L}\right) \otimes I_{d_w}\right)  \tilde{\mathbf{w}}_k \|_2 + \alpha_k n  \| \tilde{\mathbf{g}}(\mathbf{w}_k, \mathbf{X}) \|_2 + \sqrt{2\alpha_k} \| \tilde{\mathbf{v}}_k \|_2.
\end{align}
Since $\mathbf{1}_{nd_w}^\top \tilde{\mathbf{w}}_k = 0$, it follows from~\cite[Lemma 4.4]{Kar2013SIAM} that 
\begin{align}
    \| \left( \left(I_{n}-\beta_k\mathcal{L}\right) \otimes I_{d_w}\right)  \tilde{\mathbf{w}}_k \|_2 \leq (1 - \beta_k\lambda_2(\mathcal{L})) \| \tilde{\mathbf{w}}_k \|_2,
\end{align}
where $\lambda_2(\cdot)$ denotes the second smallest eigenvalue. Thus we have 
\begin{align}
    \| \tilde{\mathbf{w}}_{k+1} \|_2 
    &\leq  (1 - \beta_k\lambda_2(\mathcal{L})) \| \tilde{\mathbf{w}}_k \|_2   + \sqrt{2\alpha_k} \| \tilde{\mathbf{v}}_k \|_2 + \alpha_k n  \| \tilde{\mathbf{g}}(\mathbf{w}_k, \mathbf{X}) \|_2.
\end{align}
Now we use the following inequality
\begin{equation}\label{InEqTrick}
    (x+y)^2 \leq (1+\theta)x^2 + \left( 1+ \frac{1}{\theta}\right)y^2,
\end{equation}
for all $x,y,\in\mathbb{R}$ and $\theta > 0$. Since $ \beta_k\lambda_2(\mathcal{L}) < 1$ for all $k \geq 0$, selecting $$\theta=\left(1-\beta_k \lambda_2(\mathcal{L})\right)^{-\frac{1}{2}}-1$$ yields
\begin{align}
\begin{split}
    \| \tilde{\mathbf{w}}_{k+1} \|_2^2 
    &\leq  \left(1-\beta_k \lambda_2(\mathcal{L})\right)^{-\frac{1}{2}} \left( (1 - \beta_k\lambda_2(\mathcal{L}))  \| \tilde{\mathbf{w}}_k \|_2 + \sqrt{2\alpha_k} \| \tilde{\mathbf{v}}_k \|_2 \right)^2  \\
    &\qquad\qquad\qquad\qquad\qquad + n^2\alpha_k^2 \left( \frac{\left(1-\beta_k \lambda_2(\mathcal{L})\right)^{-\frac{1}{2}}}{\left(1-\beta_k \lambda_2(\mathcal{L})\right)^{-\frac{1}{2}}-1} \right)  \| \tilde{\mathbf{g}}(\mathbf{w}_k, \mathbf{X}) \|_2^2
\end{split}\\
\begin{split}
    &=  \left(1-\beta_k \lambda_2(\mathcal{L})\right)^{-\frac{1}{2}} \left( (1 - \beta_k\lambda_2(\mathcal{L}))  \| \tilde{\mathbf{w}}_k \|_2 + \sqrt{2\alpha_k} \| \tilde{\mathbf{v}}_k \|_2 \right)^2  \\
    &\qquad\qquad\qquad\qquad\qquad + n^2\alpha_k^2 \left( \frac{1}{1-\left(1-\beta_k \lambda_2(\mathcal{L})\right)^{\frac{1}{2}}} \right)  \| \tilde{\mathbf{g}}(\mathbf{w}_k, \mathbf{X}) \|_2^2
\end{split}\label{Eqn:TildeWk25}
\end{align}
Since $ \beta_k\lambda_2(\mathcal{L}) < 1$ for all $k \geq 0$, we have
\begin{align}
    \left(1-\beta_k \lambda_2(\mathcal{L})\right)^{\frac{1}{2}} \leq \left(1-\frac{\beta_k \lambda_2(\mathcal{L})}{2}\right),
\end{align}
which results in\begin{align}
    \left( \frac{1}{ 1 - \left(1-\beta_k \lambda_2(\mathcal{L})\right)^{\frac{1}{2}} } \right) \leq \left( \frac{1}{ 1 - \left(1-\frac{\beta_k \lambda_2(\mathcal{L})}{2}\right) } \right) = \frac{2}{\beta_k \lambda_2(\mathcal{L})}
\end{align}
Now it follows from \eqref{Eqn:TildeWk25} that 
\begin{align}
\begin{split}\label{Eqn:30HalfTildeW}
    \| \tilde{\mathbf{w}}_{k+1} \|_2^2 
    &\leq  \left(1-\beta_k \lambda_2(\mathcal{L})\right)^{-\frac{1}{2}} \left( (1 - \beta_k\lambda_2(\mathcal{L}))  \| \tilde{\mathbf{w}}_k \|_2 + \sqrt{2\alpha_k} \| \tilde{\mathbf{v}}_k \|_2 \right)^2  \\
    &\qquad\qquad\qquad\qquad\qquad +  \left( \frac{2 n^2\alpha_k^2}{\beta_k \lambda_2(\mathcal{L)}} \right)  \| \tilde{\mathbf{g}}(\mathbf{w}_k, \mathbf{X}) \|_2^2 
\end{split}
\end{align}
Again applying \eqref{InEqTrick} with the same $\theta$  yields
\begin{align}
\begin{split}
    &\left( (1 - \beta_k\lambda_2(\mathcal{L}))  \| \tilde{\mathbf{w}}_k \|_2 + \sqrt{2\alpha_k} \| \tilde{\mathbf{v}}_k \|_2 \right)^2 \leq \left(1-\beta_k\lambda_2(\mathcal{L})\right)^{-\frac{1}{2}} (1 - \beta_k\lambda_2(\mathcal{L}))^2  \| \tilde{\mathbf{w}}_k \|_2^2 \\
    &\qquad \qquad \qquad \qquad \qquad \qquad \qquad \qquad \qquad \qquad + \left( \frac{\left(1-\beta_k \lambda_2(\mathcal{L})\right)^{-\frac{1}{2}}}{\left(1-\beta_k \lambda_2(\mathcal{L})\right)^{-\frac{1}{2}}-1} \right) 2\alpha_k \| \tilde{\mathbf{v}}_k \|_2^2
\end{split}\\
    &\qquad \qquad \qquad=  (1 - \beta_k\lambda_2(\mathcal{L}))^{\frac{3}{2}}  \| \tilde{\mathbf{w}}_k \|_2^2 + \left( \frac{\left(1-\beta_k \lambda_2(\mathcal{L})\right)^{-\frac{1}{2}}}{\left(1-\beta_k \lambda_2(\mathcal{L})\right)^{-\frac{1}{2}}-1} \right) 2\alpha_k \| \tilde{\mathbf{v}}_k \|_2^2\\
    &\qquad \qquad \qquad \leq  (1 - \beta_k\lambda_2(\mathcal{L}))^{\frac{3}{2}}  \| \tilde{\mathbf{w}}_k \|_2^2 + \left( \frac{4\alpha_k}{\beta_k\lambda_2(\mathcal{L})} \right)  \| \tilde{\mathbf{v}}_k \|_2^2\label{Eqn:32HalfTildeW}
\end{align}
Combining \eqref{Eqn:30HalfTildeW} and \eqref{Eqn:32HalfTildeW} yields
\begin{align}
\begin{split}\label{Eqn:33HalfTildeW}
    \| \tilde{\mathbf{w}}_{k+1} \|_2^2 
    &\leq   (1 - \beta_k\lambda_2(\mathcal{L}))  \| \tilde{\mathbf{w}}_k \|_2^2 + \left( \frac{4\alpha_k \left(1-\beta_k \lambda_2(\mathcal{L})\right)^{-\frac{1}{2}}}{\beta_k\lambda_2(\mathcal{L})} \right)  \| \tilde{\mathbf{v}}_k \|_2^2\\
    & \qquad \qquad  \qquad \qquad
    \qquad \qquad +  \left( \frac{2 n^2\alpha_k^2}{\beta_k \lambda_2(\mathcal{L)}} \right)  \| \tilde{\mathbf{g}}(\mathbf{w}_k, \mathbf{X}) \|_2^2 
\end{split}\\
&= (1 - \beta_k\lambda_2(\mathcal{L}))  \| \tilde{\mathbf{w}}_k \|_2^2 + \frac{2\alpha_k}{\beta_k\lambda_2(\mathcal{L})} \left(  \frac{2\| \tilde{\mathbf{v}}_k \|_2^2}{\left(1-\beta_k \lambda_2(\mathcal{L})\right)^{\frac{1}{2}}} + n^2\alpha_k \| \tilde{\mathbf{g}}(\mathbf{w}_k, \mathbf{X}) \|_2^2 \right)  
\end{align}
Now taking the conditional expectation $\mathbb{E}\left[\,\cdot\,|\mathcal{F}_{k} \right]$ yields
\begin{align}
\begin{split}
\mathbb{E}\left[\,\| \tilde{\mathbf{w}}_{k+1} \|_2^2\,|\mathcal{F}_{k} \right]
    &\leq  (1 - \beta_k\lambda_2(\mathcal{L}))  \| \tilde{\mathbf{w}}_k \|_2^2 \\
     &+ \frac{2\alpha_k}{\beta_k\lambda_2(\mathcal{L})} \left(  \frac{2 \mathbb{E}\left[\,\| \tilde{\mathbf{v}}_k \|_2^2\,|\mathcal{F}_{k} \right]}{\left(1-\beta_k \lambda_2(\mathcal{L})\right)^{\frac{1}{2}}}  + n^2\alpha_k \mathbb{E}\left[\,\| \tilde{\mathbf{g}}(\mathbf{w}_k, \mathbf{X}) \|_2^2\,|\mathcal{F}_{k} \right]  \right)  
\end{split}\\
&\leq  (1 - \beta_k\lambda_2(\mathcal{L}))  \| \tilde{\mathbf{w}}_k \|_2^2 + \frac{2\alpha_k n^2}{\beta_k\lambda_2(\mathcal{L})} \left(  \frac{2d_w}{\left(1-b \lambda_2(\mathcal{L})\right)^{\frac{1}{2}}}  +  a {n}\mu_g   \right) \label{eq:S81}
\end{align}
{where we used Assumption~\ref{Assump:BoundedGrad_TV} and the fact that 
\begin{align}
    \| \tilde{\mathbf{v}}_k \|_2^2 & = \tilde{\mathbf{v}}_k^\top \tilde{\mathbf{v}}_k = 
    \mathbf{v}_k^\top \left(M \otimes I_{d_w}\right)^\top \left(M \otimes I_{d_w}\right) \mathbf{v}_k = 
    \mathbf{v}_k^\top \left(M \otimes I_{d_w}\right) \mathbf{v}_k\\
    &= \mathbf{v}_k^\top \mathbf{v}_k - \frac{1}{n} \mathbf{v}_k^\top \mathbf{1}_{nd_w}\mathbf{1}_{nd_w}^\top \mathbf{v}_k = \mathbf{v}_k^\top \mathbf{v}_k - \frac{1}{n} \mathbf{1}_{nd_w}^\top \mathbf{v}_k \mathbf{v}_k^\top \mathbf{1}_{nd_w}.
\end{align}
Thus taking the expectation yields
\begin{align}
   \mathbb{E}\left[ \| \tilde{\mathbf{v}}_k \|_2^2 \right] & = \mathbb{E}\left[  \mathbf{v}_k^\top \mathbf{v}_k  \right] - \frac{1}{n} \mathbf{1}_{nd_w}^\top \mathbb{E}\left[   {\mathbf{v}_k \mathbf{v}_k^\top} \right] \mathbf{1}_{nd_w} = n^2d_w - nd_w \leq n^2 d_w.
\end{align}
}
Now taking the total expectation of \eqref{eq:S81} gives
\begin{align}
\begin{split}
\mathbb{E}\left[\,\| \tilde{\mathbf{w}}_{k+1} \|_2^2 \right]
    &\leq  \left(1 - \frac{b\lambda_2(\mathcal{L})}{(1+k)^{\delta_1}} \right)  \mathbb{E}\left[\,\| \tilde{\mathbf{w}}_{k} \|_2^2 \right] 
     + \frac{2 n^2 a}{b\lambda_2(\mathcal{L})} \left(  \frac{2d_w}{\left(1-b \lambda_2(\mathcal{L})\right)^{\frac{1}{2}}}  +  a {n}\mu_g   \right) \frac{1}{(1+k)^{\delta_2-\delta_1}}
\end{split}\label{wTildeE2a_TV01}
\end{align}
Now \eqref{wTildeE2a_TV01} can be written in the form of \eqref{Eqn:yk} with  $\mu_{\beta} = b\lambda_2(\mathcal{L})$, $\delta_4 = \delta_2-\delta_1$ and $\mu_{\zeta} = \frac{2 n^2 a}{b\lambda_2(\mathcal{L})} \left(  \frac{2d_w}{\left(1-b \lambda_2(\mathcal{L})\right)^{\frac{1}{2}}}  +  a {n}\mu_g   \right)$. Thus it follows from Lemma \ref{Lemma:BaiGeorge} that 
\begin{align}\label{Eqn:Exp_wTilde}
\begin{split}
    \mathbb{E} \left[ \| \tilde{\mathbf{w}}_{k+1} \|_2^2 \right] &\leq 
    \frac{W_3}{\exp{\left(W_1(k+1)^{1-\delta_1}\right)}}   + \frac{W_2}{(k+1)^{\delta_2-2\delta_1}}
\end{split}
\end{align}
where $W_1$, $W_2$ and $W_3$ are positive constants defined as
\begin{align}
    W_1 &= \frac{b\lambda_2(\mathcal{L})}{(1-\delta_1)},\label{eq:Thrm1W1} \\
    W_2 &= \frac{{2n^2a\left(  \frac{2d_w}{\left(1-b \lambda_2(\mathcal{L})\right)^{\frac{1}{2}}}  + n a \mu_g   \right)}(\delta_2-\delta_1)  }{b^2\lambda_2(\mathcal{L})^2\delta_1} \exp{\left( W_1 2^{1-\delta_1}\right)},\label{eq:Thrm1W2}\\
    \begin{split}
    W_3 &= \exp{\left( W_1 \right)} \bigg(\mathbb{E} \left[ \| \tilde{\mathbf{w}}_{0} \|_2^2 \right]   + \frac{{2n^2a\left(  \frac{2d_w}{\left(1-b \lambda_2(\mathcal{L})\right)^{\frac{1}{2}}}  + n a \mu_g   \right)}  }{b\lambda_2(\mathcal{L})}\, \sum_{\ell=0}^{\bar{k}} \, \left(\frac{1}{(1-b\lambda_2(\mathcal{L}))^{\ell}} \frac{1}{(\ell+1)^{\delta_2-\delta_1}} \right) \bigg) 
    \end{split}\label{eq:Thrm1W3}
\end{align}
in which $\lambda_2(\mathcal{L})$ denotes the second smallest eigenvalue of $\mathcal{L}$ and $\bar{k} > 0$ is defined as
\begin{align}\label{eq:Thrm1kBAR}
    \bar{k} = 
    \ceil[\Bigg]{\left( \frac{\delta_2-\delta_1}{b\lambda_2(\mathcal{L})}\right)^{\frac{1}{1-\delta_1}}}.
\end{align}

This concludes the proof of Theorem~\ref{Theorem:Consensus}. 
\begin{flushright}
$\blacksquare$
\end{flushright}

\subsection{Consensus rate}

Note that
\begin{align}
    \frac{W_3}{\exp{\left(W_1(k+1)^{1-\delta_1}\right)}} &= \frac{W_3}{{(k+1)^{\delta_2-2\delta_1}}}\frac{{(k+1)^{\delta_2-2\delta_1}}}{\exp{\left(W_1(k+1)^{1-\delta_1}\right)}}\\
    &\leq \frac{W_3}{{(k+1)^{\delta_2-2\delta_1}}}\,\,\max_{\forall t\geq 0}\left(\frac{{(t+1)^{\delta_2-2\delta_1}}}{\exp{\left(W_1(t+1)^{1-\delta_1}\right)}}\right).
\end{align}
We have
\begin{equation}
    \max_{\forall t\geq 0}\,\left(\frac{{(t+1)^{\delta_2-2\delta_1}}}{\exp{\left(W_1(t+1)^{1-\delta_1}\right)}}\right) = \exp{\left({-}\frac{\delta_2-2\delta_1}{1-\delta_1}\right)}\left(\frac{\delta_2-2\delta_1}{W_1(1-\delta_1)}\right)^{\frac{\delta_2-2\delta_1}{1-\delta_1}},
\end{equation}
which is attained when
\begin{equation}
    \frac{\delta_2-2\delta_1}{1-\delta_1}=W_1(t+1)^{1-\delta_1}.
\end{equation}
Note $ \frac{\delta_2-2\delta_1}{W_1(1-\delta_1)}=\frac{\delta_2-2\delta_1}{b\lambda_2(\mathcal{L})}$. Define 
\begin{align}
    W_4 = W_3\exp{\left({-}\frac{\delta_2-2\delta_1}{1-\delta_1}\right)}\left(\frac{\delta_2-2\delta_1}{b\lambda_2(\mathcal{L})}\right)^{\frac{\delta_2-2\delta_1}{1-\delta_1}}.
\end{align}
Then 
\begin{align}
\begin{split}
    \mathbb{E} \left[ \| \tilde{\mathbf{w}}_{k+1} \|_2^2 \right] &\leq 
     \frac{W_2+W_4 }{(k+1)^{\delta_2-2\delta_1}}.
\end{split}\label{eq:ConsensusRate}
\end{align}

\section{Proof of Theorem~\ref{Theorem:KLconvergence}}\label{Proof:ThrmKL}

Denote $\bar{\bm{w}}(t_{{k}})$ and $\tilde\omega(t_k)$ by $Y_{k,1}$ and $Y_{k,2}$, respectively. Let $Y_k=[Y_{k,1}^\top~Y_{k,2}^\top]^\top$ and $X_k(s) = [Y_k^\top~\bar{\bm{w}}^\top(s)]^\top$. Then from~\eqref{eqp4} we have for $s\in[t_k,t_{k+1})$ 
\begin{equation}
    dX_k(s) = \begin{pmatrix}
    0\\
    0\\
    -\nabla \bm{U}\left(Y_{k,1},\mathbf{X}\right)  -\zeta(Y_{k,2},Y_{k,1}) 
    \end{pmatrix}ds +
    \begin{pmatrix}
    0\\
    0\\
    \sqrt{2}dB_s
    \end{pmatrix}.\label{eq:dXk}
\end{equation}
Let $X_k$ admit a distribution $p_{t}(X_k)$. The time evolution of $p_{t}(X_k)$ is given by the following Fokker-Planck (FP) equation~(see 4.1 in \cite{pavliotis2014stochastic})
\begin{align}
    &\frac{\partial p_{t}(X_k)}{\partial t} = - \nabla_{\bar{\bm{w}}} \cdot \left[p_{t}(X_k)\left(-\nabla \bm{U} (Y_{k,1},\mathbf{X}) -\nabla_{\bar{\bm{w}}} \log p_{t}(X_k)-\zeta(Y_{k,2},Y_{k,1})\right)\right]
\end{align}
 where $\nabla\cdot\left[\mathbf{u}(\,\cdot\,)\right]$ denotes the divergence of a vector field $\mathbf{u}(\,\cdot\,)$. 
 

We next marginalize out $Y_k$ from $p_t(X_k)$ to obtain $p_t(\bar{\bm{w}}) = \int\, p_t(X_k) \, dY_k$ and
\begin{equation}\label{eq:ptw1}
    \frac{\partial p_{t}(\bar{\bm{w}})}{\partial t}= -  \nabla_{\bar{\bm{w}}} \cdot   \left[ \int\,p_{t}(\bar{\bm{w}}, Y_k)\left(-\nabla \bm{U} (Y_{k,1},\mathbf{X}) - \nabla_{\bar{\bm{w}}} \log p_{t}(\bar{\bm{w}}, Y_k)-\zeta(Y_{k,2},Y_{k,1})\right)\,dY_k \right].
\end{equation}
Note that 
\begin{equation}
  \int\,  p_{t}(\bar{\bm{w}}, Y_k) \nabla_{\bar{\bm{w}}} \log p_{t}(\bar{\bm{w}}, Y_k) \, dY_k = p_t(\bar{\bm{w}})\nabla\log p_t(\bar{\bm{w}}) = \nabla p_t(\bar{\bm{w}}).
\end{equation}

We further write~\eqref{eq:ptw1} as
\begin{align}
\begin{split}
   \frac{\partial p_{t}(\bar{\bm{w}})}{\partial t}& = \nabla_{\bar{\bm{w}}} \cdot   \left[ p_t(\bar{\bm{w}})\nabla\log p_t(\bar{\bm{w}}) \right]\\ &\qquad\qquad -\nabla_{\bar{\bm{w}}} \cdot   \left[ \int   p_{t}(\bar{\bm{w}}, Y_k)\left(-\nabla \bm{U} (Y_{k,1},\mathbf{X})-\zeta(Y_{k,2},Y_{k,1})\right)\,dY_k\right]\end{split}\\
   \begin{split}
       &=\nabla_{\bar{\bm{w}}} \cdot   \left[ p_t(\bar{\bm{w}})\nabla\log p_t(\bar{\bm{w}}) \right]\\ &\qquad \qquad -\nabla_{\bar{\bm{w}}} \cdot   \left[ \int   p_{t}(\bar{\bm{w}}, Y_k)\left(-\nabla \bm{U} (Y_{k,1},\mathbf{X}) \pm \nabla \bm{U} (\bar{\bm{w}},\mathbf{X})-\zeta(Y_{k,2},Y_{k,1})\right)\,dY_k\right]
   \end{split}\\
   \begin{split}
       &=\nabla_{\bar{\bm{w}}} \cdot   \left[ p_t(\bar{\bm{w}})\nabla\log p_t(\bar{\bm{w}}) \right] -\nabla_{\bar{\bm{w}}} \cdot   \left[ \int   p_{t}(\bar{\bm{w}}, Y_k)\left(-\nabla  \bm{U} (\bar{\bm{w}},\mathbf{X}) \right)\,dY_k\right] \\ & \qquad  \qquad -\nabla_{\bar{\bm{w}}} \cdot   \left[ \int   p_{t}(\bar{\bm{w}}, Y_k)\left(\nabla \bm{U} (\bar{\bm{w}},\mathbf{X})-\nabla \bm{U} (Y_{k,1},\mathbf{X}) -\zeta(Y_{k,2},Y_{k,1})\right)\,dY_k\right]
   \end{split}\\
   \begin{split}
       &=\nabla_{\bar{\bm{w}}} \cdot   \left[ p_t(\bar{\bm{w}}) \left( \nabla\log p_t(\bar{\bm{w}}) +\nabla  \bm{U} (\bar{\bm{w}},\mathbf{X}) \right) \right] \\ & \qquad  \qquad -\nabla_{\bar{\bm{w}}} \cdot   \left[ \int   p_{t}(\bar{\bm{w}}, Y_k)\left(\nabla \bm{U} (\bar{\bm{w}},\mathbf{X})-\nabla \bm{U} (Y_{k,1},\mathbf{X}) -\zeta(Y_{k,2},Y_{k,1})\right)\,dY_k \right]
   \end{split}
\end{align}
Let 
\begin{equation}
    f_t(\bar{\bm{w}}) = p_t(\bar{\bm{w}})\left(\nabla\log p_t(\bar{\bm{w}}) +\nabla \bm{U} (\bar{\bm{w}},\mathbf{X})\right) 
=p_t(\bar{\bm{w}})\nabla\log \left(\frac{p_t(\bar{\bm{w}})}{p^*(\bar{\bm{w}})} \right),\label{eq:ft}
\end{equation}
where we used $\nabla \bm{U}\left( \bar{\bm{w}},\bm{X}\right) = -\nabla \log\,p^*(\bar{\bm{w}})$ and let
\begin{equation}
    \tilde f_t(\bar{\bm{w}}) = \int   p_{t}(\bar{\bm{w}}, Y_k)\left(\nabla \bm{U} (\bar{\bm{w}},\mathbf{X})-\nabla \bm{U} (Y_{k,1},\mathbf{X}) -\zeta(Y_{k,2},Y_{k,1})\right)\,dY_k.\label{eq:tildeht}
\end{equation}
Thus,
\begin{equation}
    \frac{\partial p_{t}(\bar{\bm{w}})}{\partial t}=\nabla_ {\bar{\bm{w}}} \cdot \left[f_t - \tilde f_t\right].\label{eq:ptw2}
\end{equation}

We next derive the evolution of the KL divergence between $p_t(\bar{\bm{w}})$ and $p^*(\bar{\bm{w}})$, denoted by $F(p_t(\bar{\bm{w}}))$, i.e.,
\begin{align}
    F(p_t(\bar{\bm{w}})) &= \int\, p_t(\bar{\bm{w}})\log\left(\frac{p_t(\bar{\bm{w}})}{p^*(\bar{\bm{w}})}\right) \,d\bar{\bm{w}}.
\end{align}
Taking the time derivative of $F(p_t(\bar{\bm{w}}))$ leads to 
\begin{align}
    \dot F(p_t(\bar{\bm{w}})) &= \frac{d}{dt}\int p_t(\bar{\bm{w}})\log\left(\frac{p_t(\bar{\bm{w}})}{p^*(\bar{\bm{w}})}\right)d\bar{\bm{w}}\\
    &= \int \frac{\partial}{\partial t} \left( p_t(\bar{\bm{w}}) \log\left({p_t(\bar{\bm{w}})}\right) - p_t(\bar{\bm{w}}) \log\left({p^*(\bar{\bm{w}})}\right) \right)d\bar{\bm{w}}\\
    &=\int \left(\log\left(\frac{p_t(\bar{\bm{w}})}{p^*(\bar{\bm{w}})}\right)+1\right)\frac{\partial p_{t}(\bar{\bm{w}})}{\partial t}\,d\bar{\bm{w}}.
\end{align}
Let 
\begin{equation}\label{eq:kappa}
    \kappa(\bar{\bm{w}}) =\log\left(\frac{p_t(\bar{\bm{w}})}{p^*(\bar{\bm{w}})}\right)+1.
\end{equation}
Using~\eqref{eq:ptw2}, we further obtain
\begin{align}
\begin{split}
    \dot F(p_t(\bar{\bm{w}})) &= 
    \int \, \kappa(\bar{\bm{w}}) \frac{\partial p_{t}(\bar{\bm{w}})}{\partial t}\,d\bar{\bm{w}}\\
    &= \int \kappa(\bar{\bm{w}}) \left( \nabla_ {\bar{\bm{w}}} \cdot \left[ f_t(\bar{\bm{w}}) \right]\right)\,d\bar{\bm{w}} - \int \kappa(\bar{\bm{w}}) \left(\nabla_ {\bar{\bm{w}}} \cdot \left[ \tilde f_t(\bar{\bm{w}})\right]\right)\,d\bar{\bm{w}}.\label{eq:Fdot}
    \end{split}
\end{align}
The first term in~\eqref{eq:Fdot} corresponds to the continuous time Langevin dynamics. Using Lemma~\ref{lem:equality} (an alternative version of Lemma 10.4.1 in~\cite{ambrosio2008gradient}) it can be shown that
\begin{equation}
    \int \kappa(\bar{\bm{w}}) \left( \nabla_ {\bar{\bm{w}}} \cdot \left[ f_t(\bar{\bm{w}}) \right]\right)\,d\bar{\bm{w}} = -\int \nabla \log\left(\frac{p_t(\bar{\bm{w}})}{p^*(\bar{\bm{w}})}\right)^\top f_t(\bar{\bm{w}})\,d\bar{\bm{w}}.
\end{equation}
Substituting~\eqref{eq:ft}, we further get
\begin{align}
    \int \nabla \log\left(\frac{p_t(\bar{\bm{w}})}{p^*(\bar{\bm{w}})}\right)^\top f_t(\bar{\bm{w}})\,d\bar{\bm{w}} &=\int \left\|\nabla \log\left(\frac{p_t(\bar{\bm{w}})}{p^*(\bar{\bm{w}})}\right)\right\|^2_2 p_t(\bar{\bm{w}})\,d\bar{\bm{w}} \\&=\mathbb{E}_{p_t(\bar{\bm{w}})}\left[\left\|\nabla \log\left(\frac{p_t(\bar{\bm{w}})}{p^*(\bar{\bm{w}})}\right)\right\|^2_2\right].\label{eq:1st}
\end{align}

For the second term in~\eqref{eq:Fdot}, we have
\begin{align}
    \begin{split}
     &\int \kappa(\bar{\bm{w}}) \left(\nabla_ {\bar{\bm{w}}} \cdot \left[ \tilde f_t(\bar{\bm{w}})\right]\right)\,d\bar{\bm{w}} \\
     &\quad = \int \kappa(\bar{\bm{w}}) \left(\nabla_ {\bar{\bm{w}}} \cdot \left[ 
     \int   p_{t}(\bar{\bm{w}}, Y_k)\left(\nabla \bm{U} (\bar{\bm{w}},\mathbf{X})-\nabla \bm{U} (Y_{k,1},\mathbf{X}) -\zeta(Y_{k,2},Y_{k,1})\right)\,dY_k
     \right]\right)\,d\bar{\bm{w}}
    \end{split}\\
     &=\iint   \kappa(\bar{\bm{w}}) \left( \nabla_ {\bar{\bm{w}}} \cdot\left[p_{t}(\bar{\bm{w}}, Y_k)\left(\nabla \bm{U} (\bar{\bm{w}},\mathbf{X})-\nabla \bm{U} (Y_{k,1},\mathbf{X})-\zeta(Y_{k,2},Y_{k,1})\right)\right] \right) d\bar{\bm{w}}\,dY_k.
\end{align}
From Lemma~\ref{lem:equality}, we further obtain
\begin{align}
    \begin{split}
   &\iint   \kappa(\bar{\bm{w}}) \left( \nabla_ {\bar{\bm{w}}} \cdot\left[p_{t}(\bar{\bm{w}}, Y_k)\left(\nabla \bm{U} (\bar{\bm{w}},\mathbf{X})-\nabla \bm{U} (Y_{k,1},\mathbf{X})-\zeta(Y_{k,2},Y_{k,1})\right)\right] \right) d\bar{\bm{w}} \,dY_k \\
   =&- \iint  \nabla_{\bar{\bm{w}}} \kappa(\bar{\bm{w}})^\top   \left(\nabla \bm{U} (\bar{\bm{w}},\mathbf{X})-\nabla \bm{U} (Y_{k,1},\mathbf{X})-\zeta(Y_{k,2},Y_{k,1})\right)p_{t}(\bar{\bm{w}}, Y_k) \,d\bar{\bm{w}}\,dY_k
   \end{split}
   \\
   =&-\int \nabla \log\left(\frac{p_t(\bar{\bm{w}})}{p^*(\bar{\bm{w}})}\right)^\top \tilde f_t(\bar{\bm{w}})\,d\bar{\bm{w}}.\label{eq:BookLemma}
\end{align}
It then follows from~\eqref{eq:Fdot},~\eqref{eq:1st} and~\eqref{eq:BookLemma} that
\begin{align}
    \dot F(p_t(\bar{\bm{w}})) =
    -\mathbb{E}_{p_t(\bar{\bm{w}})}\left[\left\|\nabla \log\left(\frac{p_t(\bar{\bm{w}})}{p^*(\bar{\bm{w}})}\right)\right\|^2_2\right]
    + \int \nabla \log\left(\frac{p_t(\bar{\bm{w}})}{p^*(\bar{\bm{w}})}\right)^\top \tilde f_t(\bar{\bm{w}})\,d\bar{\bm{w}}.\label{eq:Fdot1}
\end{align}



To bound the second term in~\eqref{eq:Fdot1}, we note from~\eqref{eq:tildeht} that
\begin{align}
    \begin{split}
    &\int \nabla \log\left(\frac{p_t(\bar{\bm{w}})}{p^*(\bar{\bm{w}})}\right)^\top \tilde f_t(\bar{\bm{w}})\,d\bar{\bm{w}}\\
    &  = \iint  \nabla \log\left(\frac{p_t(\bar{\bm{w}})}{p^*(\bar{\bm{w}})}\right)^\top \left(\nabla \bm{U} (\bar{\bm{w}},\mathbf{X})-\nabla \bm{U} (Y_{k,1},\mathbf{X})-\zeta(Y_{k,2},Y_{k,1})\right)p_{t}(\bar{\bm{w}}, Y_k) \, dY_k\, d\bar{\bm{w}}
    \end{split}\\
    \begin{split}
    & \leq \, \frac{1}{2}\iint  \left\|\nabla \log\left(\frac{p_t(\bar{\bm{w}})}{p^*(\bar{\bm{w}})}\right)\right\|_2^2p_{t}(\bar{\bm{w}}, Y_k)\,dY_k\, d\bar{\bm{w}}\\
    & \qquad \qquad  +\frac{1}{2}\iint   \left\|\nabla \bm{U} (\bar{\bm{w}},\mathbf{X})-\nabla \bm{U} (Y_{k,1},\mathbf{X})-\zeta(Y_{k,2},Y_{k,1})\right\|^2_2 p_{t}(\bar{\bm{w}}, Y_k)\,dY_k\, d\bar{\bm{w}}
    \end{split}\\
    \begin{split}
    & \leq\,\frac{1}{2}\,\mathbb{E}_{p_{t}(\bar{\bm{w}})} \left[ \left\|\nabla \log\left(\frac{p_t(\bar{\bm{w}})}{p^*(\bar{\bm{w}})}\right)\right\|_2^2 \right] + \iint   \left\|\nabla \bm{U} (\bar{\bm{w}},\mathbf{X})-\nabla \bm{U} (Y_{k,1},\mathbf{X})\right\|^2_2 p_{t}(\bar{\bm{w}}, Y_k)\,dY_k\, d\bar{\bm{w}}\\
    & \qquad \qquad \qquad \qquad \qquad \qquad \qquad \qquad +\iint   \left\|\zeta(Y_{k,2},Y_{k,1})\right\|^2_2\,p_{t}(\bar{\bm{w}}, Y_k)\,dY_k\, d\bar{\bm{w}}
    \end{split}\\
    \begin{split}
    & \leq\,\frac{1}{2}\,\mathbb{E}_{p_{t}(\bar{\bm{w}})} \left[ \left\|\nabla \log\left(\frac{p_t(\bar{\bm{w}})}{p^*(\bar{\bm{w}})}\right)\right\|_2^2 \right] + \bar{L}^2 \,\iint   \left\| \bar{\bm{w}} - Y_{k,1}\right\|^2_2 \, p_{t}(\bar{\bm{w}}, Y_k)\,dY_k\, d\bar{\bm{w}}\\
    &\qquad \qquad \qquad \qquad \qquad \qquad \qquad \qquad \qquad + \iint   \left\|\zeta(Y_{k,2},Y_{k,1})\right\|^2_2\,p_{t}(\bar{\bm{w}}, Y_k)\,dY_k\, d\bar{\bm{w}}.
    \end{split}\label{eq:ftbound1}
\end{align}
For the last term in~\eqref{eq:ftbound1}, it follows from Assumption~\ref{Assump:Lipz} that
\begin{align}
     \|\zeta(\tilde \omega(t_k),\bar{\bm{w}}(t_{{k}}))\|^2_2 &=  \left\|  \sum_{i=1}^n\,\left(\nabla U_i\left(  \bar{\bm{w}}(t_k) + \tilde{\omega}_i(t_k),\bm{X}_i\right) - \nabla U_i\left( \bar{\bm{w}}(t_k),\bm{X}_i\right) \right) \right\|^2_2 \, 
     \leq \, L^2  \left\| \tilde{\omega}(t_k) \right\|^2_2 
\end{align}
from which we obtain
\begin{align}
   \iint  \left\|\zeta(Y_{k,2},Y_{k,1})\right\|^2_2\,p_{t}(\bar{\bm{w}}, Y_k)\,dY_k\, d\bar{\bm{w}}
   &\leq L^2\iint   \left\|Y_{k,2}\right\|^2_2 \, p_{t}(\bar{\bm{w}}, Y_k)\, dY_k \, d\bar{\bm{w}} \\
   &= L^2\iint  \left\|Y_{k,2}\right\|^2_2 \, p_{t}(\bar{\bm{w}}, Y_{k,2})\,dY_{k,2} \, d\bar{\bm{w}}\\
   &= L^2\int  \left\|Y_{k,2}\right\|^2_2 \,p(Y_{k,2})\,dY_{k,2}\\
   &=L^2\,\mathbb{E}_{p(Y_{k,2})}\left\|Y_{k,2}\right\|^2_2 = L^2 \,\mathbb{E}_{p_{t_k}(\tilde\omega)} \left[\left\|\tilde\omega(t_k)\right\|^2_2 \right].\label{eq:bound2}
\end{align}
For the second term in~\eqref{eq:ftbound1}, since $Y_{k,1} = \bar{\bm{w}}(t_{{k}})$, it follows from~\eqref{eq:dXk} that $\forall t\in[t_k,t_{k+1}]$
\begin{align}
   &\| \bar{\bm{w}}(t)-\bar{\bm{w}}(t_{{k}})\|^2_2=  \left\|  -\nabla \bm{U}(\bm{\bar{w}}(t_{{k}}), \mathbf{X})(t-t_k) +\sqrt{2}(B_t-B_{t_k})-(t-t_k)\zeta(\tilde \omega(t_k),\bar{\bm{w}}(t_{{k}})) \right\|^2_{2}\\
   \begin{split}
   &=2\|B_t-B_{t_k}\|^2_2+\left\|  \nabla \bm{U}(\bm{\bar{w}}(t_{{k}}), \mathbf{X})(t-t_k)+ (t-t_k)\zeta(\tilde \omega(t_k),\bar{\bm{w}}(t_{{k}})) \right\|^2_{2} \\
   &\qquad \qquad -2\sqrt{2}(B_t-B_{t_k})^\top\left(\nabla \bm{U}(\bm{\bar{w}}(t_{{k}}), \mathbf{X})(t-t_k)+ (t-t_k)\zeta(\tilde \omega(t_k),\bar{\bm{w}}(t_{{k}}))\right) 
   \end{split}\\
   \begin{split}
   &\leq 2\|B_t-B_{t_k}\|^2_2+2\alpha_k^2\left\|  \nabla \bm{U}(\bm{\bar{w}}(t_{{k}}), \mathbf{X}) \right\|^2_{2}+2\alpha_k^2\left\|  \zeta(\tilde \omega(t_k),\bar{\bm{w}}(t_{{k}})) \right\|^2_{2} \\
   &\qquad \qquad -2\sqrt{2}(B_t-B_{t_k})^\top\left(\nabla \bm{U}(\bm{\bar{w}}(t_{{k}}), \mathbf{X})(t-t_k)+ (t-t_k)\zeta(\tilde \omega(t_k),\bar{\bm{w}}(t_{{k}}))\right)
   \end{split}\\
   \begin{split}
   &\leq 2\|B_t-B_{t_k}\|^2_2 + 2\alpha_k^2\bar{L}^2\|\bm{\bar{w}}(t_{{k}})\|^2_2 + 2\alpha_k^2L^2 \left\| \tilde{\omega}(t_k) \right\|^2_2 \\
   &\qquad \qquad -2\sqrt{2}(B_t-B_{t_k})^\top\left(\nabla \bm{U}(\bm{\bar{w}}(t_{{k}}), \mathbf{X})(t-t_k)+ (t-t_k)\zeta(\tilde \omega(t_k),\bar{\bm{w}}(t_{{k}}))\right).\label{eq:exp_wdiff}
   \end{split}
\end{align}
The last inequality follows from Lipschitz continuity of $\nabla \bm{U}$ and assuming $ \nabla \bm{U}(\mathbf{0},\mathbf{X}) = \mathbf{0}$. Note that assuming $ \nabla \bm{U}(\mathbf{0},\mathbf{X}) = \mathbf{0}$ is only to simplify the notation. Later we will bound the expectation of $\|\bm{\bar{w}}(t_{{k}})\|^2_2$ in~\eqref{eq:exp_wdiff}. Then given any finite $\bm{w}^\star$ such that $\nabla\bm{U}(\bm{w}^\star,\mathbf{X}) = \mathbf{0}$, $\left\|  \nabla \bm{U}(\bm{\bar{w}}(t_{{k}}), \mathbf{X}) \right\|^2_{2}\leq \bar L^2\|\bm{\bar{w}}(t_{{k}})-\bm{w}^\star\|^2_2$, whose expectation is also bounded.

Let $\tilde B_t = B_t - B_{t_k}$. Then $\tilde B_t$ follows a zero mean Gaussian distribution with a variance of $t-t_k$. Note that $\bar{\bm{w}}(t)$ depends on $\tilde B_t$ for $t_k<t\leq t_{k+1}$ while  $Y_k$ is independent of $\tilde B_t$. Thus, 
\begin{align}
    \iint \|\tilde B_t\|^2p_{t}(\bar{\bm{w}}, Y_{k}) \, dY_{k} \, d\bar{\bm{w}} &= \int \|\tilde B_t\|^2p_{t}(\bar{\bm{w}})  \, d\bar{\bm{w}}\\
    &= \int \|\tilde B_t\|^2\left[\int p_{t}(\bar{\bm{w}}|\tilde B_t)p(\tilde B_t)\,d\tilde B_t\right] \, d\bar{\bm{w}}\\
    &=\int \|\tilde B_t\|^2p(\tilde B_t) \left[\int p_{t}(\bar{\bm{w}}|\tilde B_t)  \, d\bar{\bm{w}}\right]\,d\tilde B_t\\
    &=\mathbb{E}_{p(\tilde B_t)} \|\tilde B_t\|^2=d_w(t-t_k)\leq \alpha_kd_w. \label{eq:noise1}
\end{align} 
Similarly, for any function $S$ of $Y_k$, we have
\begin{align}
    \iint \tilde B_t^\top S(Y_k)p_{t}(\bar{\bm{w}}, Y_{k}) \, dY_{k} \, d\bar{\bm{w}} &= \iint \tilde B_t^\top S(Y_k)\left[\int p_{t}(\bar{\bm{w}}| Y_{k},\tilde B_t)p(Y_k)p(\tilde B_t)\,d\tilde B_t\right] \, dY_{k} \, d\bar{\bm{w}}\\
    &=\int \tilde B_t^\top S(Y_k)p(Y_k)p(\tilde B_t)\left[\int p_{t}(\bar{\bm{w}}| Y_{k},\tilde B_t)\, d\bar{\bm{w}}\right] \, dY_{k}\,d\tilde B_t \\
    &=\int p(\tilde B_t)\tilde B_t^\top S(Y_k)p(Y_k)\, dY_{k}\,d\tilde B_t =0. \label{eq:noise2}
\end{align}

Recall $Y_{k,1} = \bar{\bm{w}}(t_{{k}})$ and $Y_{k,2} = \tilde{\omega}(t_k)$. Using~\eqref{eq:noise1} and~\eqref{eq:noise2}, we further obtain
\begin{align}
    \begin{split}
    &\iint   \left\| \bar{\bm{w}}(t)- Y_{k,1}\right\|^2_2 \, p_{t}(\bar{\bm{w}}, Y_{k})\, dY_{k}\, d\bar{\bm{w}}\\
    & \qquad \qquad \qquad   \leq \iint   \left(2\alpha_k^2\bar{L}^2\|Y_{k,1} \|^2_2 + 2\alpha_k^2L^2 \left\| Y_{k,2} \right\|^2_2 + 2\alpha_kd_w\right) \, p_{t}(\bar{\bm{w}}, Y_{k}) \, dY_{k} \, d\bar{\bm{w}}
    \end{split}
    \\
    & \qquad \qquad \qquad \qquad =2\alpha_k^2\bar{L}^2\mathbb{E}_{p(Y_{k,1})}\|Y_{k,1} \|^2_2+2\alpha_k^2L^2\mathbb{E}_{p(Y_{k,2})}\|Y_{k,2} \|^2_2+2\alpha_kd_w\\
    & \qquad \qquad \qquad \qquad
    =2\alpha_k^2\bar{L}^2\mathbb{E}_{p(\bar{\bm{w}}(t_{{k}}))}\|\bar{\bm{w}}(t_{{k}}) \|^2_2 + 2\alpha_k^2L^2\mathbb{E}_{p(\tilde\omega(t_k))}\|\tilde\omega(t_k) \|^2_2 + 2\alpha_kd_w.\label{eq:bound1}
\end{align}

Substituting~\eqref{eq:bound1} and~\eqref{eq:bound2} into~\eqref{eq:ftbound1} yields
\begin{align}\label{eq:ftbound1a}
    \begin{split}
        &\int \nabla \log\left(\frac{p_t(\bar{\bm{w}})}{p^*(\bar{\bm{w}})}\right)^\top \tilde f_t(\bar{\bm{w}})\,d\bar{\bm{w}} \leq\,\frac{1}{2}\,\mathbb{E}_{p_{t}(\bar{\bm{w}})} \left[ \left\|\nabla \log\left(\frac{p_t(\bar{\bm{w}})}{p^*(\bar{\bm{w}})}\right)\right\|_2^2 \right] + L^2 \,\mathbb{E}_{p(\tilde\omega(t_k))} \left[\left\|\tilde\omega(t_k)\right\|^2_2 \right]\\
    &\qquad \qquad \qquad \qquad + \bar{L}^2 \left(
    2\alpha_k^2\bar{L}^2\mathbb{E}_{p(\bar{\bm{w}}(t_{{k}}))}\|\bar{\bm{w}}(t_{{k}}) \|^2_2 + 2\alpha_k^2L^2\mathbb{E}_{p(\tilde\omega(t_k))}\|\tilde\omega(t_k) \|^2_2 + 2\alpha_kd_w
    \right).
    \end{split}
\end{align}
Now substituting~\eqref{eq:ftbound1a} into~\eqref{eq:Fdot1} gives
\begin{align}
    \begin{split}
    \dot F(p_t(\bar{\bm{w}}))\leq &-\frac{1}{2}\mathbb{E}_{p_{t}(\bar{\bm{w}})}  \left[ \left\|\nabla \log\left(\frac{p_t(\bar{\bm{w}})}{p^*(\bar{\bm{w}})}\right)\right\|_2^2 \right] + L^2\, \mathbb{E}_{p(\tilde\omega(t_k))}\left\|\tilde\omega(t_k)\right\|^2_2\\
    &\quad   + \bar{L}^2\left(2\alpha_k^2\bar{L}^2\mathbb{E}_{p(\bar{\bm{w}}(t_{{k}}))}\|\bar{\bm{w}}(t_{{k}}) \|^2_2+2\alpha_k^2L^2\mathbb{E}_{p(\tilde\omega(t_k))}\|\tilde\omega(t_k) \|^2_2+2\alpha_kd_w\right)
    \end{split}\\
    \begin{split}
    =& -\frac{1}{2} \mathbb{E}_{p_{t}(\bar{\bm{w}})}  \left[ \left\|\nabla \log\left(\frac{p_t(\bar{\bm{w}})}{p^*(\bar{\bm{w}})}\right)\right\|_2^2 \right] + \left(2\alpha_k^2\bar{L}^2L^2 + L^2\right)\mathbb{E}_{p(\tilde\omega(t_k))}\|\tilde\omega(t_k) \|^2_2\\
    & \qquad \qquad 
    \qquad \qquad +  2\alpha_k^2\bar{L}^4\,\mathbb{E}_{p(\bar{\bm{w}}(t_{{k}}))}\|\bar{\bm{w}}(t_{{k}}) \|^2_2 + 2\alpha_k \bar{L}^2 d_w.  
    \end{split}
\end{align}
Recall the log-Sobolev inequality~\eqref{eq:LSI}
\begin{align}
     F(p_t(\bar{\bm{w}})) = \mathbb{E}_{p_{t}(\bar{\bm{w}})}  \left[  \log\left(\frac{p_t(\bar{\bm{w}})}{p^*(\bar{\bm{w}})}\right) \right] \leq \frac{1}{2 \rho_U} \mathbb{E}_{p_{t}(\bar{\bm{w}})}  \left[ \left\|\nabla \log\left(\frac{p_t(\bar{\bm{w}})}{p^*(\bar{\bm{w}})}\right)\right\|_2^2 \right], 
\end{align}
where $\rho_U$ is the log-Sobolev constant. We then have
\begin{align}
\begin{split}
  &\dot F(p_t(\bar{\bm{w}}))
  \leq -\rho_U F(p_t(\bar{\bm{w}})) + 2 \alpha_k^2\bar{L}^4\mathbb{E}_{p(\bar{\bm{w}}(t_{{k}}))}\|\bar{\bm{w}}(t_{{k}}) \|^2_2 \\&
  \qquad \qquad \qquad \qquad \qquad \qquad  +(2\alpha_k^2\bar{L}^2L^2+L^2)\mathbb{E}_{p(\tilde\omega(t_k))}\|\tilde\omega(t_k) \|^2_2 + 2\alpha_k\bar{L}^2d_w
  \end{split}\\
  \begin{split}
  &\qquad \qquad =-\rho_U\bigg( F(p_t(\bar{\bm{w}})) -\frac{1}{\rho_U}\bigg( 2\alpha_k^2\bar{L}^4\mathbb{E}_{p(\bar{\bm{w}}(t_{{k}}))}\|\bar{\bm{w}}(t_{{k}}) \|^2_2  \\&
  \qquad \qquad \qquad \qquad \qquad \qquad +(2\alpha_k^2\bar{L}^2L^2+L^2)\mathbb{E}_{p(\tilde\omega(t_k))}\|\tilde\omega(t_k) \|^2_2 + 2\alpha_k\bar{L}^2d_w \bigg)\bigg),
  \end{split}
  \end{align}
  which means $\forall t\in[t_k,t_{k+1}]$
 \begin{equation}
 \begin{split}
   F(p_{t}(\bar{\bm{w}}))\leq \exp\left({-\rho_U(t-t_k)}\right)F(p_{t_k}(\bar{\bm{w}})) 
    &+ \frac{1- \exp\left({-\rho_U(t-t_k)}\right)}{\rho_U}\bigg( 2\alpha_k^2\bar{L}^4\mathbb{E}_{p(\bar{\bm{w}}(t_{{k}}))}\|\bar{\bm{w}}(t_{{k}}) \|^2_2 \\
    & +(2\alpha_k^2\bar{L}^2L^2+L^2)\mathbb{E}_{p(\tilde\omega(t_k))}\|\tilde\omega(t_k) \|^2_2 + 2\alpha_k\bar{L}^2d_w
    \bigg).  
 \end{split}
 \end{equation}
 Since $\frac{1- \exp\left({-\rho_U(t-t_k)}\right)}{\rho_U}\leq t-t_k\leq \alpha_k$, we further obtain
 \begin{equation}
 \begin{split}
   F(p_{t}(\bar{\bm{w}})) &\leq \exp\left({-\rho_U(t-t_k)}\right)F(p_{t_k}(\bar{\bm{w}})) 
    + \bigg( 2\alpha_k^3\bar{L}^4\mathbb{E}_{p(\bar{\bm{w}}(t_{{k}}))}\|\bar{\bm{w}}(t_{{k}}) \|^2_2 \\
    &\qquad \qquad \qquad \qquad  +(2\alpha_k^3\bar{L}^2L^2+L^2\alpha_k)\mathbb{E}_{p(\tilde\omega(t_k))}\|\tilde\omega(t_k) \|^2_2 + 2\alpha_k^2\bar{L}^2d_w \bigg). 
 \end{split}\label{eq:Ftbound}
 \end{equation}
 In particular, at $t=t_{k+1}$, we have
 \begin{equation}
 \begin{split}
   F(p_{t_{k+1}}(\bar{\bm{w}})) &\leq \exp\left({-\rho_U\alpha_k}\right)F(p_{t_k}(\bar{\bm{w}})) 
    + \bigg( 2\alpha_k^3\bar{L}^4\mathbb{E}_{p(\bar{\bm{w}}(t_{{k}}))}\|\bar{\bm{w}}(t_{{k}}) \|^2_2 \\
    &\qquad \qquad \qquad   +(2\alpha_k^3\bar{L}^2L^2+L^2\alpha_k)\mathbb{E}_{p(\tilde\omega(t_k))}\|\tilde\omega(t_k) \|^2_2 + 2\alpha_k^2\bar{L}^2d_w \bigg).  \label{eq:Fbound_final}
 \end{split}
 \end{equation}
We can then use~\eqref{eq:Fbound_final} to recursively bound the KL-divergence $F(p_{t_{k+1}}(\bar{\bm{w}}))$. 
Note that $\mathbb{E}_{p(\bar{\bm{w}}(t_{{k}}))}\left[\|\bar{\bm{w}}(t_{{k}}) \|^2_2\right] \leq C_{\bar{\bm{w}}}$ from Lemma~\ref{lem:bnd2} if we select $a$ in $\alpha_k = \frac{a}{(k+1)^{\delta_2}}$ as
\begin{align}
    a = \frac{1}{n^{{\gamma}}} \left(\frac{\rho_U (3\delta_2-1)}{25 L^4 \delta_2} \right)^{\frac{1}{3}},\quad {\gamma>2}.\label{eq:init_a}
\end{align}
Here ${\gamma>2}$ is a design parameter to be specified by the user. Also recall from Theorem~\ref{Theorem:Consensus} that
\begin{align}
\begin{split}
    \mathbb{E} \left[ \| \tilde{\mathbf{w}}_{k} \|_2^2 \right] &\leq 
    \frac{W_3}{\exp{\left(W_1k^{1-\delta_1}\right)}}   + \frac{W_2}{k^{\delta_2-2\delta_1}}.
\end{split}
\end{align}

Let 
\begin{equation}
    Z_k =  \frac{W_3(2\alpha_k^3\bar{L}^2L^2+L^2\alpha_k)}{\exp{\left(W_1k^{1-\delta_1}\right)}}, \label{eq:Zk}
\end{equation}
\begin{align}
    C_a = \frac{\rho_U (3\delta_2-1)}{25 L^4 \delta_2},
\end{align}
and
\begin{equation}
    \xi_k = 2\alpha_k^3\bar{L}^4C_{\bar{\bm{w}}} + (2\alpha_k^3\bar{L}^2L^2+L^2\alpha_k)\frac{W_2}{k^{\delta_2-2\delta_1}}+ 2\alpha_k^2\bar{L}^2d_w + Z_k.\label{eq:xik}
\end{equation}
Also define 
\begin{equation}
    \theta_k = \frac{\bar W_3}{\exp{\left(W_1k^{1-\delta_1}\right)}},\label{eq:thetak}
\end{equation}
where 
\begin{equation}
    \bar W_3={W_3(2a^3\bar{L}^2L^2+L^2a)}.
\end{equation}
Note that $Z_k\leq\theta_k$.
Now substituting 
\begin{align}\label{eq:expAlphaK}
    \alpha_k = \left(\frac{C_a}{n^{3\gamma}}\right)^{\frac{1}{3}}\frac{1}{(k+1)^{\delta_2}}\leq \left(\frac{C_a}{n^{3\gamma}}\right)^{\frac{1}{3}}\frac{1}{k^{\delta_2}}
\end{align}
into~\eqref{eq:xik}
yields
\begin{align}
\begin{split}
    \xi_k &\leq  
    \frac{2C_a\bar{L}^4C_{\bar{\bm{w}}}}{n^{3\gamma}k^{3\delta_2}}
    +\left(\frac{2 C_a \bar{L}^2L^2}{n^{3\gamma}k^{3\delta_2}} + \frac{C_a^{\frac{1}{3}}L^2}{n^{\gamma}k^{\delta_2}}
    \right)\frac{W_2}{k^{\delta_2-2\delta_1}} +\frac{2 C_a^{\frac{2}{3}} \bar{L}^2 d_w}{n^{2\gamma}k^{2\delta_2}} + Z_k
\end{split}\nonumber\\
\begin{split}
     &=  
    \frac{2C_a\bar{L}^4C_{\bar{\bm{w}}}}{n^{3\gamma}k^{3\delta_2}}
    +\frac{2 C_a \bar{L}^2L^2W_2}{n^{3\gamma}k^{4\delta_2-2\delta_1}} + \frac{C_a^{\frac{1}{3}}L^2W_2}{n^{\gamma}k^{2\delta_2-2\delta_1}}
     +\frac{2 C_a^{\frac{2}{3}} \bar{L}^2 d_w}{n^{2\gamma}k^{2\delta_2}}+Z_k
\end{split}\\
\begin{split}
     &=  \frac{C_a^{\frac{1}{3}}}{n^{\gamma}k^{2\delta_2-2\delta_1}} \bigg(
    \frac{2 C_a^{\frac{2}{3}}\bar{L}^4C_{\bar{\bm{w}}}}{n^{2\gamma}k^{\delta_2+2\delta_1}}
    +\frac{2 C_a^{\frac{2}{3}} \bar{L}^2L^2W_2}{n^{2\gamma}k^{2\delta_2}} + {L^2W_2} +\frac{2 C_a^{\frac{1}{3}} \bar{L}^2 d_w}{ n^{\gamma}k^{2\delta_1}} \bigg) + Z_k
\end{split}\\
&\leq\, \frac{C_{\xi}}{k^{2\delta_2-2\delta_1}} + \theta_k,\label{eq:xikbound}
\end{align}
where 
\begin{align}\label{eq:Cxi0}
    C_{\xi} = \frac{C_a^{\frac{1}{3}}}{n^{\gamma}} \bigg(
    \frac{2 C_a^{\frac{2}{3}}\bar{L}^4C_{\bar{\bm{w}}}}{n^{2\gamma}}
    +\frac{2 C_a^{\frac{2}{3}} \bar{L}^2L^2W_2}{n^{2\gamma}} + {L^2W_2} +\frac{2 C_a^{\frac{1}{3}} \bar{L}^2 d_w}{ n^{\gamma}} \bigg).
\end{align}

Now we rewrite~\eqref{eq:Fbound_final} as
\begin{equation}
  F(p_{t_{k+1}}(\bar{\bm{w}}))\leq \exp\left({-\rho_U\alpha_k}\right)F(p_{t_k}(\bar{\bm{w}})) + \xi_k,  
\end{equation}
which results in
\begin{align}
    &F(p_{t_{k+1}}(\bar{\bm{w}})) \leq F(p_{t_{0}}(\bar{\bm{w}}))\exp\left(-\rho_U\sum_{\ell=0}^k\alpha_{\ell}\right) + \sum_{\ell=0}^k\xi_{\ell}\exp\left(-\rho_U\sum_{i=\ell+1}^k\alpha_i\right).\label{eq:FpbarW0}
    \end{align}

When $\delta_2 \in (0.5, \,\,1)$, from Lemma~\ref{Lemma:IntBound} we have 
\begin{align}
    \sum_{\ell=0}^{k} \, \alpha_{\ell}  \,&\geq\,
    \int_{0}^{k} \frac{a}{(x+1)^{\delta_2}}\, dx= \frac{a(k+1)^{1-\delta_2}}{1-\delta_2} - \frac{a}{1-\delta_2}. 
\end{align}
Thus 
\begin{align}
    \exp\left(-\rho_U\sum_{\ell=0}^k\alpha_{\ell}\right) &\leq \exp\left( -\frac{a \rho_U }{1-\delta_2}(k+1)^{1-\delta_2} + \frac{a \rho_U}{1-\delta_2}\right) \\
    &= \exp\left( -\frac{a \rho_U }{1-\delta_2}(k+1)^{1-\delta_2} \right) \exp\left( \frac{a \rho_U}{1-\delta_2}\right).
\end{align}
Therefore from \eqref{eq:FpbarW0} we obtain
\begin{align}
\begin{split}
    F(p_{t_{k+1}}(\bar{\bm{w}})) \leq 
    F(p_{t_{0}}(\bar{\bm{w}}))  \exp\left( \frac{a \rho_U}{1-\delta_2}\right) &\exp\left( -\frac{a \rho_U }{1-\delta_2}(k+1)^{1-\delta_2} \right) \\&+ \sum_{\ell=0}^k\xi_{\ell}\exp\left(-\rho_U\sum_{i=\ell+1}^k\alpha_i\right).
\end{split}
\end{align}
Substituting~\eqref{eq:xikbound} yields
\begin{equation}
   \begin{split}
    F(p_{t_{k+1}}(\bar{\bm{w}})) &\leq 
    F(p_{t_{0}}(\bar{\bm{w}}))  \exp\left( \frac{a \rho_U}{1-\delta_2}\right) \exp\left( -\frac{a \rho_U }{1-\delta_2}(k+1)^{1-\delta_2} \right) \\
    &+ \sum_{\ell=0}^k\frac{C_{\xi}}{\ell^{2\delta_2-2\delta_1}}\exp\left(-\rho_U\sum_{i=\ell+1}^k\alpha_i\right)+\sum_{\ell=0}^k \theta_\ell\exp\left(-\rho_U\sum_{i=\ell+1}^k\alpha_i\right).\label{eq:FpbarW0ab1}
\end{split} 
\end{equation}
We first consider the last term in~\eqref{eq:FpbarW0ab1}. We write 
\begin{equation}
\theta_\ell\exp\left(-\rho_U\sum_{i=\ell+1}^k\alpha_i\right) = \theta_\ell\exp(\rho_U\alpha_\ell)\exp\left(-\rho_U\sum_{i=\ell}^k\alpha_i\right)\leq \theta_\ell\exp(\rho_U a)\exp\left(-\rho_U\sum_{i=\ell}^k\alpha_i\right).\label{eq:theta_ell_bnd1}
\end{equation}
From Lemma~\ref{Lemma:IntBound}, we have 
\begin{equation}
\begin{split}
\sum_{i={\ell}}^{k} \, \alpha_i  \,&\geq\,
\int_{{\ell}}^{k} \frac{a}{(x+1)^{\delta_2}}\, dx = \frac{a(k+1)^{1-\delta_2}}{1-\delta_2} - \frac{a({\ell}+1)^{1-\delta_2}}{1-\delta_2}\\
&\qquad \qquad= \frac{a \left((k+1)^{1-\delta_2}-({\ell}+1)^{1-\delta_2}\right)}{1-\delta_2}.\label{eq:theta_ell_bnd2}
\end{split}
\end{equation}
Using~\eqref{eq:thetak},~\eqref{eq:theta_ell_bnd1} and~\eqref{eq:theta_ell_bnd2}, we bound the last term in~\eqref{eq:FpbarW0ab1} by
\begin{align}
&\sum_{\ell=0}^k \theta_\ell\exp\left(-\rho_U\sum_{i=\ell+1}^k\alpha_i\right)\leq \sum_{\ell=0}^{k} \, 
\frac{ \bar W_3\exp(\rho_Ua) \,\exp\left(-\rho_U \frac{a \left((k+1)^{1-\delta_2}-({\ell}+1)^{1-\delta_2}\right)}{1-\delta_2} \right)}{\exp{\left(W_1\ell^{1-\delta_1}\right)}}\\
&=\bar W_3 \exp(\rho_Ua)\,\exp\left(- \frac{\rho_U a}{1-\delta_2}(k+1)^{1-\delta_2} \right)\sum_{{\ell}=0}^{k} \,
\exp\left( \frac{\rho_U a}{1-\delta_2}({\ell}+1)^{1-\delta_2} -W_1\ell^{1-\delta_1}\right)\\
&\leq \bar W_3 \exp(\rho_Ua)\,\exp\left(- \frac{\rho_U a}{1-\delta_2}(k+1)^{1-\delta_2} \right)\sum_{{\ell}=0}^{k} \,
\exp\left( \frac{\rho_U a}{1-\delta_2}({\ell}^{1-\delta_2} +1) -W_1\ell^{1-\delta_1}\right)\\
&=\bar W_3 \exp(\rho_Ua)\,\exp\left(- \frac{\rho_U a}{1-\delta_2}(k+1)^{1-\delta_2} \right)\exp\left( \frac{\rho_U a}{1-\delta_2}\right)\nonumber\\
&\qquad\qquad\qquad\qquad\qquad\times\sum_{{\ell}=0}^{k}
\exp\left( \ell^{1-\delta_1}\left(\frac{\rho_U a}{1-\delta_2}{\ell}^{\delta_1-\delta_2} -W_1\right)\right).\label{eq:theta_bnd}
\end{align}

Since $\delta_1<\delta_2$, $\frac{\rho_U a}{1-\delta_2}{\ell}^{\delta_1-\delta_2} -W_1$ decreases as $\ell$ increases. There exists a finite integer $\bar \ell$ such that $\frac{\rho_U a}{1-\delta_2}{\ell}^{\delta_1-\delta_2} -W_1 \geq 0$, $\forall \ell< \bar{\ell}$, and $\frac{\rho_U a}{1-\delta_2}{\ell}^{\delta_1-\delta_2} -W_1 < 0$, $\forall \ell \geq \bar{\ell}$. The sequence $\exp\left( \ell^{1-\delta_1}\left(\frac{\rho_U a}{1-\delta_2}{\ell}^{\delta_1-\delta_2} -W_1\right)\right)$ is decreasing after $\bar \ell$ because
\begin{equation}
    \frac{d\left[ \ell^{1-\delta_1}\left(\frac{\rho_U a}{1-\delta_2}{\ell}^{\delta_1-\delta_2} -W_1\right)\right]}{d\ell}<0,\quad \forall \ell\geq\bar\ell.
\end{equation}
Note that $\bar{\ell}$ can be computed as 
\begin{align}\label{eq:ellBar}
    \bar{\ell} = \ceil[\Bigg]{\left(  \frac{\rho_U a}{(1-\delta_2)W_1} \right)^{\frac{1}{\delta_2-\delta_1}}}.
\end{align}
It then follows that  for $k>\bar{\ell}$
\begin{align}
\begin{split}
\sum_{{\ell}=0}^{k} \,
\exp&\left( \frac{\rho_U a}{1-\delta_2}{\ell}^{1-\delta_2} -W_1\ell^{1-\delta_1}\right) = \\
&\sum_{{\ell}=0}^{\bar \ell} \,
\exp\left( \frac{\rho_U a}{1-\delta_2}{\ell}^{1-\delta_2} -W_1\ell^{1-\delta_1}\right)\\
&\qquad\qquad\qquad\qquad + \sum_{{\ell}=\bar{\ell}+1}^{k} \,
\exp\left( \frac{\rho_U a}{1-\delta_2}{\ell}^{1-\delta_2} -W_1\ell^{1-\delta_1}\right).\label{eq:exp_split}
\end{split}
\end{align}
 Applying Lemma~\ref{Lemma:IntBound} yields
\begin{align}
\sum_{{\ell}=\bar\ell + 1}^{k} &\,
\exp\left( \frac{\rho_U a}{1-\delta_2}{\ell}^{1-\delta_2} -W_1\ell^{1-\delta_1}\right)\leq\\
& \int_{\bar{\ell}}^k\exp\left( t^{1-\delta_1}\left(\frac{\rho_U a}{1-\delta_2}{t}^{\delta_1-\delta_2} -W_1\right)\right)dt.\label{eq:2nd_ell_bnd}
\end{align}
Let $\kappa = -\left(\frac{\rho_U a}{1-\delta_2}{\bar{\ell}}^{\delta_1-\delta_2} -W_1\right)$. Note $\kappa>0$. 
Then 
\begin{align}
\int_{\bar{\ell}}^k&\exp\left( t^{1-\delta_1}\left(\frac{\rho_U a}{1-\delta_2}{t}^{\delta_1-\delta_2} -W_1\right)\right)dt \leq\int_{\bar{\ell}}^k\exp\left(-\kappa t^{1-\delta_1}\right)dt \\
&=\kappa^{-\frac{1}{1-\delta_1}}\frac{1}{1-\delta_1}\int_{\kappa\bar\ell^{1-\delta_1}}^{\kappa k^{1-\delta_1}}\exp\left( -z\right)z^{\frac{\delta_1}{1-\delta_1}}dz\\
&\leq\kappa^{-\frac{1}{1-\delta_1}}\frac{1}{1-\delta_1}\int_{0}^{\infty}\exp\left( -z\right)z^{\frac{\delta_1}{1-\delta_1}}dz\\
&\leq \kappa^{-\frac{1}{1-\delta_1}}\frac{1}{1-\delta_1} \Gamma\left(\frac{1}{1-\delta_1}\right)\label{eq:gamma_bnd}
\end{align}
where $\Gamma(\cdot)$ is the Gamma function defined as
\begin{equation}
    \Gamma(z) = \int_0^\infty x^{z-1}\exp(-x)dx,\quad \forall z>0.
\end{equation}
We substitute~\eqref{eq:exp_split} into~\eqref{eq:theta_bnd} together with the bound in~\eqref{eq:gamma_bnd} to get
\begin{equation}
    \begin{split}\sum_{\ell=0}^k& \theta_\ell\exp\left(-\rho_U\sum_{i=\ell+1}^k\alpha_i\right)\leq C_\theta\,\exp\left(- \frac{\rho_U a}{1-\delta_2}(k+1)^{1-\delta_2} \right)\label{eq:theta_bnd_final}
\end{split}
\end{equation}
where
\begin{equation}
\begin{split}
     C_\theta=\bar W_3 \exp(\rho_Ua)\exp\left( \frac{\rho_U a}{1-\delta_2}\right)&\bigg(\sum_{{\ell}=0}^{\bar \ell} \,
\exp\left( \frac{\rho_U a}{1-\delta_2}{\ell}^{1-\delta_2} -W_1\ell^{1-\delta_1}\right)\\
&\qquad\qquad\qquad\qquad + \kappa^{-\frac{1}{1-\delta_1}}\frac{1}{1-\delta_1} \Gamma\left(\frac{1}{1-\delta_1}\right)\bigg).
\end{split}
\end{equation}


We next consider the second term on the right hand side of~\eqref{eq:FpbarW0ab1}. 
Note that for some $\bar{k} \in (0,\,k)$, we have
\begin{align}
    \begin{split}
    \sum_{\ell=0}^k \frac{C_{\xi}}{\ell^{2\delta_2-2\delta_1}}\exp\left(-\rho_U\sum_{i={\ell}+1}^k\alpha_i\right) =  & \sum_{{\ell}=0}^{\bar k} \, \frac{C_{\xi}}{\ell^{2\delta_2-2\delta_1}}\exp\left(-\rho_U\sum_{i={\ell}+1}^k\alpha_i\right)\\
    &  +   \sum_{{\ell}=\bar k+1}^{k} \, \frac{C_{\xi}}{\ell^{2\delta_2-2\delta_1}}\exp\left(-\rho_U\sum_{i={\ell}+1}^k\alpha_i\right).\label{eq:split}
    \end{split}
\end{align}
From Lemma~\ref{Lemma:IntBound}, we  have 
\begin{equation}
\begin{split}
    \sum_{i={\ell}+1}^{k} \, \alpha_i  \,&\geq\,
    \int_{{\ell}+1}^{k} \frac{a}{(x+1)^{\delta_2}}\, dx = \frac{a(k+1)^{1-\delta_2}}{1-\delta_2} - \frac{a({\ell}+2)^{1-\delta_2}}{1-\delta_2}\\
    &\qquad \qquad= \frac{a \left((k+1)^{1-\delta_2}-({\ell}+2)^{1-\delta_2}\right)}{1-\delta_2}.
    \end{split}
\end{equation}
The second term in~\eqref{eq:split} is  bounded as follows
    \begin{align}
    &\sum_{{\ell}=\bar k+1}^{k}\frac{C_{\xi}}{\ell^{2\delta_2-2\delta_1}}\exp\left(-\rho_U\sum_{i={\ell}+1}^k\alpha_i\right)\leq \sum_{\ell=\bar{k}+1}^{k} \, 
    \frac{ C_{\xi} \,\exp\left(-\rho_U \frac{a \left((k+1)^{1-\delta_2}-({\ell}+2)^{1-\delta_2}\right)}{1-\delta_2} \right)}{{\ell}^{2\delta_2-2\delta_1}} \\
    &= \sum_{{\ell}=\bar{k}+1}^{k} \, 
    \frac{ C_{\xi} \,\exp\left(- \frac{\rho_U a}{1-\delta_2}(k+1)^{1-\delta_2}  + \frac{\rho_U a}{1-\delta_2}({\ell}+2)^{1-\delta_2} \right)}{{\ell}^{2\delta_2-2\delta_1}} \\
    &= C_{\xi} \,\exp\left(- \frac{\rho_U a}{1-\delta_2}(k+1)^{1-\delta_2} \right) \sum_{{\ell}=\bar{k}+1}^{k} \, 
    \frac{ \exp\left( \frac{\rho_U a}{1-\delta_2}({\ell}+2)^{1-\delta_2} \right)}{{\ell}^{2\delta_2-2\delta_1}}\\
    &\leq C_{\xi} \,\exp\left(- \frac{\rho_U a}{1-\delta_2}(k+1)^{1-\delta_2} + \frac{\rho_U a}{1-\delta_2}2^{1-\delta_2}\right) \sum_{{\ell}=\bar{k}+1}^{k} \, 
    \frac{ \exp\left( \frac{\rho_U a}{1-\delta_2}{\ell}^{1-\delta_2} \right)}{{\ell}^{2\delta_2-2\delta_1}}.
\end{align}
Note that from Lemma~\ref{Lemma:IntBound} we have
\begin{align}
    \sum_{{\ell}=\bar{k}+1}^{k} \,  \frac{ \exp\left( \frac{\rho_U a}{1-\delta_2}{\ell}^{1-\delta_2} \right)}{{\ell}^{2\delta_2-2\delta_1}}
    \, \leq \, \int_{\bar{k}}^{k+1} \,  \frac{ \exp\left( \frac{\rho_U a}{1-\delta_2}{\ell}^{1-\delta_2} \right)}{{\ell}^{2\delta_2-2\delta_1}} \, d{\ell}.
\end{align}
Now it follows from the proof of Lemma \ref{Lemma:BaiGeorge} (see \eqref{eq:intBoudOnexpY1}) that for $\bar{k} = \ceil[\Bigg]{\left( \frac{2\delta_2-2\delta_1}{\rho_Ua}\right)^{\frac{1}{1-\delta_2}}}$, we have
\begin{align}
     &\int_{\bar{k}}^{k+1} \,  \frac{ \exp\left( \frac{\rho_U a}{1-\delta_2}t^{1-\delta_2} \right)}{t^{2\delta_2-2\delta_1}} \, dt 
   \leq \frac{2\delta_2-2\delta_1}{\rho_U a \delta_2 }
    \left( \frac{ \exp\left( \frac{\rho_U a}{1-\delta_2}t^{1-\delta_2} \right)}{t^{\delta_2-2\delta_1}} \right) \Bigg|_{\bar{k}}^{k+1} \\
    &\qquad \qquad \qquad = \frac{2\delta_2-2\delta_1}{\rho_U a \delta_2 }
    \left( \frac{ \exp\left( \frac{\rho_U a}{1-\delta_2}(k+1)^{1-\delta_2} \right)}{(k+1)^{\delta_2-2\delta_1}} - \frac{ \exp\left( \frac{\rho_U a}{1-\delta_2}\bar{k}^{1-\delta_2} \right)}{\bar{k}^{\delta_2-2\delta_1}} \right).
\end{align}
We further obtain
\begin{align}
\begin{split}
    &C_{\xi} \,\exp\left(- \frac{\rho_U a}{1-\delta_2}(k+1)^{1-\delta_2} + \frac{\rho_U a}{1-\delta_2}2^{1-\delta_2}\right) \sum_{{\ell}=\bar{k}+1}^{k} \, 
    \frac{ \exp\left( \frac{\rho_U a}{1-\delta_2}{\ell}^{1-\delta_2} \right)}{{\ell}^{2\delta_2-2\delta_1}}\\
    &    \leq
    C_{\xi} \,\exp\left(- \frac{\rho_U a}{1-\delta_2}(k+1)^{1-\delta_2} + \frac{\rho_U a}{1-\delta_2}2^{1-\delta_2}\right) \\
    &\qquad\qquad\qquad \qquad \times \, \frac{2\delta_2-2\delta_1}{\rho_U a \delta_2 } 
    \left( \frac{ \exp\left( \frac{\rho_U a}{1-\delta_2}(k+1)^{1-\delta_2} \right)}{(k+1)^{\delta_2-2\delta_1}} - \frac{ \exp\left( \frac{\rho_U a}{1-\delta_2}\bar{k}^{1-\delta_2} \right)}{\bar{k}^{\delta_2-2\delta_1}} \right)
\end{split}\\
&=  \frac{2C_{\xi} \left(\delta_2-\delta_1\right)}{\rho_U a \delta_2 }\, 
\frac{\exp\left(\frac{\rho_U a}{1-\delta_2}2^{1-\delta_2} \right)}{\exp\left(\frac{\rho_U a}{1-\delta_2}(k+1)^{1-\delta_2} \right)}
   \left( \frac{ \exp\left( \frac{\rho_U a}{1-\delta_2}(k+1)^{1-\delta_2} \right)}{(k+1)^{\delta_2-2\delta_1}} - \frac{ \exp\left( \frac{\rho_U a}{1-\delta_2}\bar{k}^{1-\delta_2} \right)}{\bar{k}^{\delta_2-2\delta_1}} \right) \\
&= \frac{2C_{\xi} \left(\delta_2-\delta_1\right)}{\rho_U a \delta_2 }\, 
   \left( \frac{ \exp\left(\frac{\rho_U a}{1-\delta_2}2^{1-\delta_2} \right)}{(k+1)^{\delta_2-2\delta_1}} - \frac{\exp\left(\frac{\rho_U a}{1-\delta_2}2^{1-\delta_2} \right)}{\exp\left(\frac{\rho_U a}{1-\delta_2}(k+1)^{1-\delta_2} \right)}\frac{ \exp\left( \frac{\rho_U a}{1-\delta_2}\bar{k}^{1-\delta_2} \right)}{\bar{k}^{\delta_2-2\delta_1}} \right)   \\
  &\leq  \frac{2C_{\xi} \left(\delta_2-\delta_1\right)}{\rho_U a \delta_2 }\, 
   \left( \frac{ \exp\left(\frac{\rho_U a}{1-\delta_2}2^{1-\delta_2} \right)}{(k+1)^{\delta_2-2\delta_1}} \right).\label{eq:2nd_bound}
\end{align}
Therefore, using~\eqref{eq:theta_bnd_final},~\eqref{eq:split}, and~\eqref{eq:2nd_bound}, we rewrite \eqref{eq:FpbarW0ab1} as
\begin{align}
    \begin{split}
     &F(p_{t_{k+1}}(\bar{\bm{w}})) \leq F(p_{t_{0}}(\bar{\bm{w}}))\exp\left(-\rho_U\sum_{{\ell}=0}^k\alpha_{\ell}\right) + C_\theta\,\exp\left(- \frac{\rho_U a}{1-\delta_2}(k+1)^{1-\delta_2}\right) \\
    &\qquad + \sum_{{\ell}=0}^{\bar{k}} \, \frac{C_{\xi}}{\ell^{2\delta_2-2\delta_1}}\exp\left(-\rho_U\sum_{i={\ell}+1}^k\alpha_i\right)  +   \sum_{{\ell}=\bar{k}+1}^{k} \, \frac{C_{\xi}}{\ell^{2\delta_2-2\delta_1}}\exp\left(-\rho_U\sum_{i={\ell}+1}^k\alpha_i\right)
    \end{split}\\
    \begin{split}
    & \qquad\leq F(p_{t_{0}}(\bar{\bm{w}}))\exp\left(-\rho_U\sum_{{\ell}=0}^k\alpha_{\ell}\right) + C_\theta\,\exp\left(- \frac{\rho_U a}{1-\delta_2}(k+1)^{1-\delta_2}\right) \\
    & \qquad+ \sum_{{\ell}=0}^{\bar{k}} \, \frac{C_{\xi}}{\ell^{2\delta_2-2\delta_1}}\exp\left(-\rho_U\sum_{i={\ell}+1}^k\alpha_i\right) +
    \frac{2C_{\xi} \left(\delta_2-\delta_1\right)}{\rho_U a \delta_2 }\, 
   \left( \frac{ \exp\left(\frac{\rho_U a}{1-\delta_2}2^{1-\delta_2} \right)}{(k+1)^{\delta_2-2\delta_1}} \right)
   \end{split}\\
   \begin{split}
       &\qquad = F(p_{t_{0}}(\bar{\bm{w}}))\exp\left(-\rho_U\sum_{{\ell}=0}^k\alpha_{\ell}\right) + C_\theta\,\exp\left(- \frac{\rho_U a}{1-\delta_2}(k+1)^{1-\delta_2}\right)\\
        & \qquad + \sum_{{\ell}=0}^{\bar{k}} \, \frac{C_\xi\exp\left(\rho_U\sum_{i=0}^{\ell}\alpha_i\right)}{\ell^{2\delta_2-2\delta_1}} \exp\left(-\rho_U\sum_{i=0}^k\alpha_i\right) 
       +
    \frac{2C_{\xi} \left(\delta_2-\delta_1\right)}{\rho_U a \delta_2 }\, 
   \left( \frac{ \exp\left(\frac{\rho_U a}{1-\delta_2}2^{1-\delta_2} \right)}{(k+1)^{\delta_2-2\delta_1}} \right)
   \end{split}\\
   \begin{split}
       &\qquad = \left( F(p_{t_{0}}(\bar{\bm{w}})) + \sum_{{\ell}=0}^{\bar{k}} \, \frac{C_\xi\exp\left(\rho_U\sum_{i=0}^{\ell}\alpha_i\right)}{\ell^{2\delta_2-2\delta_1}} \right) \exp\left(-\rho_U\sum_{{\ell}=0}^k\alpha_{\ell}\right)  \\
       & \qquad  +C_\theta\,\exp\left(- \frac{\rho_U a}{1-\delta_2}(k+1)^{1-\delta_2}\right)+
    \frac{2C_{\xi} \left(\delta_2-\delta_1\right)}{\rho_U a \delta_2 }\, 
   \left( \frac{ \exp\left(\frac{\rho_U a}{1-\delta_2}2^{1-\delta_2} \right)}{(k+1)^{\delta_2-2\delta_1}} \right).
   \end{split}\label{eq:FpbarW1}
\end{align}
Note that 
\begin{align}
    \sum_{{\ell}=0}^{\bar{k}} \, \frac{C_{\xi}\exp\left(\rho_U\sum_{i=0}^{\ell}\alpha_i\right)}{{\ell}^{2\delta_2-2\delta_1}} 
    &=
    \sum_{{\ell}=0}^{\bar{k}} \, \frac{C_{\xi}\exp\left(\rho_U a \right)\exp\left(\rho_U\sum_{i=1}^{\ell}\alpha_i\right)}{{\ell}^{2\delta_2-2\delta_1}}
    \\
    &\leq \sum_{{\ell}=0}^{\bar{k}} \, \frac{C_{\xi}\exp\left(\rho_U a \right)\exp\left(\frac{a\rho_U}{1-\delta_2}\left({\ell}^{1-\delta_2}\right)\right)}{{\ell}^{2\delta_2-2\delta_1}},
\end{align}
where the last inequality follows from 
\begin{align}
    \sum_{i=1}^{\ell}\alpha_i \leq \int_{0}^{\ell} \frac{a}{(x+1)^{\delta_2}} \, dx &= 
    \frac{a}{1-\delta_2}\left(({\ell}+1)^{1-\delta_2}-1\right)
    \\&\leq
    \frac{a}{1-\delta_2}\left({\ell}^{1-\delta_2}+1^{1-\delta_2}-1\right)
    =
    \frac{a}{1-\delta_2}\left({\ell}^{1-\delta_2}\right).
\end{align}
Therefore \eqref{eq:FpbarW1} can be written as
\begin{align}
    \begin{split}
       &F(p_{t_{k+1}}(\bar{\bm{w}})) \leq \left( F(p_{t_{0}}(\bar{\bm{w}})) + \sum_{{\ell}=0}^{\bar{k}} \, \frac{C_{\xi}\exp\left(\rho_U a \right)\exp\left(\frac{a\rho_U}{1-\delta_2}\left({\ell}^{1-\delta_2}\right)\right)}{{\ell}^{2\delta_2-2\delta_1}} \right) \exp\left(-\rho_U\sum_{{\ell}=0}^k\alpha_{\ell}\right)   \\
       &  +C_\theta\,\exp\left(- \frac{\rho_U a}{1-\delta_2}(k+1)^{1-\delta_2}\right) + \frac{2C_{\xi} \left(\delta_2-\delta_1\right)}{\rho_U a \delta_2 }\, 
   \exp\left(\frac{\rho_U a}{1-\delta_2}2^{1-\delta_2} \right) \frac{1}{(k+1)^{\delta_2-2\delta_1}}.
   \end{split}\label{eq:FpbarW2case1a}
\end{align}
Define $C_{\rho}$ as
\begin{align}\label{eq:Crho}
    C_{\rho} = \frac{2 \left(\delta_2-\delta_1\right)}{\rho_U \delta_2 }\, 
   \exp\left(\frac{\rho_U a}{1-\delta_2}2^{1-\delta_2} \right).
\end{align}
Now \eqref{eq:FpbarW2case1a} can be written as
\begin{align}
    \begin{split}
       F(p_{t_{k+1}}(\bar{\bm{w}}))& \leq \left( F(p_{t_{0}}(\bar{\bm{w}})) + \sum_{{\ell}=0}^{\bar{k}} \, \frac{C_{\xi}\exp\left(\rho_U a \right)\exp\left(\frac{a\rho_U}{1-\delta_2}\left({\ell}^{1-\delta_2}\right)\right)}{{\ell}^{2\delta_2-2\delta_1}} \right) \exp\left(-\rho_U\sum_{{\ell}=0}^k\alpha_{\ell}\right)  \\
       & +C_\theta\,\exp\left(- \frac{\rho_U a}{1-\delta_2}(k+1)^{1-\delta_2}\right)   + \frac{C_{\xi} C_{\rho}}{a} \frac{1}{(k+1)^{\delta_2-2\delta_1}}.
   \end{split}\label{eq:FpbarW2case1}
\end{align}
From \eqref{eq:expAlphaK} we have $a = \displaystyle \left(\frac{C_a}{n^{3\gamma}}\right)^{\frac{1}{3}}$ and from \eqref{eq:Cxi0} we have $C_{\xi} = a \bar{C}_{\xi}$, where 
\begin{align}\label{eq:CbarXi}
    \bar{C}_{\xi} = \bigg(
    \frac{2 C_a^{\frac{2}{3}}\bar{L}^4C_{\bar{\bm{w}}}}{n^{2\gamma}}
    +\frac{2 C_a^{\frac{2}{3}} \bar{L}^2L^2W_2}{n^{2\gamma}} + {L^2W_2} +\frac{2 C_a^{\frac{1}{3}} \bar{L}^2 d_w}{ n^{\gamma}} \bigg).
\end{align}
Thus it follows from \eqref{eq:FpbarW2case1} that 
\begin{align}
    \begin{split}
       F(p_{t_{k+1}}(\bar{\bm{w}}))& \leq \left( F(p_{t_{0}}(\bar{\bm{w}})) + \sum_{{\ell}=0}^{\bar{k}} \, \frac{C_{\xi}\exp\left(\rho_U a \right)\exp\left(\frac{a\rho_U}{1-\delta_2}\left({\ell}^{1-\delta_2}\right)\right)}{{\ell}^{2\delta_2-2\delta_1}} \right)\exp\left(-\rho_U\sum_{{\ell}=0}^k\alpha_{\ell}\right)   \\
       & +C_\theta\,\exp\left(- \frac{\rho_U a}{1-\delta_2}(k+1)^{1-\delta_2}\right)   + \bar{C}_{\xi} C_{\rho} \frac{1}{(k+1)^{\delta_2-2\delta_1}}.
   \end{split}\label{eq:FpbarW3case1}
\end{align}



Considering the term $L^2W_2$ in~\eqref{eq:CbarXi}, recall
$W_2 = \frac{{2n^2a\left(  \frac{2d_w}{\left(1-b \lambda_2(\mathcal{L})\right)^{\frac{1}{2}}}  + n a \mu_g   \right)}(\delta_2-\delta_1)  }{b^2\lambda_2(\mathcal{L})^2\delta_1} \exp{\left( W_1 2^{1-\delta_1}\right)}$ from Theorem~\ref{Theorem:Consensus},
which, together with $a$ in~\eqref{eq:init_a}, leads to
\begin{align}
W_2 &= \frac{(\delta_2-\delta_1)\exp{\left( W_1 2^{1-\delta_1}\right)}  }{b^2\lambda_2(\mathcal{L})^2\delta_1} \left(\frac{4d_w}{n^{{\gamma-2}}{\left(1-b \lambda_2(\mathcal{L})\right)^{\frac{1}{2}}}} \left(\frac{\rho_U (3\delta_2-1)}{25 L^4 \delta_2} \right)^{\frac{1}{3}}+\frac{2\mu_g    }{n^{2\gamma-3}} \left(\frac{\rho_U (3\delta_2-1)}{25 L^4 \delta_2} \right)^{\frac{2}{3}}\right)\\
&= \frac{(\delta_2-\delta_1)\exp{\left( W_1 2^{1-\delta_1}\right)}  }{ n^{{\gamma-2}} b^2\lambda_2(\mathcal{L})^2\delta_1} \left( \frac{4d_w}{{\left(1-b \lambda_2(\mathcal{L})\right)^{\frac{1}{2}}}} \left(\frac{\rho_U (3\delta_2-1)}{25 L^4 \delta_2} \right)^{\frac{1}{3}}+\frac{2\mu_g    }{n^{{\gamma-1}}} \left(\frac{\rho_U (3\delta_2-1)}{25 L^4 \delta_2} \right)^{\frac{2}{3}}\right)\\
&= \frac{\bar{C}_{_{W_2}}}{n^{{\gamma-2}}},\label{eq:W2bound}
\end{align}
where 
\begin{align}
    \bar{C}_{_{W_2}} = \frac{(\delta_2-\delta_1)\exp{\left( W_1 2^{1-\delta_1}\right)}  }{ b^2\lambda_2(\mathcal{L})^2\delta_1} \left( \frac{4d_w}{{\left(1-b \lambda_2(\mathcal{L})\right)^{\frac{1}{2}}}} \left(\frac{\rho_U (3\delta_2-1)}{25 L^4 \delta_2} \right)^{\frac{1}{3}}+\frac{2\mu_g    }{n^{{\gamma-1}}} \left(\frac{\rho_U (3\delta_2-1)}{25 L^4 \delta_2} \right)^{\frac{2}{3}}\right).
\end{align}
Note that when $\gamma>2$, $\bar C_{W_2}$ is bounded as $n$ increases. 
Substituting~\eqref{eq:W2bound} into~\eqref{eq:CbarXi} yields
\begin{align}\label{eq:CbarXi01}
    \bar{C}_{\xi} = \bigg(
    \frac{2 C_a^{\frac{2}{3}}\bar{L}^4C_{\bar{\bm{w}}}}{n^{2\gamma}}
    +\frac{2 C_a^{\frac{2}{3}} \bar{L}^2L^2\bar C_{W_2}}{n^{{3\gamma-2}}} + \frac{L^2 \bar{C}_{_{W_2}}}{n^{{\gamma-2}}} +\frac{2 C_a^{\frac{1}{3}} \bar{L}^2 d_w}{ n^{\gamma}} \bigg).
\end{align}
Now substituting~\eqref{eq:CbarXi01} into~\eqref{eq:FpbarW3case1}, we have 
\begin{align}
    \begin{split}
       &F(p_{t_{k+1}}(\bar{\bm{w}})) \leq \left( F(p_{t_{0}}(\bar{\bm{w}})) + \sum_{{\ell}=0}^{\bar{k}} \, \frac{C_{\xi}\exp\left(\rho_U a \right)\exp\left(\frac{a\rho_U}{1-\delta_2}\left({\ell}^{1-\delta_2}\right)\right)}{{\ell}^{2\delta_2-2\delta_1}} \right)\exp\left(-\rho_U\sum_{{\ell}=0}^k\alpha_{\ell}\right)   \\
       &\quad  + \left(
    \frac{2 C_a^{\frac{2}{3}}\bar{L}^4C_{\bar{\bm{w}}} C_{\rho}}{n^{2\gamma}}
    + \frac{2 C_a^{\frac{2}{3}} \bar{L}^2L^2 \bar{C}_{_{W_2}}C_{\rho}}{n^{{3\gamma-2}}} + \frac{L^2\bar{C}_{_{W_2}}C_{\rho}}{n^{{\gamma-2}}} +\frac{2 C_a^{\frac{1}{3}} \bar{L}^2 d_w C_{\rho}}{ n^{\gamma}} \right)  \frac{1}{(k+1)^{\delta_2-2\delta_1}} \\
    &\qquad \qquad\qquad\qquad \qquad\qquad\qquad\qquad + C_\theta\,\exp\left(- \frac{\rho_U a}{1-\delta_2}(k+1)^{1-\delta_2}\right).
   \end{split}\label{eq:FpbarW4case1}
\end{align}
Note that the first and the third terms in \eqref{eq:FpbarW4case1} are exponentially decaying while the second term is polynomial in $k$. However the polynomial term decreases with the number of agents, $n$. We further rewrite \eqref{eq:FpbarW4case1} as
\begin{align}
    \begin{split}
       F(p_{t_{k+1}}(\bar{\bm{w}})) \leq \left( F(p_{t_{0}}(\bar{\bm{w}})) + 
       \bar{C}_{_{F_1}}
       \right)\exp\left(-\rho_U\sum_{{\ell}=0}^k\alpha_{\ell}\right)  &+ 
       \frac{1}{n^{{\gamma-2}}}
       \frac{\bar{C}_{_{F_2}}}{(k+1)^{\delta_2-2\delta_1}} \\&+ 
       \bar{C}_{_{F_3}} \,\exp\left(- \frac{\rho_U a}{1-\delta_2}(k+1)^{1-\delta_2}\right)
   \end{split}\label{eq:FpbarW5case1}
\end{align}
where 
\begin{align}
    \bar{C}_{_{F_1}} &= \sum_{{\ell}=0}^{\breve{k}} \, \frac{C_{\xi}\exp\left(\rho_U a \right)\exp\left(\frac{a\rho_U}{1-\delta_2}\left({\ell}^{1-\delta_2}\right)\right)}{{\ell}^{2\delta_2-2\delta_1}},\qquad\qquad
    \breve{k} = \ceil[\Bigg]{\left( \frac{2\delta_2-2\delta_1}{\rho_Ua}\right)^{\frac{1}{1-\delta_2}}}\label{eq:CF1}
    \\
    \bar{C}_{_{F_2}} &= \left(
    \frac{2 C_a^{\frac{2}{3}}\bar{L}^4C_{\bar{\bm{w}}} C_{\rho}}{n^{{\gamma+2}}}
    + \frac{2 C_a^{\frac{2}{3}} \bar{L}^2L^2 \bar{C}_{_{W_2}}C_{\rho}}{n^{2\gamma}} + L^2\bar{C}_{_{W_2}}C_{\rho} +\frac{2 C_a^{\frac{1}{3}} \bar{L}^2 d_w C_{\rho}}{n^{{2}}} \right)\label{eq:CF2}
    \\
    \begin{split}
    \bar{C}_{_{F_3}} &= W_3(2a^3\bar{L}^2L^2+L^2a) \exp\left( \frac{\rho_U a(2-\delta_2)}{1-\delta_2}\right)\Bigg(\sum_{{\ell}=0}^{\breve \ell} \,
\exp\left( \frac{\rho_U a}{1-\delta_2}{\ell}^{1-\delta_2} -W_1\ell^{1-\delta_1}\right) \\
&\qquad\qquad\qquad\qquad\qquad\qquad
\qquad\qquad\qquad+ \kappa^{-\frac{1}{1-\delta_1}}\frac{1}{1-\delta_1} \Gamma\left(\frac{1}{1-\delta_1}\right)\Bigg) \end{split}\label{eq:CF3}\\
\kappa &= -\left(\frac{\rho_U a}{1-\delta_2}{\breve{\ell}}^{\delta_1-\delta_2} -W_1\right),\qquad \qquad \breve{\ell} = \ceil[\Bigg]{\left( \frac{\rho_Ua}{(1-\delta_2)W_1}\right)^{\frac{1}{\delta_2-\delta_1}}}\\
    {C}_{\xi} &= a\bigg(
    \frac{2 C_a^{\frac{2}{3}}\bar{L}^4C_{\bar{\bm{w}}}}{n^{2\gamma}}
    +\frac{2 C_a^{\frac{2}{3}} \bar{L}^2L^2W_2}{n^{2\gamma}} + {L^2W_2} +\frac{2 C_a^{\frac{1}{3}} \bar{L}^2 d_w}{ n^{\gamma}} \bigg)\label{eq:CFCxi}
    \\
    C_{\rho} &= \frac{2 \left(\delta_2-\delta_1\right)}{\rho_U \delta_2 }\, 
   \exp\left(\frac{\rho_U a}{1-\delta_2}2^{1-\delta_2} \right)\label{eq:CFCrho}
   \\
   C_a &= \frac{\rho_U (3\delta_2-1)}{25 L^4 \delta_2} \qquad \Rightarrow \qquad a = \left(\frac{\rho_U (3\delta_2-1)}{25 n^{3\gamma} L^4 \delta_2}\right)^{\frac{1}{3}} \label{eq:CFCa}
   \\
   C_{\bar{\bm{w}}} &= \max \left\{  \mathbb{E} \left[ {\|\bar {\bm w}(t_{0})\|^2_2} \right], \frac{2c_1+\frac{4}{\rho_U}(c_2+d_1)}{1-\frac{24n^4L^4\delta_2a^3}{\rho_U(3\delta_2-1)}} \right\}\label{eq:CFCwbar}
   \\
    \bar{C}_{_{W_2}} &= \frac{(\delta_2-\delta_1)\exp{\left( W_1 2^{1-\delta_1}\right)}  }{ b^2\lambda_2(\mathcal{L})^2\delta_1} \left(4d_w \left(\frac{\rho_U (3\delta_2-1)}{25 L^4 \delta_2} \right)^{\frac{1}{3}}+\frac{2\mu_g    }{n^{\gamma-1}} \left(\frac{\rho_U (3\delta_2-1)}{25 L^4 \delta_2} \right)^{\frac{2}{3}}\right)\label{eq:CFCW2}
\end{align}
while $W_1$, $W_2$ and $W_3$ are defined in \eqref{eq:Thrm1W1}, \eqref{eq:Thrm1W2} and \eqref{eq:Thrm1W3}, respectively, and $c_1$, $c_2$ and $d_1$ are given in Lemma~\ref{lem:bnd2}. This concludes the proof of Theorem~\ref{Theorem:KLconvergence}. 
\begin{flushright}
$\blacksquare$
\end{flushright}

\subsection{Lemmas used in the proof of Theorem 2}
\begin{lemma}\label{lem:equality} For $f_t(\bar{\bm{w}})$, $\tilde f_t(\bar{\bm{w}})$ and $\kappa(\bar{\bm{w}})$ defined in \eqref{eq:ft}, \eqref{eq:tildeht} and \eqref{eq:kappa}, respectively, we have
\begin{equation}\label{eq:LemmafBound}
    \int \kappa(\bar{\bm{w}}) \left( \nabla_ {\bar{\bm{w}}} \cdot \left[ f_t(\bar{\bm{w}}) \right]\right)\,d\bar{\bm{w}} = -\int \nabla \log\left(\frac{p_t(\bar{\bm{w}})}{p^*(\bar{\bm{w}})}\right)^\top f_t(\bar{\bm{w}})\,d\bar{\bm{w}},
\end{equation}
and 
\begin{equation}\label{eq:LemmaftildeBound}
    \int \kappa(\bar{\bm{w}}) \left( \nabla_ {\bar{\bm{w}}} \cdot \left[ \tilde{f}_t(\bar{\bm{w}}) \right]\right)\,d\bar{\bm{w}} = -\int \nabla \log\left(\frac{p_t(\bar{\bm{w}})}{p^*(\bar{\bm{w}})}\right)^\top \tilde{f}_t(\bar{\bm{w}})\,d\bar{\bm{w}}.
\end{equation}
\end{lemma}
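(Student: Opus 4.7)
The plan is to prove both identities via integration by parts (equivalently, the divergence theorem), using the fact that the test function $\kappa(\bar{\bm{w}}) = \log(p_t(\bar{\bm{w}})/p^*(\bar{\bm{w}})) + 1$ has gradient equal to $\nabla \log(p_t(\bar{\bm{w}})/p^*(\bar{\bm{w}}))$, since the constant $+1$ annihilates under differentiation.

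\medskip

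\noindent\textbf{Step 1.} For any smooth vector field $h:\mathbb{R}^{d_w}\to\mathbb{R}^{d_w}$ that decays sufficiently fast at infinity, the divergence theorem applied on a ball of radius $R$ and sent to $R\to\infty$ yields
\begin{equation}
\int \kappa(\bar{\bm{w}})\,\bigl(\nabla_{\bar{\bm{w}}}\cdot h(\bar{\bm{w}})\bigr)\,d\bar{\bm{w}} \;=\; -\int \nabla_{\bar{\bm{w}}}\kappa(\bar{\bm{w}})^\top h(\bar{\bm{w}})\,d\bar{\bm{w}},
\end{equation}
provided $\kappa(\bar{\bm{w}})\,\|h(\bar{\bm{w}})\|_2 \to 0$ fast enough on the sphere of radius $R$.

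\medskip

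\noindent\textbf{Step 2.} Compute $\nabla_{\bar{\bm{w}}}\kappa(\bar{\bm{w}}) = \nabla_{\bar{\bm{w}}}\log\!\bigl(p_t(\bar{\bm{w}})/p^*(\bar{\bm{w}})\bigr)$ and substitute $h = f_t$ (respectively, $h = \tilde f_t$) into the identity of Step~1. This directly yields both \eqref{eq:LemmafBound} and \eqref{eq:LemmaftildeBound}.

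\medskip

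\noindent\textbf{Step 3 (decay justification).} For $f_t = p_t(\bar{\bm{w}})\nabla\log(p_t/p^*)$, one has $f_t(\bar{\bm{w}}) \to 0$ at infinity because $p_t$ is a probability density (hence integrable and decaying), and $\kappa$ grows only logarithmically in $p_t/p^*$; under the standing smoothness/Lipschitz hypotheses on $\nabla \bm U$ together with the assumption that $p^*\propto\exp(-\bm U)$ has bounded second moment, the boundary integral on $\partial B_R$ vanishes as $R\to\infty$. For $\tilde f_t$, the same reasoning applies because $\tilde f_t$ carries an overall factor of $p_t(\bar{\bm{w}},Y_k)$ which is integrable, and the inner integrand $(\nabla\bm U(\bar{\bm{w}})-\nabla\bm U(Y_{k,1})-\zeta)$ grows at most linearly in $\bar{\bm{w}}$ by the Lipschitz bound \eqref{Eq:Lipz1}.

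\medskip

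The only genuine obstacle is the decay/integrability step, which is standard but must be invoked cleanly; everything else is a one-line computation. This is precisely the alternative form of Lemma~10.4.1 in Ambrosio--Gigli--Savar\'e \cite{ambrosio2008gradient} cited in the text, so in the actual writeup I would simply point to that reference after doing Step~2 explicitly, and remark that the tail decay of $p_t$ (inherited from the SDE \eqref{eqp4} starting from an initial distribution with finite second moment) validates the vanishing boundary terms.
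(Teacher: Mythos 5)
Your proposal is correct and follows essentially the same route as the paper: an integration by parts (the paper does it coordinate-wise via the product rule for divergences, you via the divergence theorem on expanding balls), the observation that $\nabla\kappa = \nabla\log(p_t/p^*)$, and a vanishing-boundary-term condition justified by decay of $p_t$ at infinity, which the paper likewise treats as a standing technical assumption with the same references. No substantive difference.
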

\begin{proof}This lemma is similar to \cite[Lemma 10.4.1]{ambrosio2008gradient}. 
Here we use the identity for $x \in \mathbb{R}^d$, $b(x): \mathbb{R}^d \mapsto \mathbb{R}$ and $\mathbf{a}(x): \mathbb{R}^d \mapsto \mathbb{R}^d$:
\begin{equation}
    \nabla_x \cdot \left[b(x) \mathbf{a}(x)\right] = \left(\nabla_x b(x) \right)^\top \mathbf{a}(x) + b(x) \left( \nabla_x \cdot \left[\mathbf{a}(x)\right] \right).
\end{equation}
Thus we have
\begin{equation}
    \kappa(\bar{\bm{w}}) \left( \nabla_{\bar{\bm{w}}}  \cdot  f_t(\bar{\bm{w}}) \right) = \nabla_{\bar{\bm{w}}} \cdot \left[\kappa(\bar{\bm{w}})   f_t(\bar{\bm{w}})\right] - \left( \nabla_{\bar{\bm{w}}} \kappa(\bar{\bm{w}})\right)^\top  f_t(\bar{\bm{w}}).
\end{equation} 
Note that
\begin{align}
    &\int \, \nabla_ {\bar{\bm{w}}} \cdot \left[\kappa(\bar{\bm{w}})f_t(\bar{\bm{w}})\right]\,d\bar{\bm{w}} = \int\ldots\int \, \sum_{i=1}^{d_w} \, \frac{\partial}{\partial \bar{\bm{w}}_i} \left(\kappa(\bar{\bm{w}})f_t(\bar{\bm{w}})\right)\,d\bar{\bm{w}}_1\ldots d\bar{\bm{w}}_{d_w}\\
    &\qquad \qquad = \sum_{i=1}^{d_w} \,\int\ldots\int \,  \frac{\partial}{\partial \bar{\bm{w}}_i} \left(\kappa(\bar{\bm{w}})f_t(\bar{\bm{w}})\right)\,d\bar{\bm{w}}_1\ldots d\bar{\bm{w}}_{i-1}\,d\bar{\bm{w}}_{i}\, d\bar{\bm{w}}_{i+1}\ldots d\bar{\bm{w}}_{d_w}\\
    &\qquad \qquad = \sum_{i=1}^{d_w} \,\int\ldots\int \,   \left(\left(\kappa(\bar{\bm{w}})f_t(\bar{\bm{w}})\right)|_{\bar{\bm{w}}_{i} = -\infty}^{\bar{\bm{w}}_{i} = +\infty}\right)\,d\bar{\bm{w}}_1\ldots d\bar{\bm{w}}_{i-1}  \, d\bar{\bm{w}}_{i+1}\ldots d\bar{\bm{w}}_{d_w}\\
    &\qquad \qquad = 0.
\end{align}
The last equality holds when $\left(\left(\kappa(\bar{\bm{w}})f_t(\bar{\bm{w}})\right)|_{\bar{\bm{w}}_{i} = -\infty}^{\bar{\bm{w}}_{i} = +\infty}\right) = 0$ for all $i=1,\ldots,d_w$, which is satisfied under the condition that $p_t(\bar{\bm{w}})\rightarrow0$ as $\bar{\bm{w}}_i\rightarrow\pm\infty$. 
The same technical condition has been assumed in the literature, see e.g., one of the assumptions in~\cite[Theorem 3.1]{chen2014stochastic} and the ``sufficiently fast decay at infinity'' condition in \cite[Appendix A.1]{NIPS2019_9021}.



It then follows that
\begin{equation}
    \int\kappa(\bar{\bm{w}}) \nabla_ {\bar{\bm{w}}} \cdot  \left[ f_t(\bar{\bm{w}}) \right] d\bar{\bm{w}} = -\int \nabla \kappa(\bar{\bm{w}})^\top  f_t(\bar{\bm{w}}) \, d\bar{\bm{w}}.
\end{equation} 
Similar argument can be used to prove \eqref{eq:LemmaftildeBound}. 
\end{proof}

 \begin{lemma}\label{lem:bnd2}
 Let $\bar w^*$ denotes samples from the target-distribution $p^*$, i.e., ${\bar w^*\sim p^*}$ and $\bar w^*$ satisfies
\begin{equation}
    \mathbb{E} \left[ {\|\bar w^*\|^2_2} \right] \leq c_1.
\end{equation} 
Let $\bar {\bm w}(t)$ denotes samples from the distribution $p_{t}(\bar {\bm w})$, i.e., $ \bar {\bm w}(t)\sim  p_{t}(\bar {\bm w})$. Suppose that the KL-divergence between the initial distribution $p_{t_0}(\bar {\bm w})$
and the target distribution ${p}^*$, denoted as $F(p_{t_0}(\bar{\bm{w}}))$ is bounded by $c_2$, i.e.,
\begin{equation}
    F(p_{t_0}(\bar{\bm{w}})) \leq c_2.
\end{equation}
Also, suppose that $2\delta_2-2\delta_1>1$ and $a$ in~\eqref{Eqn:AlphaBeta} is chosen such that $\frac{24n^4L^4\delta_2a^3}{\rho_U(3\delta_2-1)}<1$. Then there exists a $C_{\bar{\bm{w}}}>0$ such that $\forall k\geq 0$, 
\begin{align}
    \mathbb{E} \left[ {\|\bar {\bm w}(t_{k})\|^2_2} \right]\leq C_{\bar{\bm{w}}},
\end{align}
where 
\begin{equation}
    C_{\bar{\bm{w}}} = \max \left\{  \mathbb{E}_{\bar{\bm w}(t_{0})\sim p_{t_{0}}} \left[ {\|\bar {\bm w}(t_{0})\|^2_2} \right], \frac{2c_1+\frac{4}{\rho_U}(c_2+d_1)}{1-\frac{24n^4L^4\delta_2a^3}{\rho_U(3\delta_2-1)}} \right\} \label{eq:Cvalue}
\end{equation}
{and $d_1$ is a positive constant satisfying
\begin{equation}
    d_1 \geq \sum_{k=0}^\infty (2\alpha_k^3\bar{L}^2L^2+L^2\alpha_k)\mathbb{E}_{p(\tilde\omega(t_k))}\|\tilde\omega(t_k) \|^2_2+2\alpha^2_k{\bar L^2}d_w.
\end{equation}}
\end{lemma}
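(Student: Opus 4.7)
The plan is to prove the uniform second-moment bound by induction on $k$, using Talagrand's $T_2$ transportation inequality to link $\mathbb{E}\|\bar{\bm{w}}(t_k)\|_2^2$ with $F(p_{t_k}(\bar{\bm{w}}))$. Recall that the log-Sobolev inequality assumed for $p^*$ implies, by the Otto--Villani theorem, that $W_2^2(p,p^*) \leq \tfrac{2}{\rho_U} F(p)$ for every $p$ absolutely continuous with respect to $p^*$. Since $\bar{w}^* \sim p^*$ satisfies $\mathbb{E}\|\bar{w}^*\|_2^2 \leq c_1$, using $\|\bar{\bm{w}}\|_2^2 \leq 2\|\bar{\bm{w}}-\bar{w}^*\|_2^2 + 2\|\bar{w}^*\|_2^2$ and optimizing over couplings yields the key auxiliary bound
\begin{equation}
\mathbb{E}\|\bar{\bm{w}}(t_k)\|_2^2 \;\leq\; 2c_1 + \tfrac{4}{\rho_U}\, F(p_{t_k}(\bar{\bm{w}})).
\end{equation}
This converts the desired moment bound into a bound on the KL-divergence.

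The base case $k=0$ holds by the first argument of the $\max$ in the definition \eqref{eq:CFCwbar} of $C_{\bar{\bm{w}}}$. For the inductive step, assume $\mathbb{E}\|\bar{\bm{w}}(t_j)\|_2^2 \leq C_{\bar{\bm{w}}}$ for all $j \leq k$. Applying the one-step recursion \eqref{eq:Fbound_final} established in the course of proving Theorem~\ref{Theorem:KLconvergence} and iterating (using $\exp(-\rho_U\alpha_j) \leq 1$) gives
\begin{equation}
F(p_{t_{k+1}}(\bar{\bm{w}})) \;\leq\; F(p_{t_0}(\bar{\bm{w}})) + 2\bar{L}^4 C_{\bar{\bm{w}}} \sum_{j=0}^{k}\alpha_j^3 + \sum_{j=0}^{k}\Big[(2\alpha_j^3\bar{L}^2L^2 + L^2\alpha_j)\,\mathbb{E}\|\tilde\omega(t_j)\|_2^2 + 2\alpha_j^2\bar{L}^2 d_w\Big].
\end{equation}
Using $\bar{L} \leq nL$ and the integral estimate $\sum_{j=0}^\infty \alpha_j^3 \leq \tfrac{3\delta_2 a^3}{3\delta_2-1}$ (valid since $3\delta_2 > 1$), the middle term is bounded by $\tfrac{6n^4 L^4\delta_2 a^3}{3\delta_2-1}\,C_{\bar{\bm{w}}}$. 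The remaining sum is a convergent deterministic series, say $d_1$: the condition $2\delta_2-2\delta_1 > 1$ combined with the consensus-error bound $\mathbb{E}\|\tilde\omega(t_j)\|_2^2 = \mathcal{O}((j+1)^{-(\delta_2-2\delta_1)})$ from Theorem~\ref{Theorem:Consensus} makes $\sum_j \alpha_j \mathbb{E}\|\tilde\omega(t_j)\|_2^2$ and $\sum_j \alpha_j^3 \mathbb{E}\|\tilde\omega(t_j)\|_2^2$ summable, and $\sum_j \alpha_j^2$ is summable since $2\delta_2 > 1$.

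Substituting $F(p_{t_0}) \leq c_2$ and inserting the resulting bound into the Talagrand inequality gives
\begin{equation}
\mathbb{E}\|\bar{\bm{w}}(t_{k+1})\|_2^2 \;\leq\; 2c_1 + \tfrac{4}{\rho_U}(c_2 + d_1) + \tfrac{24 n^4 L^4 \delta_2 a^3}{\rho_U(3\delta_2-1)}\, C_{\bar{\bm{w}}}.
\end{equation}
The hypothesis $\tfrac{24 n^4 L^4 \delta_2 a^3}{\rho_U(3\delta_2-1)} < 1$ makes this a genuine contraction in $C_{\bar{\bm{w}}}$, and the explicit formula \eqref{eq:Cvalue} for $C_{\bar{\bm{w}}}$ is precisely the fixed point that closes the induction, i.e.\ the right-hand side is $\leq C_{\bar{\bm{w}}}$ exactly when $C_{\bar{\bm{w}}}$ exceeds the second argument of the $\max$.

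The main obstacle is neither the Talagrand step nor the contraction algebra, both of which are routine once the setup is in place; rather it is verifying that the ``extra'' terms aggregated into $d_1$ are in fact summable independently of $k$. This requires combining the explicit polynomial decay rate of $\mathbb{E}\|\tilde{\mathbf{w}}_k\|_2^2$ proved in Theorem~\ref{Theorem:Consensus} with the step-size exponents of Condition~\ref{Cond:AlphaBeta}, and also requires noting that the exponential transient $W_3\exp(-W_1(k+1)^{1-\delta_1})$ inside the consensus bound is summable for any $\delta_1<1$. Once these tail sums are controlled, the induction closes cleanly.
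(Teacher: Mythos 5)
Your proposal is correct and follows essentially the same route as the paper's proof: induction on $k$, the Otto--Villani/Talagrand bound $\mathbb{E}\|\bar{\bm{w}}(t_k)\|_2^2 \leq 2c_1 + \tfrac{4}{\rho_U}F(p_{t_k}(\bar{\bm{w}}))$ via optimal coupling, iteration of the one-step KL recursion with $\exp(-\rho_U\alpha_j)\leq 1$, the estimates $\bar L \leq nL$ and $\sum_j \alpha_j^3 \leq \tfrac{3\delta_2 a^3}{3\delta_2-1}$, summability of the consensus-error terms into $d_1$, and the contraction condition closing the induction at the stated fixed point.
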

\begin{proof} 
We prove the boundedness of $\mathbb{E}_{p_{t_k}(\bar {\bm w}))}\left[\|\bar {\bm w}(t_{k})\|^2_2\right]$ by induction. 
Assume that there exists a sufficiently large $C_{\bar{\bm{w}}}>0$ such that 
\begin{equation}
    \mathbb{E}_{p_{t_n}(\bar{\bm w})} \left[{\|\bar {\bm w}(t_{n})\|^2_2}\right]\leq C_{\bar{\bm{w}}}, \quad \forall \,\,n\leq k.
\end{equation}
We next show that
\begin{equation}
    \mathbb{E}_{p_{t_{k+1}}(\bar {\bm w})}\left[\|\bar {\bm w}(t_{k+1})\|^2_2\right]\leq C_{\bar{\bm{w}}}.
\end{equation}

Following the proof of~\cite[Lemma 6]{ma2019sampling}, we couple $\bar w^*$ optimally with $\bar{\bm w}(t)\sim p_t(\bar{\bm w})$, i.e., $(\bar{\bm w}(t), \bar w^*)\sim \gamma\in \Gamma_{opt}(p_t(\bar{\bm w}), p^*)$. We then obtain
\begin{align}
    \mathbb{E}_{\bar{\bm w}(t_{k+1})\sim p_{t_{k+1}}}\left[{\|\bar {\bm w}(t_{k+1})\|^2_2}\right]&=\mathbb{E}_{(\bar {\bm w}(t_{k+1}),\bar w^*)\sim \gamma} \left[{\|\bar w^*+\bar {\bm w}(t_{k+1})-\bar w^*\|^2_2}\right]\\
    &\leq 2 \mathbb{E}_{\bar w^*\sim p^*}{\|\bar w^*\|^2} + 2\mathbb{E}_{(\bar {\bm w}(t_{k+1}),\bar w^*)\sim \gamma}{\|\bar {\bm w}(t_{k+1})-\bar w^*\|^2} \\
    &\leq 2c_1+2\mathcal{W}_2^2(p_{t_{k+1}}(\bar {\bm w}),p^*)\\
    &\leq 2c_1+\frac{4}{\rho_U}F(p_{t_{k+1}}(\bar{\bm{w}})),\label{eq:wkbound}
\end{align}
where $\mathcal{W}_2(\cdot,\cdot)$ denotes the Wasserstein metric between two distributions and the last inequality holds due to~\cite[Theorem 1]{otto2000generalization}.

From our analysis in~\eqref{eq:Fbound_final}, we have
 \begin{align}
   F(p_{t_{k+1}}(\bar{\bm{w}}))&\leq \exp\left({-\rho_U\alpha_k}\right)F(p_{t_k}(\bar{\bm{w}})) 
    + \bigg( 2\alpha_k^3\bar{L}^4\mathbb{E}_{p_{t_{{k}}}(\bar{\bm{w}})}\|\bar{\bm{w}}(t_{{k}}) \|^2_2 \nonumber\\
    & +(2\alpha_k^3\bar{L}^2L^2+L^2\alpha_k)\mathbb{E}_{p(\tilde\omega(t_k))}\|\tilde\omega(t_k) \|^2_2+2\alpha_k^2{\bar L^2}d_w\bigg) \label{eq:Fbound_C0}\\
   &\leq \exp\left({-\rho_U\alpha_k}\right)F(p_{t_k}(\bar{\bm{w}})) 
    + 2\alpha_k^3\bar{L}^4 C_{\bar{\bm{w}}} \nonumber\\
    & +(2\alpha_k^3\bar{L}^2L^2+L^2\alpha_k)\mathbb{E}_{p(\tilde\omega(t_k))}\|\tilde\omega(t_k) \|^2_2+2\alpha_k^2{\bar L^2}d_w. \label{eq:Fbound_C}
 \end{align}
Let 
\begin{equation}
    g_k =(2\alpha_k^3\bar{L}^2L^2+L^2\alpha_k)\mathbb{E}_{p(\tilde\omega(t_k))}\|\tilde\omega(t_k) \|^2_2+2\alpha^2_k{\bar L^2}d_w.\label{eq:gk}
\end{equation}
We rewrite~\eqref{eq:Fbound_C} as
\begin{equation}
     F(p_{t_{j+1}}(\bar{\bm{w}}))\leq   \exp\left({-\rho_U\alpha_j}\right)F(p_{t_j}(\bar{\bm{w}})) +g_j+2\alpha_j^3\bar{L}^4C_{\bar{\bm{w}}},\quad \forall j\leq k,
\end{equation}
from which we further obtain
\begin{align}
     F(p_{t_{k+1}}(\bar{\bm{w}})) &\leq   \exp\left({-\rho_U\alpha_k}\right)F(p_{t_k}(\bar{\bm{w}})) +g_k+2\alpha_k^3\bar{L}^4C_{\bar{\bm{w}}} \\
     &\leq \exp\left({-\rho_U\alpha_k}\right) \left( \exp\left({-\rho_U\alpha_{k-1}}\right)F(p_{t_{k-1}}(\bar{\bm{w}})) + g_{k-1} +2\alpha_{k-1}^3\bar{L}^4C_{\bar{\bm{w}}} \right)\nonumber\\
     &\quad +g_k+2\alpha_k^3\bar{L}^4C_{\bar{\bm{w}}}\\
     &\leq \exp\left({-\rho_U\left(\alpha_k+\alpha_{k-1}\right)}\right)F(p_{t_{k-1}}(\bar{\bm{w}})) + g_{k-1}+g_k +2(\alpha_{k-1}^3+\alpha_k^3)\bar{L}^4C_{\bar{\bm{w}}} \\
  &\leq \exp{\left(-\rho_U\sum_{i=0}^k\alpha_i\right)}F(p_{t_0}) + \sum_{i=0}^k g_i + 2C_{\bar{\bm{w}}}n^4 L^4\sum_{i=0}^k \alpha^3_i \\
  &\leq F(p_{t_0}) + \sum_{i=0}^\infty g_i + 2C_{\bar{\bm{w}}}n^4L^4\sum_{i=0}^k \alpha^3_i\\
  &\leq c_2+ \sum_{i=0}^\infty g_i + 2C_{\bar{\bm{w}}}n^4L^4\sum_{i=0}^k \alpha^3_i.\label{eq:Ftkbound}
\end{align}
Here we used the relation $\bar{L}\leq n L$. Since $\alpha_k^3 = \frac{a^3}{(k+1)^{3\delta_2}}$ and $2\delta_2>1$, it follows from~\eqref{eq:DecBound} that
\begin{equation}
    \sum_{i=0}^k  {\alpha_i^3}\leq a^3+\int_0^\infty \frac{a^3}{(t+1)^{3\delta_2}}\,dt = a^3+\frac{a^3}{3\delta_2-1} = \frac{3\delta_2}{3\delta_2-1}a^3.\label{eq:alpha3}
\end{equation}
From~\eqref{eq:gk}, we note that  $g_k\sim O\left(\frac{1}{k^{2\delta_2-2\delta_1}}\right)$ because  $\mathbb{E}_{p(\tilde \omega(t_k))}\left[\left\| \tilde{\omega}(t_k) \right\|^2_2 \right]\sim O\left(\frac{1}{k^{\delta_2-2\delta_1}}\right)$ from \eqref{eq:ConsensusRate}. Since $2\delta_2-2\delta_1>1$, $g_k$ is a summable sequence, that is, 
there exists a $d_1>0$ such that
\begin{equation}
   \sum_{i=0}^{\infty} g_i\leq d_1.\label{eq:d1} 
\end{equation} 
We obtain from~\eqref{eq:Ftkbound},~\eqref{eq:alpha3} and~\eqref{eq:d1}  that
\begin{equation}
    F(p_{t_{k+1}})\leq c_2 + d_1 +\frac{6\delta_2a^3n^4L^4}{3\delta_2-1}C_{\bar{\bm{w}}}
\end{equation}
which together with~\eqref{eq:wkbound} leads to
\begin{equation}
    \mathbb{E}_{\bar{\bm w}(t_{k+1})\sim p_{t_{k+1}}} \left[ {\|\bar {\bm w}(t_{k+1})\|^2_2} \right] \leq 2c_1+\frac{4}{\rho_U}c_2+\frac{24n^4L^4\delta_2a^3}{\rho_U(3\delta_2-1)}C_{\bar{\bm{w}}}+\frac{4}{\rho_U}d_1.
\end{equation}

Since $a$ is chosen such that \begin{equation}
    \frac{24n^4L^4\delta_2a^3}{\rho_U(3\delta_2-1)}<1,
\end{equation}
it follows that
\begin{equation}
     \mathbb{E}_{\bar{\bm w}(t_{k+1})\sim p_{t_{k+1}}} \left[ {\|\bar {\bm w}(t_{k+1})\|^2_2} \right] 
    \leq 2c_1+\frac{4}{\rho_U}c_2+\frac{24n^4L^4\delta_2a^3}{\rho_U(3\delta_2-1)}C_{\bar{\bm{w}}}+\frac{4}{\rho_U}d_1\leq C_{\bar{\bm{w}}}
\end{equation}
for any $C_{\bar{\bm{w}}}$ such that
\begin{equation}
    C_{\bar{\bm{w}}}\geq \frac{2c_1+\frac{4}{\rho_U}(c_2+d_1)}{1-\frac{24n^4L^4\delta_2a^3}{\rho_U(3\delta_2-1)}}.\label{eq:Cbound}
\end{equation}
One choice of $C_{\bar{\bm{w}}}$ is 
\begin{equation}
    C_{\bar{\bm{w}}} = \max \left\{  \mathbb{E}_{\bar{\bm w}(t_{0})\sim p_{t_{0}}} \left[ {\|\bar {\bm w}(t_{0})\|^2_2} \right], \frac{2c_1+\frac{4}{\rho_U}(c_2+d_1)}{1-\frac{24n^4L^4\delta_2a^3}{\rho_U(3\delta_2-1)}} \right\}. 
\end{equation}

\end{proof}


\section{Proof of Corollary~\ref{Corollary:kStar}}\label{Proof:Corol:kStar}

From \eqref{eq:FpbarW5case1} we have
\begin{align}
    \begin{split}
    F(p_{t_{k+1}}(\bar{\bm{w}})) &\leq
    \left( F(p_{t_{0}}(\bar{\bm{w}})) + \bar{C}_{_{F_1}} \right) \exp\left(-\rho_U\sum_{\ell=0}^k\alpha_{\ell}\right) \\
    & \qquad \qquad \qquad   + \frac{1}{n^{\gamma-2}}
       \frac{\bar{C}_{_{F_2}}}{(k+1)^{\delta_2-2\delta_1}} + \frac{\bar{C}_{_{F_3}}}{\exp\left( \frac{\rho_U a}{1-\delta_2}(k+1)^{1-\delta_2}\right)}
    \end{split}\label{eq:FpbarW5case1Cor1}
\end{align}
When $\delta_2 \in (0.5, \,\,1)$, from Lemma~\ref{Lemma:IntBound} we have 
\begin{align}
    \sum_{\ell=0}^{k} \, \alpha_{\ell}  \,&\geq\,
    \int_{0}^{k} \frac{a}{(x+1)^{\delta_2}}\, dx= \frac{a(k+1)^{1-\delta_2}}{1-\delta_2} - \frac{a}{1-\delta_2} 
\end{align}
Thus 
\begin{align}
    \exp\left(-\rho_U\sum_{{\ell}=0}^k\alpha_{\ell}\right)  \leq 
    \exp\left( \frac{a \rho_U }{1-\delta_2} \right) \exp\left(- \frac{a \rho_U }{1-\delta_2} (k+1)^{1-\delta_2} \right)
\end{align}
and \eqref{eq:FpbarW5case1Cor1} can be written as
\begin{align}
    \begin{split}
    F(p_{t_{k+1}}(\bar{\bm{w}})) &\leq
    \left( F(p_{t_{0}}(\bar{\bm{w}})) + \bar{C}_{_{F_1}} \right) \exp\left( \frac{a \rho_U }{1-\delta_2} \right)\exp\left(- \frac{a \rho_U }{1-\delta_2} (k+1)^{1-\delta_2} \right) \\
    & \qquad \qquad \qquad   + \frac{1}{n^{\gamma-2}}
       \frac{\bar{C}_{_{F_2}}}{(k+1)^{\delta_2-2\delta_1}} + \bar{C}_{_{F_3}} \exp\left( -\frac{a \rho_U }{1-\delta_2}(k+1)^{1-\delta_2}\right)
    \end{split}
\end{align}
Now define constants $Q_1$ and $Q_2$ as
\begin{align}
    Q_1 &= \left( F(p_{t_{0}}(\bar{\bm{w}})) + \bar{C}_{_{F_1}} \right) \exp\left( \frac{a \rho_U }{1-\delta_2} \right) + \bar{C}_{_{F_3}} \\
    Q_2 &= \frac{\bar{C}_{_{F_2}}}{n^{\gamma-2}}.
\end{align}
Thus we have
\begin{align}
    F(p_{t_{k+1}}(\bar{\bm{w}})) \leq
    Q_1 \exp\left(- \frac{a \rho_U }{1-\delta_2} (k+1)^{1-\delta_2} \right) +  \frac{Q_2}{(k+1)^{\delta_2-2\delta_1}}
\end{align}
We would like to find a $k$ such that 
\begin{align}
    F(p_{t_{k+1}}(\bar{\bm{w}}))
    \leq \epsilon, 
\end{align}
which is satisfied if
\begin{align}\label{eq:Q1cond}
    Q_1 \exp\left(- \frac{a \rho_U }{1-\delta_2} (k+1)^{1-\delta_2} \right) \leq \frac{\epsilon}{2} 
\end{align}
and
\begin{align}\label{eq:Q2cond}
    Q_2 \frac{1}{(k+1)^{\delta_2-2\delta_1}} \leq \frac{\epsilon}{2}. 
\end{align}
From \eqref{eq:Q1cond} we have 
\begin{align}\label{eq:Q1cond1}
    k \geq \left( \frac{1-\delta_2}{a \rho_U }\log\left( \frac{2Q_1}{\epsilon} \right) \right)^{\frac{1}{1-\delta_2}}
\end{align}
and from \eqref{eq:Q2cond} we have 
\begin{align}\label{eq:Q2cond1}
    k \geq \left( \frac{2Q_2}{\epsilon} \right)^{\frac{1}{\delta_2-2\delta_1}}
\end{align}
Therefore, for all $k \geq k^*$, we have 
\begin{align}
    F(p_{t_{k}}(\bar{\bm{w}}))
    \leq \epsilon,
\end{align}
where 
\begin{align}\label{eq:k*cond}
    k^* = \max \left\{ \left( \frac{1-\delta_2}{a \rho_U }\log\left( \frac{2Q_1}{\epsilon} \right) \right)^{\frac{1}{1-\delta_2}},\,\, \left( \frac{2Q_2}{\epsilon} \right)^{\frac{1}{\delta_2-2\delta_1}}
    \right\}
\end{align}
This concludes the proof of Corollary~\ref{Corollary:kStar}. 
\begin{flushright}
$\blacksquare$
\end{flushright}





\section{Numerical Experiments}\label{sec:SupExpmts}

\subsection{Parameter estimation for Gaussian mixture} \label{sub:GM_s}
For the centralized setting, 100 data samples are drawn from the mixture of Gaussians in Section~\ref{sub:GM}. For D-ULA, these 100 samples were randomly divided into 5 data sets of 20 samples, one for each of the five agents in the network.  Both C-ULA and D-ULA are run for 1000000 epochs using their respective batch gradients. Step-size $\alpha_k= \alpha_0/(b_1+k)^{\delta_2}$ is varied from 0.01 to 0.0001 similar to \cite{welling2011bayesian} with consensus step-size, $\beta_k= \beta_0/(b_2+k)^{\delta_1}$ in the interval [0.36, 0.24]. Figure ~\ref{fig:GM2s} shows estimated posteriors from C-ULA and the proposed approach. Posteriors estimated by the D-ULA replicate the true posterior with samples from both the modes. 
\begin{figure}
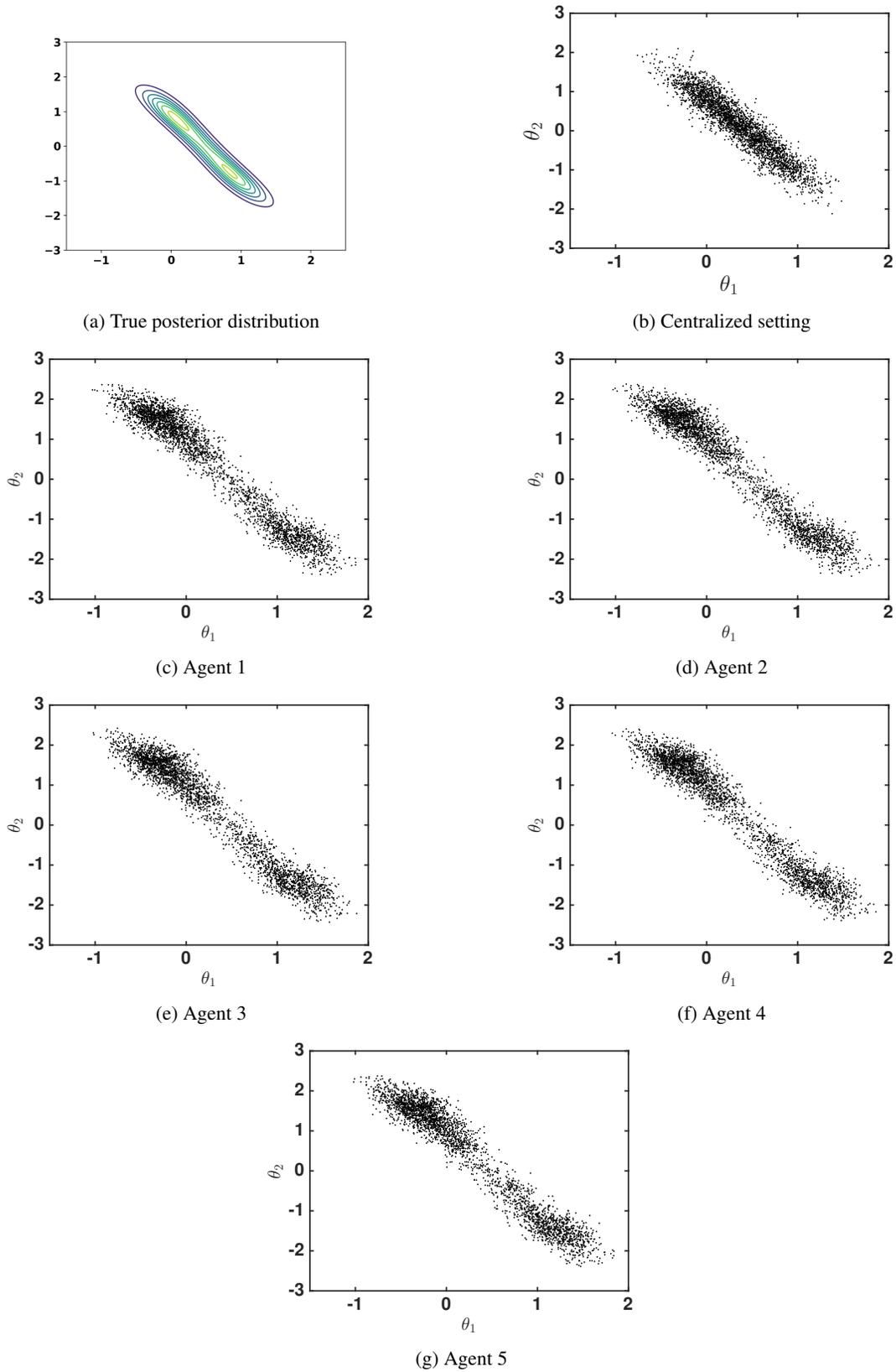

\begin{subfigure}{.5\linewidth}
\centering
\includegraphics[clip, trim=0.5cm 7cm 0.5cm 7cm, scale=.35]{Posterior_true.pdf}
\caption{True posterior distribution}
\label{fig:subs11}
\end{subfigure}%
\begin{subfigure}{.5\linewidth}
\centering
\includegraphics[scale=.33]{posterior_est.pdf}
\caption{Centralized setting}
\label{fig:subs12}
\end{subfigure}
\begin{subfigure}{.5\linewidth}
\centering
\includegraphics[scale=.33]{agent1_l.pdf}
\caption{Agent 1}
\label{fig:subs13}
\end{subfigure}%
\begin{subfigure}{.5\linewidth}
\centering
\includegraphics[scale=.33]{agent2_l.pdf}
\caption{Agent 2}
\label{fig:subs14}
\end{subfigure}
\begin{subfigure}{.5\linewidth}
\centering
\includegraphics[scale=.33]{agent3_l.pdf}
\caption{Agent 3}
\label{fig:subs15}
\end{subfigure}%
\begin{subfigure}{.5\linewidth}
\centering
\includegraphics[scale=.33]{agent4_l.pdf}
\caption{Agent 4}
\label{fig:subs16}
\end{subfigure}
\begin{subfigure}{\linewidth}
\centering
\includegraphics[scale=.33]{agent5_l.pdf}
\caption{Agent 5}
\label{fig:subs17}
\end{subfigure}%
\caption{True and estimated posteriors}
\label{fig:GM2s}
\end{figure}

\subsection{Bayesian logistic regression} \label{LR_s}
Expressions for time-varying step-size $\alpha_k$ and $\beta_k$ are same as in Section~\ref{sub:GM_s} with  $ \alpha_0=0.004$, $b_1=230$, $\delta_2=0.55$ for C-ULA and  $ \alpha_0=0.00082$, $b_1=230$, $\delta_2=0.55$, $\beta_0=0.48 $ $b_2= 230$,  $\delta_1=0.05$ for D-ULA. Data is processed in batches of 10 for both approaches with 10 epochs through the whole data set for 50 runs. Accuracy at each iteration averaged over 50 runs for C-ULA and the 5 agents in D-ULA is shown in Figure ~\ref{LRs}. The shaded region of the figure indicates 1 standard deviation. Zoomed version of the accuracy with centralized ULA  shown in Figure~\ref{LR_s}.\subref{fig:sub21} indicates a faster convergence of D-ULA to 84.38 $\%$ in 1040 iterations when compared to C-ULA which converges to the final accuracy of 83.89 $\%$. 
\begin{figure}
\begin{subfigure}{.5\linewidth}
\centering
\includegraphics[scale=.35]{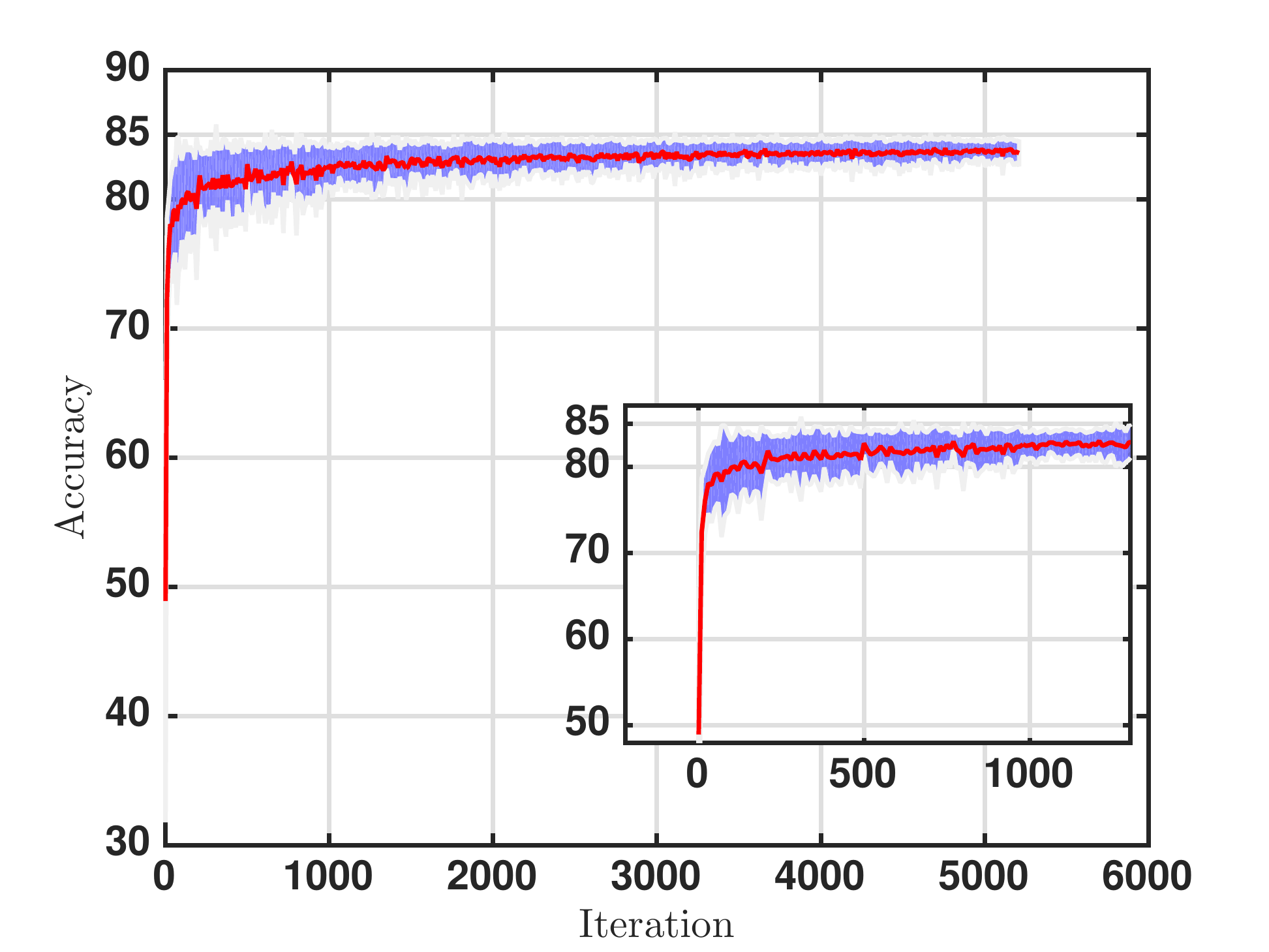}
\caption{Centralized ULA}
\label{fig:subS21}
\end{subfigure}%
\begin{subfigure}{.5\linewidth}
\centering
\psfrag{Time(sec)}{\small{Time $(t)$}}
\includegraphics[scale=.35]{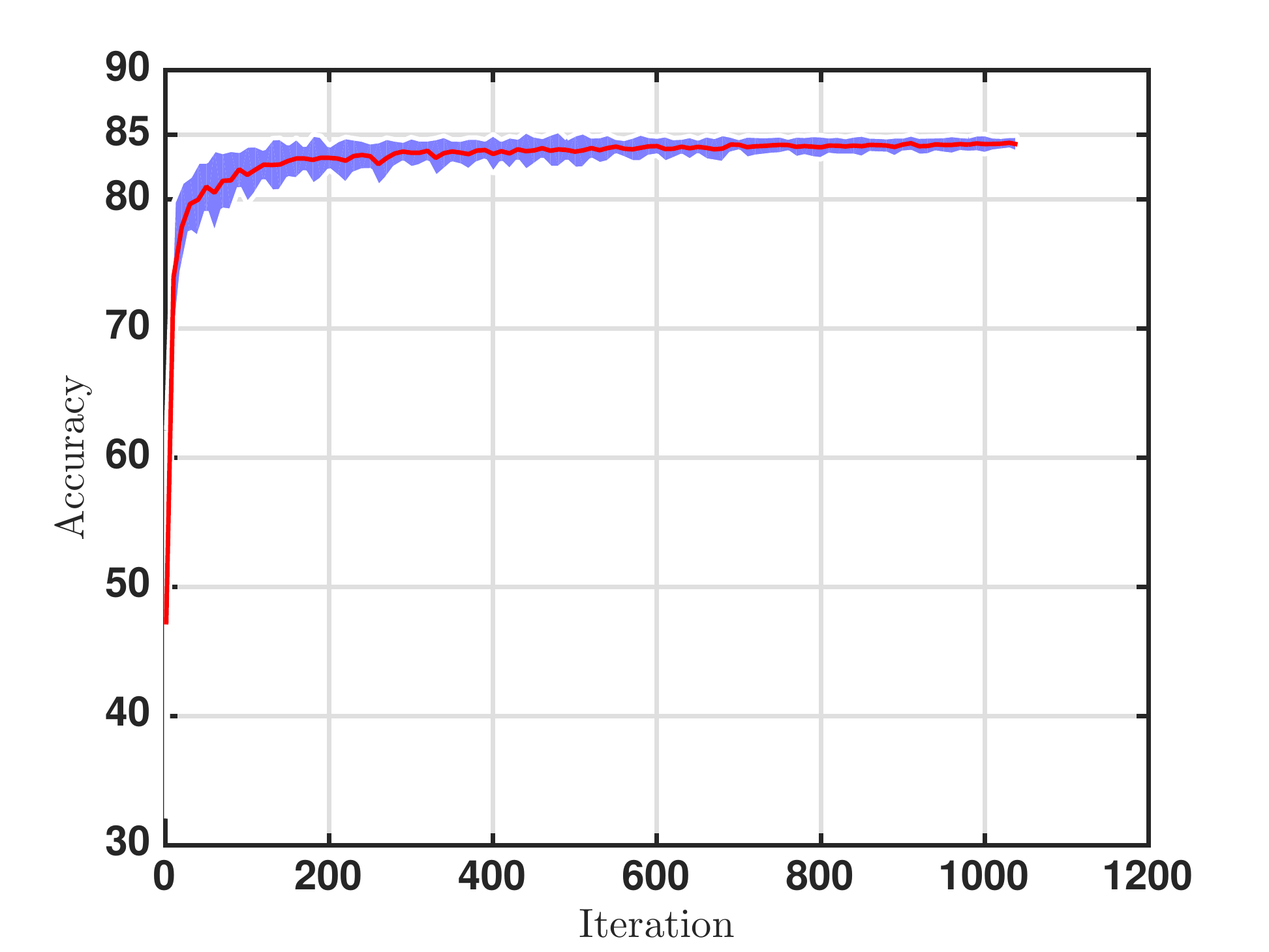}
\caption{Agent 1}
\label{fig:subS22}
\end{subfigure}
\begin{subfigure}{.5\linewidth}
\centering
\psfrag{Time(sec)}{\small{Time $(t)$}}
\includegraphics[scale=.35]{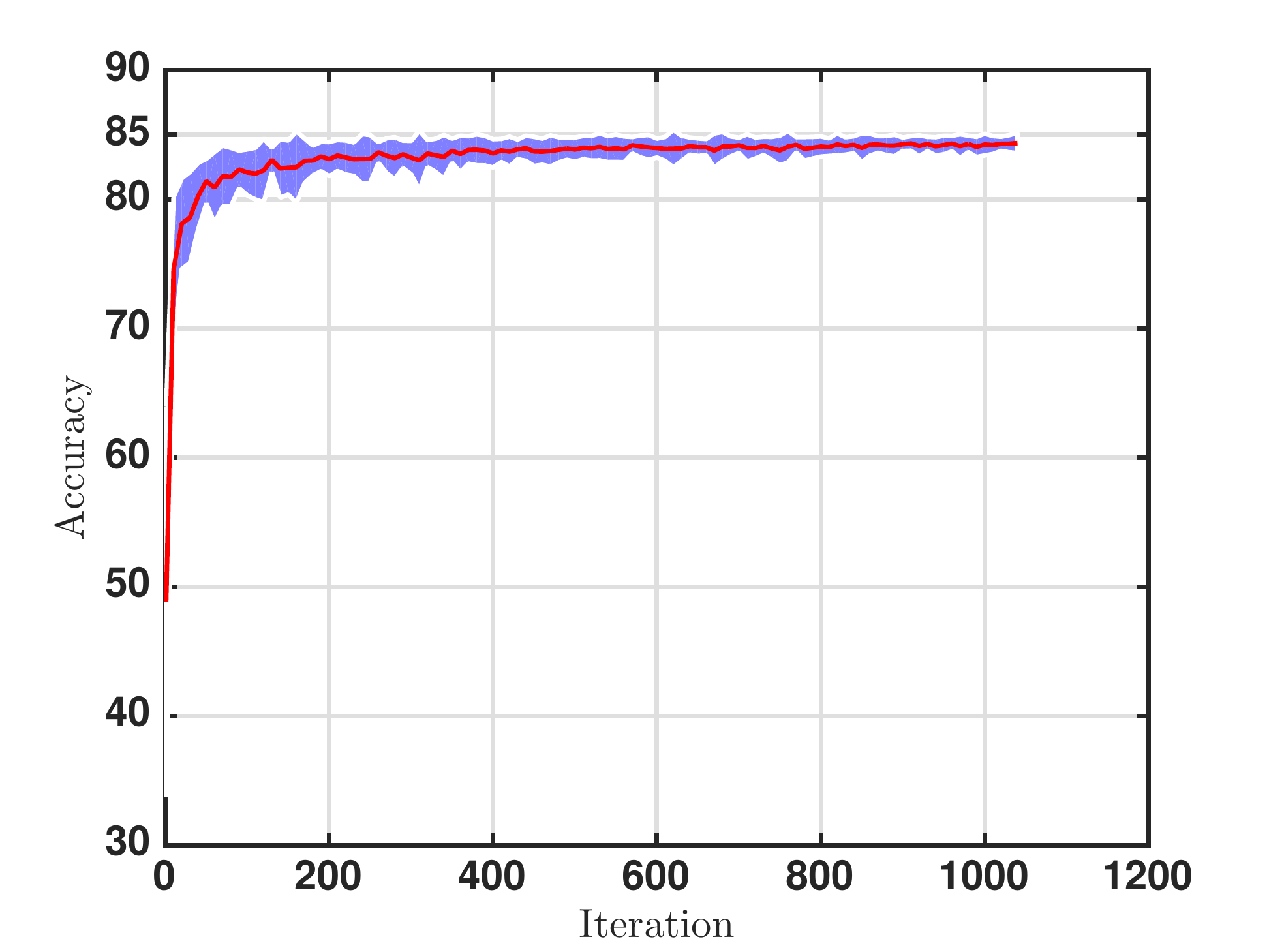}
\caption{Agent 2}
\label{fig:subS23}
\end{subfigure}%
\begin{subfigure}{.5\linewidth}
\centering
\psfrag{Time(sec)}{\small{Time $(t)$}}
\includegraphics[scale=.35]{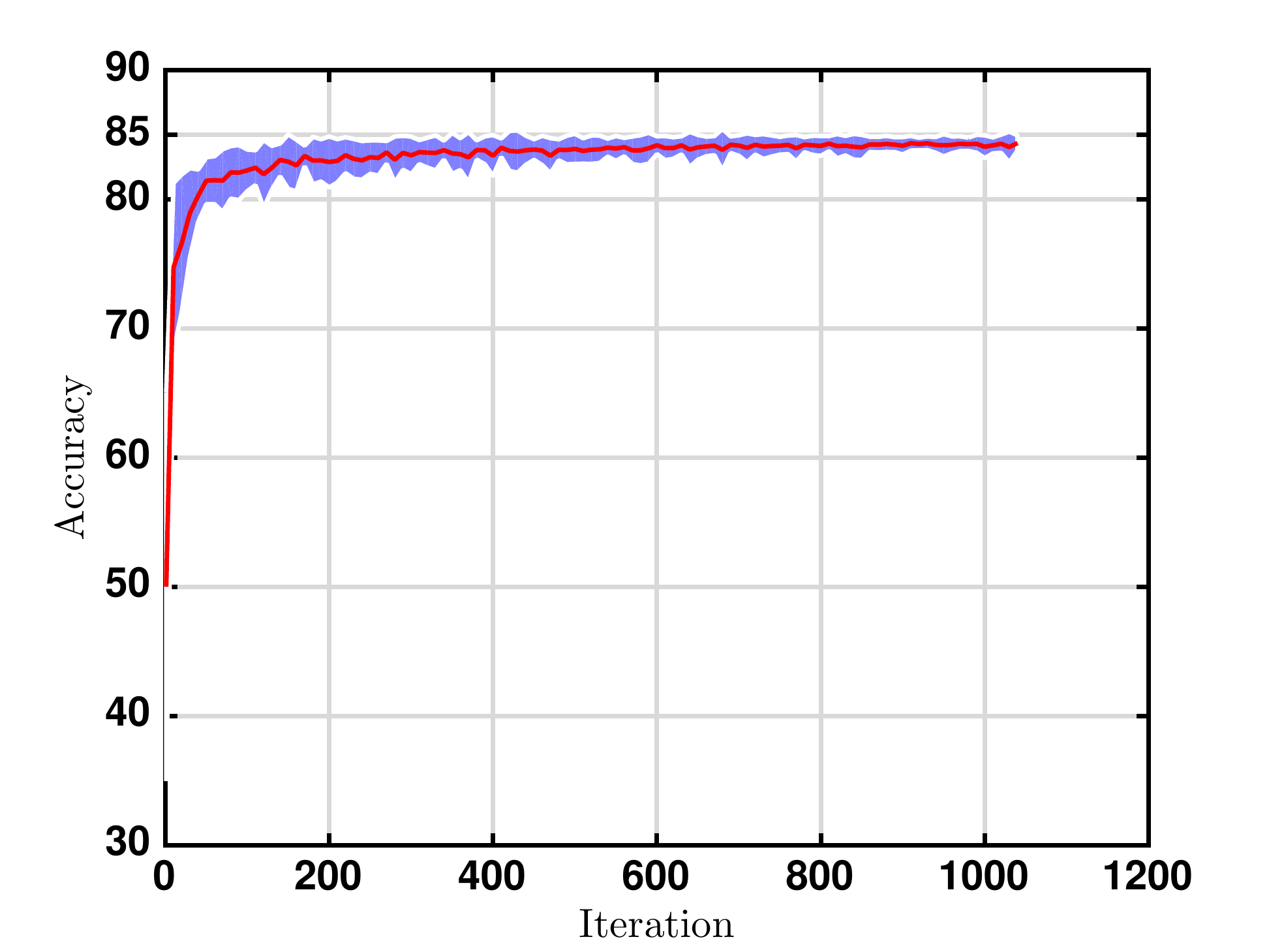}
\caption{Agent 3}
\label{fig:subS24}
\end{subfigure}
\begin{subfigure}{.5\linewidth}
\centering
\psfrag{Time(sec)}{\small{Time $(t)$}}
\includegraphics[scale=.35]{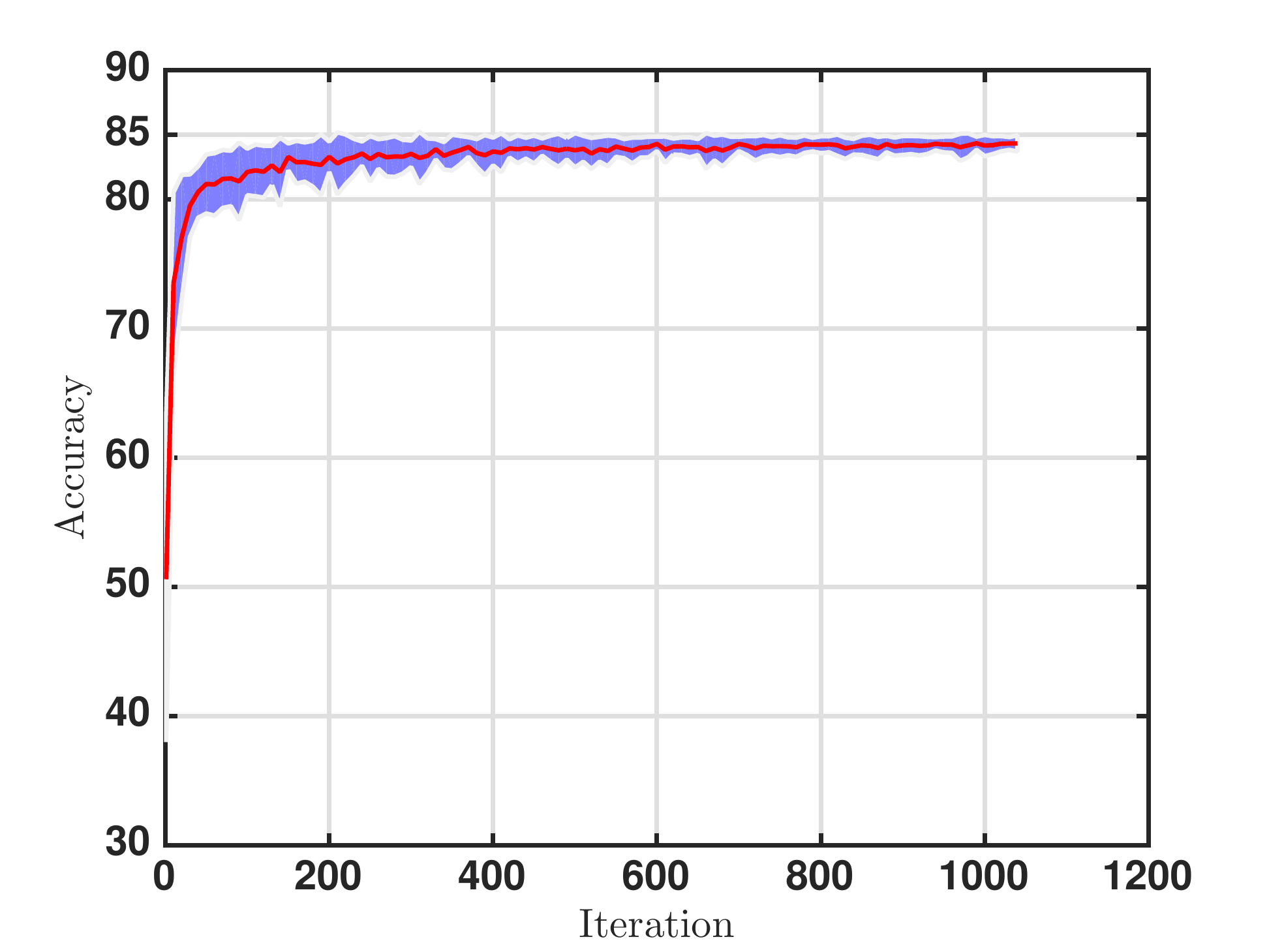}
\caption{Agent 4}
\label{fig:subS25}
\end{subfigure}%
\begin{subfigure}{.5\linewidth}
\centering
\psfrag{Time(sec)}{\small{Time $(t)$}}
\includegraphics[scale=.35]{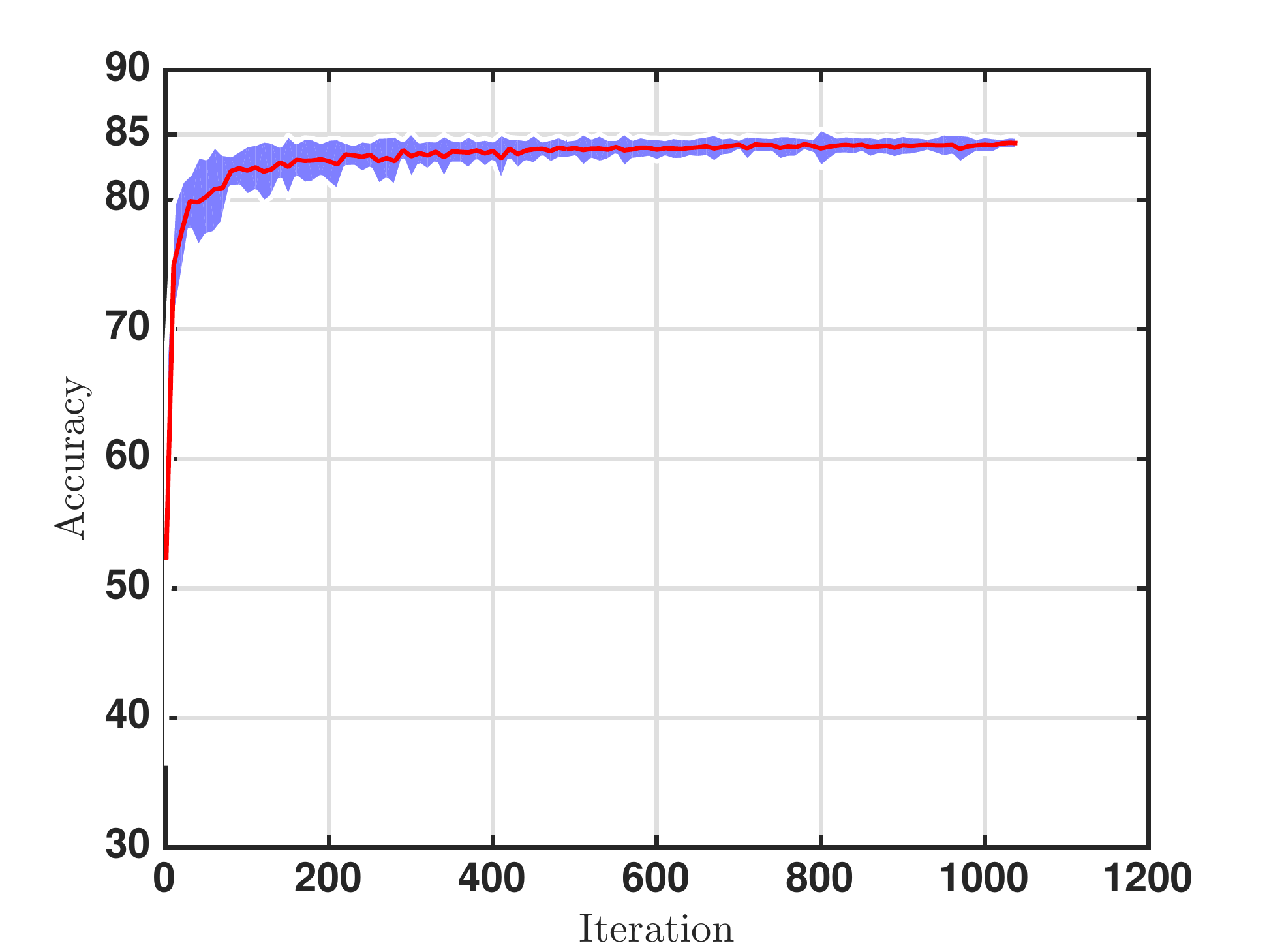}
\caption{Agent 5}
\label{fig:subS26}
\end{subfigure}
\caption{Accuracy on test sets averaged over 50 runs}
\label{LRs}
\end{figure}
\subsection{Decentralized Bayesian learning for handwritten digit recognition} \label{MNIST_s}
 Variations in step-size, $\alpha_k$ and $\beta_k$ are  similar to  Section~\ref{sub:GM_s} with $\alpha_0=0.00024$, $b_1=230$, $\delta_2=0.55$ for stochastic gradient descent (SGD), $\alpha_0=0.00034$, $b_1=230$, $\delta_2=0.55$ for C-ULA, and $\alpha_0=0.00032$, $b_1=230$, $\delta_2=0.55$, $\beta_0= 0.48$, $b_2=230$,  $\delta_1=0.05$ for D-ULA. Data sets for training are processed in batches of 1024, 1024, and  256 images for SGD,  C-ULA, and D-ULA, respectively. 
 Tables \ref{table:1a}  summarize MNIST and SVHN test accuracy after 10 epochs for SGD,  C-ULA, and  D-ULA.
\begin{table}
\caption{Test accuracy ($\%$) for different approaches after 10 epochs}
\label{table:1a}
\centering
\begin{tabular}{llllllll } 
    \toprule
& SGD & C-ULA & Agent 1 & Agent 2 & Agent 3 & Agent 4 & Agent 5 \\
\midrule
MNIST  & 98.15  & 98.16 & 98.52 & 98.52 & 98.39 & 98.45 & 98.47  \\ 
\midrule 
SVHN  & 7.648  & 8.9313 & 14.897 &  13.44 & 15.346 & 13.506 & 15.934 \\ 
\bottomrule
\end{tabular}
\end{table}
Figure ~\ref{fig:MNIST3} shows prediction probability density for MNIST and SVHN data sets using all the approaches considered. For SGD, prediction probability corresponding to all the class  labels (0-9) are  obtained for each test case  and the  maximum value evaluated across the class labels corresponds to the predicted probability for each individual test case.  Density on the $y$ axis represents the normalized count of the predicted probabilities so that the cumulative density over the probability of predicted labels integrates to one. For C-ULA and D-ULA, prediction probability is the mean of samples over  epochs after burn-in period and its  maximum value  over  all the  10  class labels represents the predicted probability for each test case.
\begin{figure}[h!]
\begin{subfigure}{.5\linewidth}
\centering
\includegraphics[clip, trim=0.5cm 7cm 0.5cm 7cm, scale=.35]{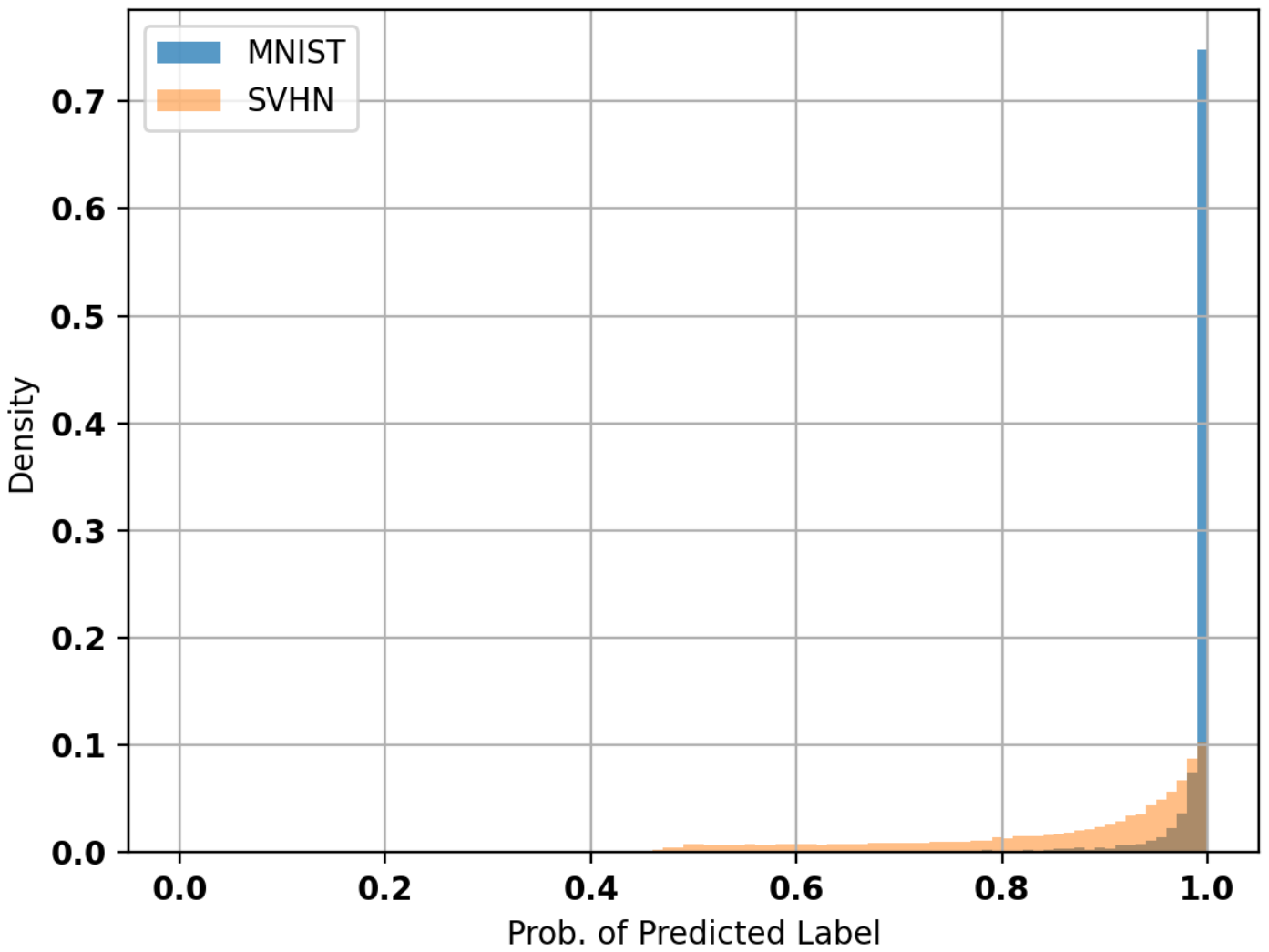}
\caption{SGD}
\label{fig:sub61}
\end{subfigure}%
\begin{subfigure}{.5\linewidth}
\centering
\includegraphics[clip, trim=0.5cm 7cm 0.5cm 7cm,scale=.35]{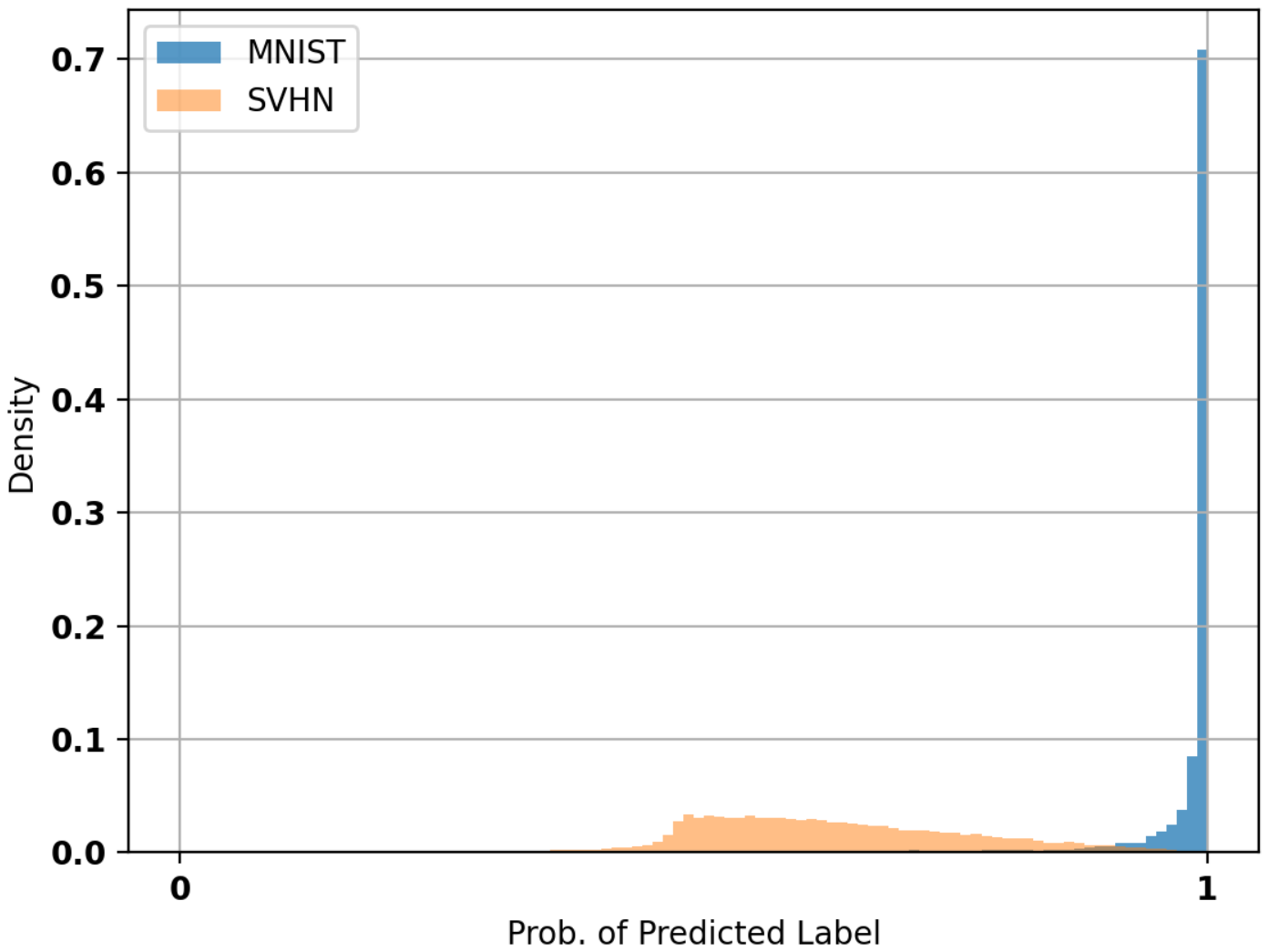}
\caption{Centralized ULA}
\label{fig:sub62}
\end{subfigure}
\begin{subfigure}{.5\linewidth}
\centering
\includegraphics[clip, trim=0.5cm 7cm 0.5cm 7cm,scale=.35]{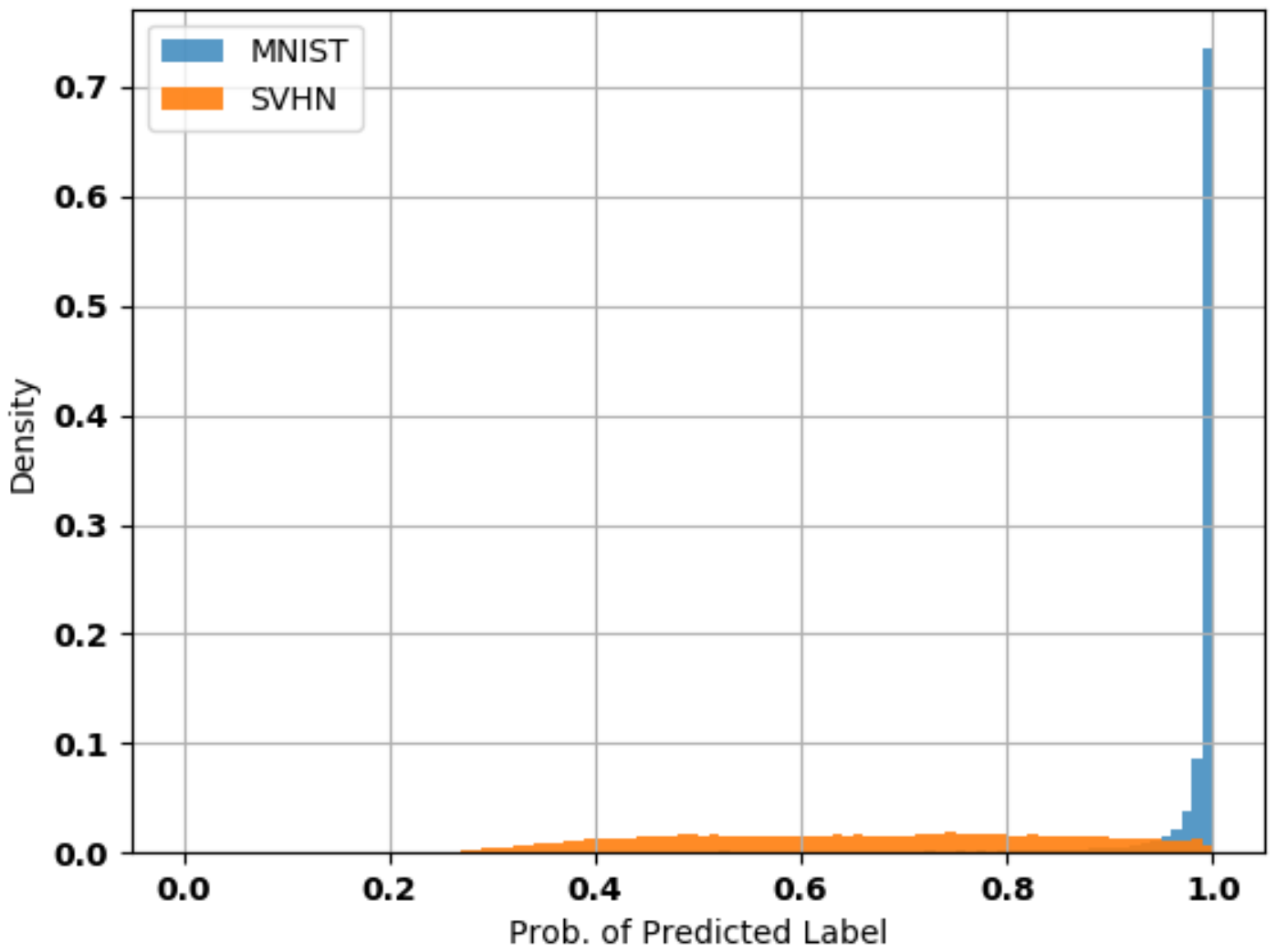}
\caption{Agent 1}
\label{fig:sub71}
\end{subfigure}%
\begin{subfigure}{.5\linewidth}
\centering
\includegraphics[clip, trim=0cm .5cm 0cm 1cm,scale=.35]{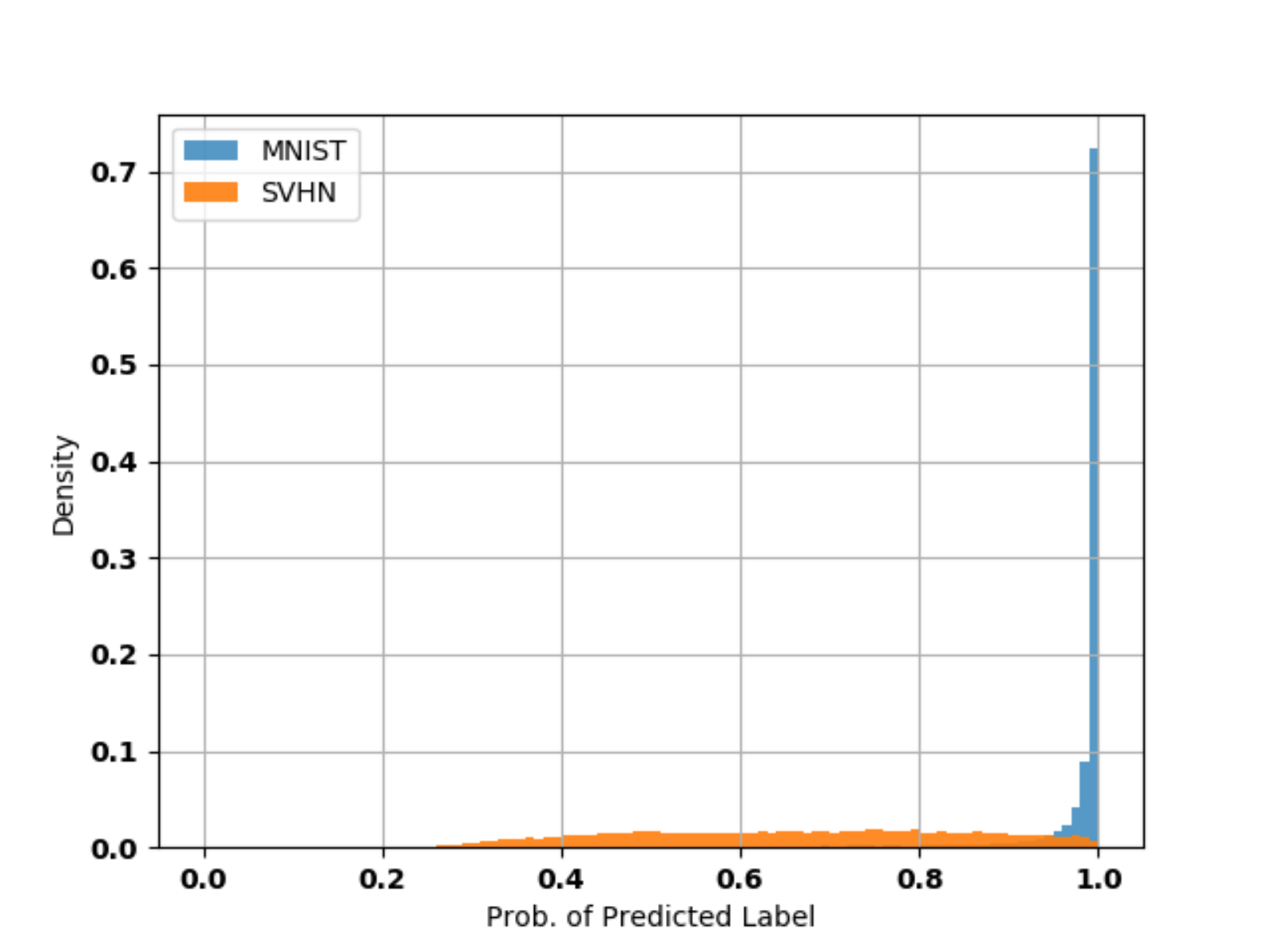}
\caption{Agent 2}
\label{fig:sub72}
\end{subfigure}
\begin{subfigure}{.5\linewidth}
\centering
\includegraphics[clip, trim=0.5cm 7cm 0.5cm 7cm,scale=.35]{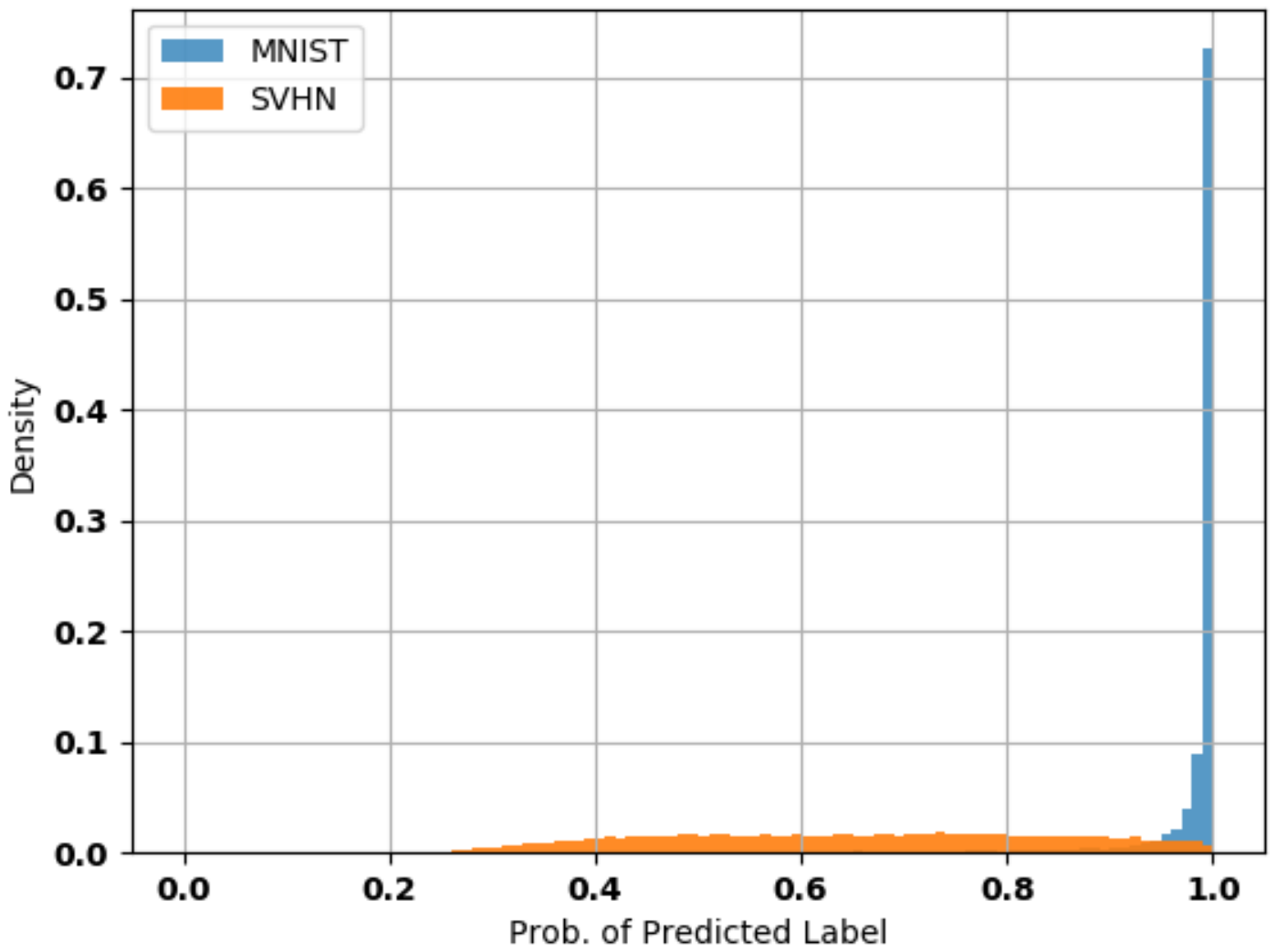}
\caption{Agent 3}
\label{fig:sub73}
\end{subfigure}%
\begin{subfigure}{.5\linewidth}
\centering
\includegraphics[clip, trim=0.5cm 7cm 0.5cm 7cm,scale=.35]{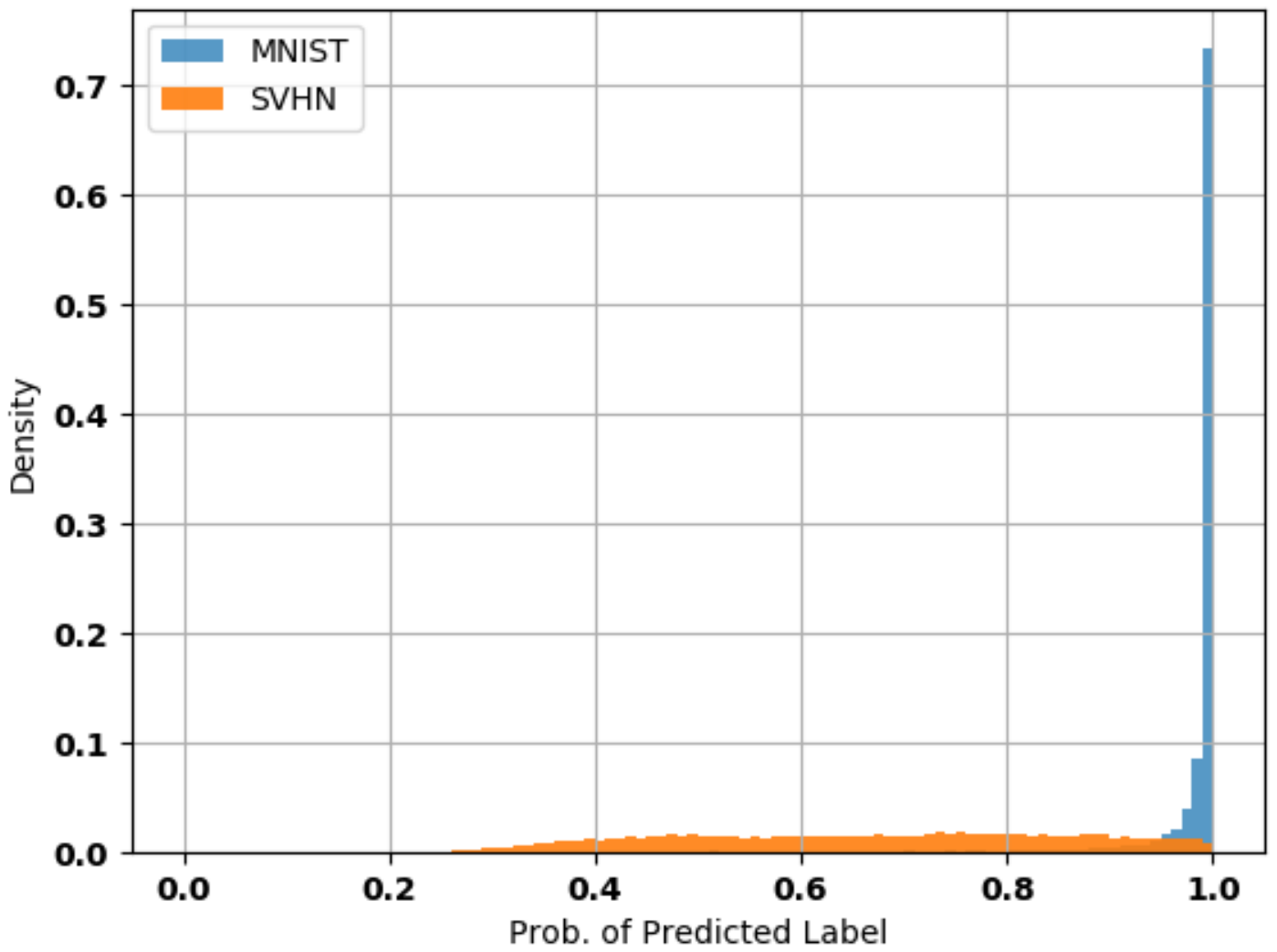}
\caption{Agent 4}
\label{fig:sub74}
\end{subfigure}\\[1ex]
\begin{subfigure}{\linewidth}
\centering
\includegraphics[clip, trim=0.5cm 7cm 0.5cm 7cm,scale=.35]{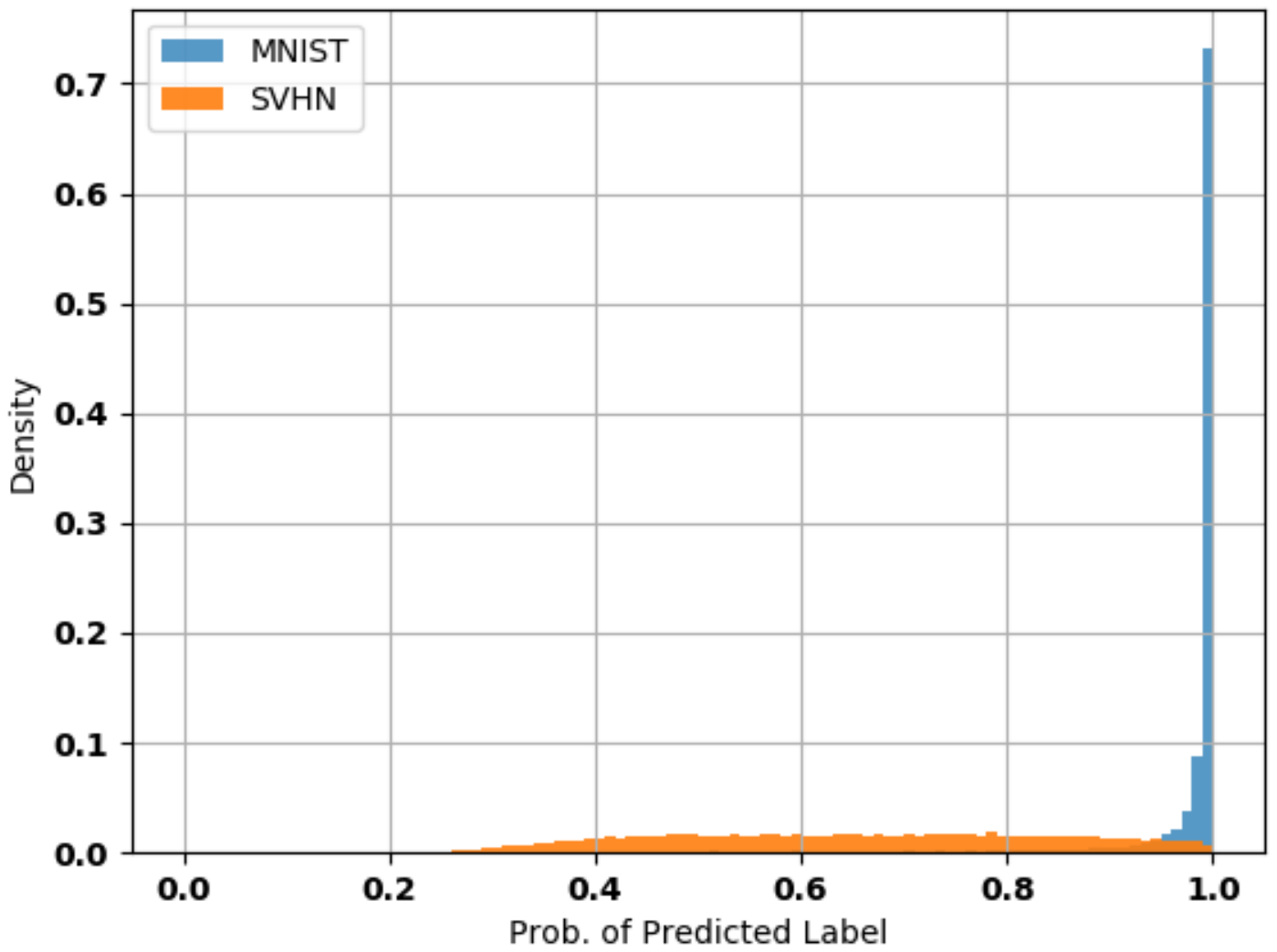}
\caption{Agent 5}
\label{fig:sub75}
\end{subfigure}
\caption{Probability of predicted labels}
\label{fig:MNIST3}
\end{figure}
 {We illustrate the performance of SGD and Bayesian methods, C-ULA and D-ULA for hand-written digit recognition using confusion matrices  based on the true and predicted labels  evaluated with MNIST and SVHN  test samples. Figure~\ref{MNIST_u} shows  heat maps corresponding to the confusion matrices generated with predicted and actual labels from MNIST test samples for both SGD and C-ULA. Heat maps for D-ULA resembles same  C-ULA and the plots are not included to avoid redundancy. Though, both the approaches indicate a high level of prediction accuracy across the  test samples, confidence scores of the predictions obtained by the Bayesian approach indicate reliability of predictions. Figure~\ref{MNIST_u2} show  average prediction probability scores for each MNIST labels.  This is relevant in particular for OOD sample detection as shown in Figures \ref{MNIST_u3} and \ref{MNIST_u4}. Figure~\ref{MNIST_u3} show  predicted label across SVHN test sets wherein the prediction accuracy is fairly low as the test samples are out of the  distribution. Approaches such as SGD provides a  single  prediction score along with  predicted labels,  where as the Bayesian approaches provide mean and standard deviation of the predictions as well.  Expected values of prediction probabilities averaged across each labels are shown in Figure~\ref{MNIST_u4}. Such distributions indicate reliability of the predicted scores and helps to detect OOD samples.  }

\begin{figure}[h!]
\begin{subfigure}{.5\linewidth}
\centering
\includegraphics[scale=.27]{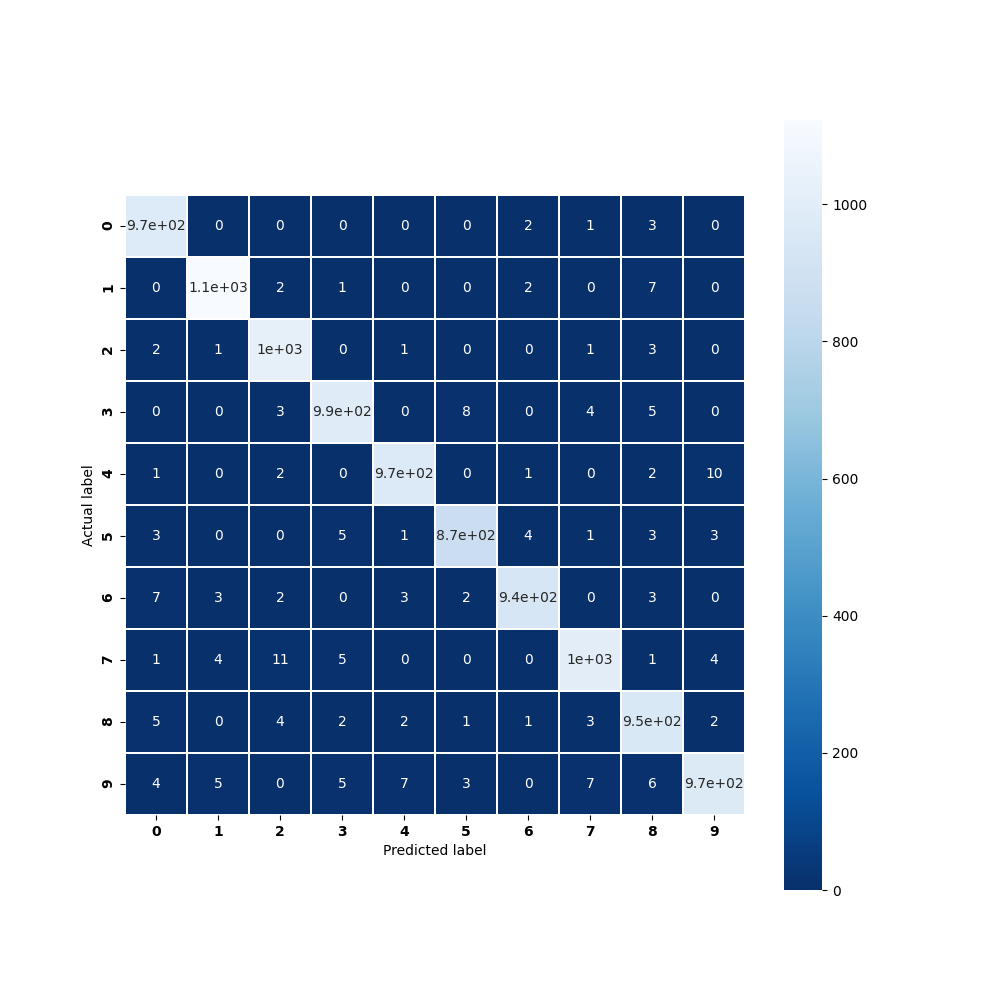}
\caption{SGD}
\label{fig:sub41_u}
\end{subfigure}%
\begin{subfigure}{.5\linewidth}
\centering
\includegraphics[scale=.27]{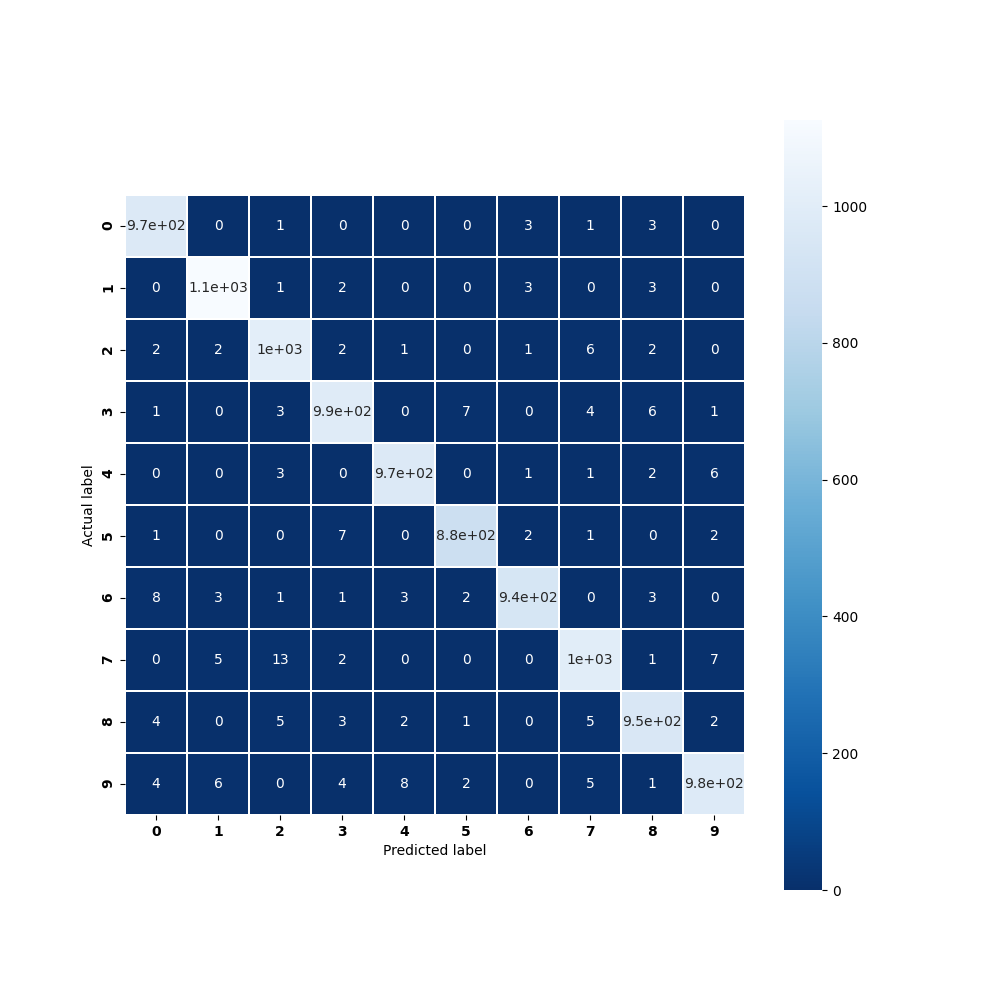}
\caption{Centralized ULA}
\label{fig:su42}
\end{subfigure}
\caption{Actual and predicted MNIST labels across test data for SGD and C-ULA}
\centering
\includegraphics[scale=.27]{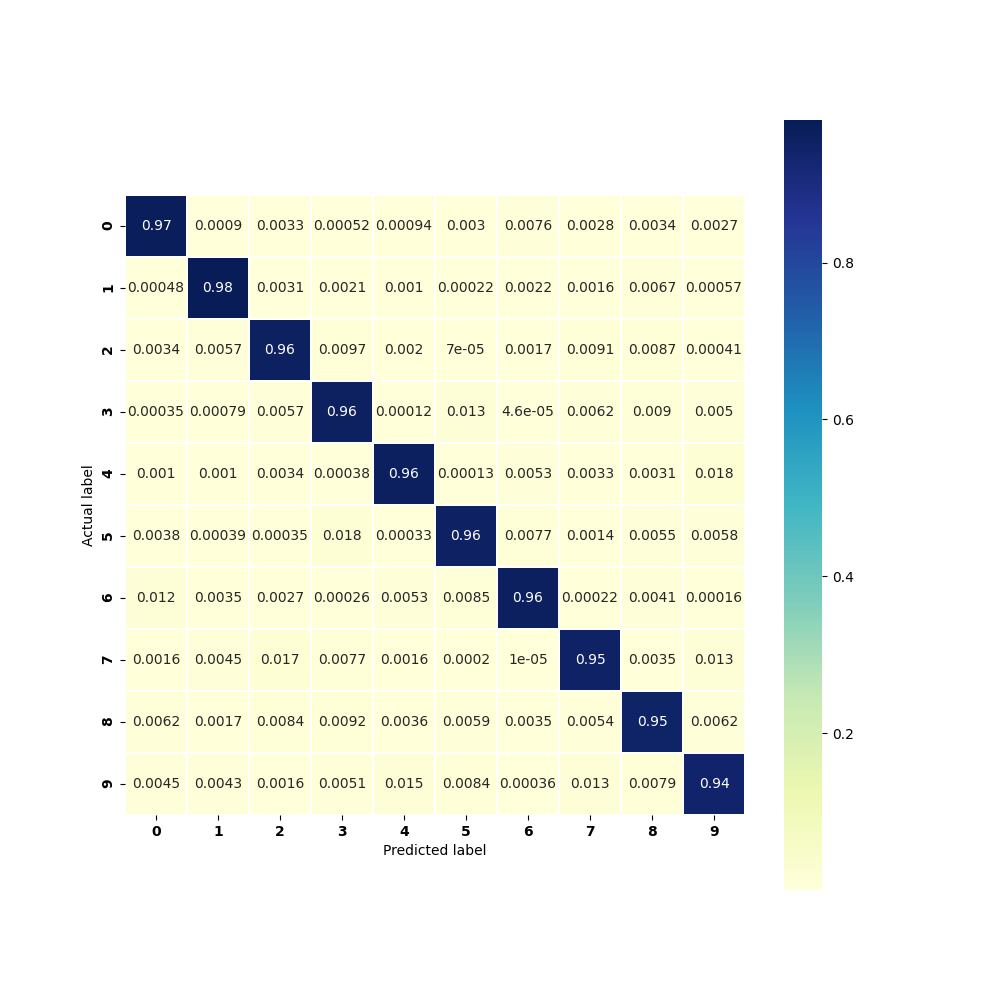}
\caption{Probability heat map across MNIST test labels for C-ULA}
\label{MNIST_u2}
\end{figure}
\begin{figure}[h!]
\begin{subfigure}{.5\linewidth}
\centering
\includegraphics[scale=.27]{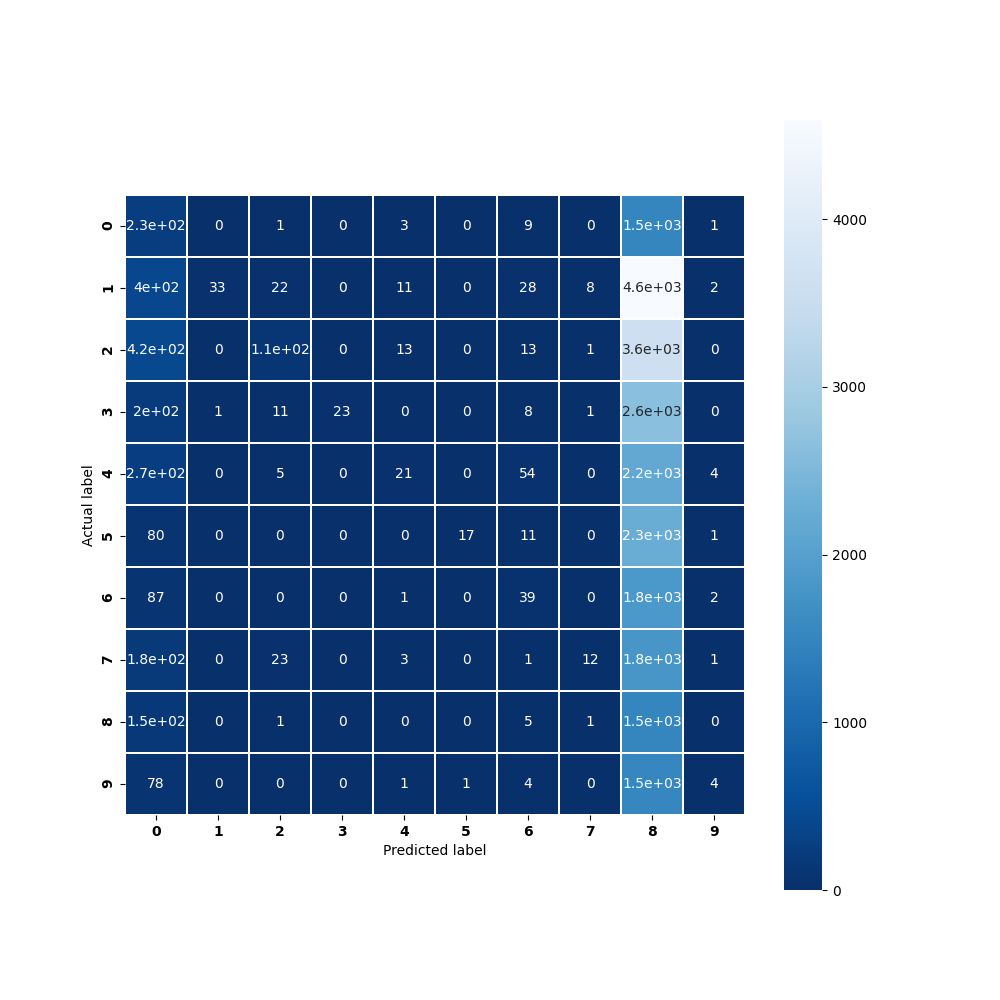}
\caption{SGD}
\label{fig:sub41_u3}
\end{subfigure}%
\begin{subfigure}{.5\linewidth}
\centering
\includegraphics[scale=.27]{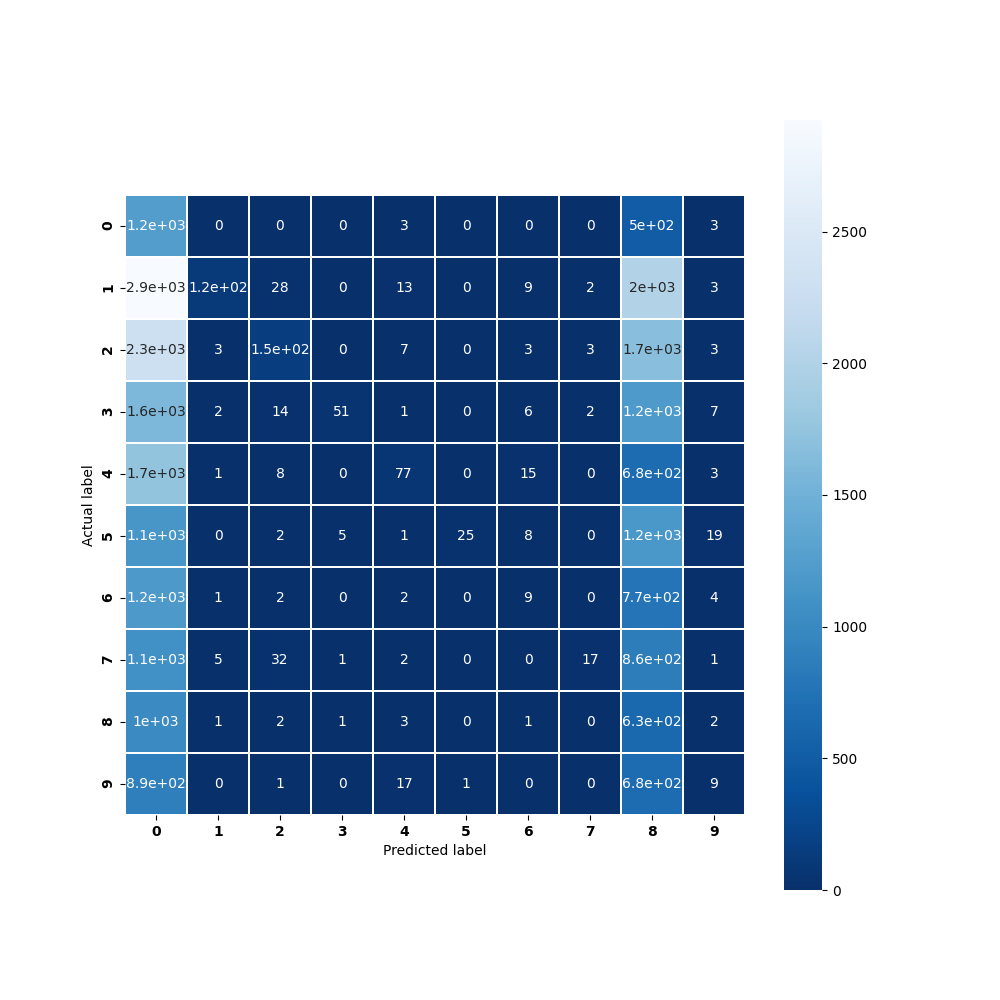}
\caption{Centralized ULA}
\label{fig:su42_u3}
\end{subfigure}
\caption{Actual and Predicted SVHN labels across test data for C-ULA}
\label{MNIST_u3}
\centering
\includegraphics[scale=.27]{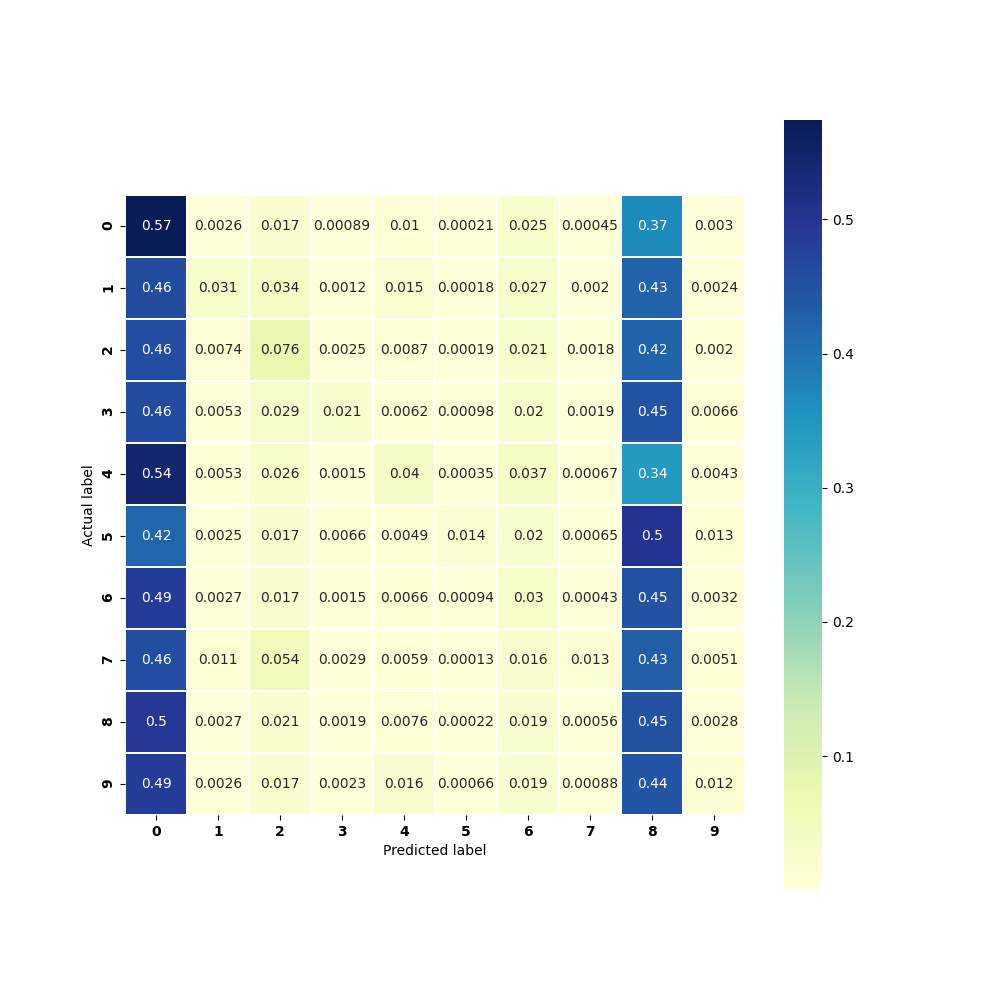}
\label{fig:sub41_u4}
\caption{Probability heat map across SVHN test labels for C-ULA}
\label{MNIST_u4}
\end{figure}
{Next, the predicted labels and corresponding scores are shown for three test samples selected from MNIST and SVHN data sets.  Here, we select one sample each from a high, medium, and low confidence cases.  Table ~ \ref{table:MNIST_u} summarizes predictions, mean, and standard deviation of the predicted scores for both C-ULA and  D-ULA.}
\textcolor{blue}{\begin{table}
\caption{Predicted labels and  predicted scores for images with different confidence levels}
\label{table:MNIST_u}
\renewcommand{\arraystretch}{1.5}
\centering
\resizebox{.75\textwidth}{!}{
\begin{tabular}{cccccccccc}
    \toprule
& True label & Confidence & & C-ULA & Agent 1 & Agent 2 & Agent 3 & Agent 4 & Agent 5 \\
\midrule
\parbox[t]{10mm}{\multirow{8}{1mm}{\rotatebox[origin=c]{90}{\small{MNIST}}}} & \parbox[t]{10mm}{\multirow{3}{1mm}{{\small{
4}}}} & \parbox[t]{10mm}{\multirow{3}{1mm}{{\small{High}}}} & Predicted label & 4   & 4 & 4  & 4  & 4  & 4   \\
&  & & Mean  & 0.999   & 0.99970  & 0.9997  & 0.9997  & 0.9997  & 0.99971   \\
&  & & Std. dev.  & 0.0003   & 0.0002 & 0.0002 & 0.0002 & 0.0002 & 0.0002 \\
  \cmidrule{2-10} 
& \parbox[t]{10mm}{\multirow{3}{1mm}{{\small{
7}}}} & \parbox[t]{10mm}{\multirow{3}{1mm}{{\small{Medium}}}} & Predicted label & 1   & 7 & 7  & 7  & 7  & 7   \\
&  & & Mean  & 0.617   & 0.8974  & 0.8912  & 0.8875   & 0.8994  & 0.8843   \\
&  & & Std. dev.  & 0.142   &  0.0730  & 0.0622 & 0.0737 & 0.0561 & 0.0682 \\
  \cmidrule{2-10} 
& \parbox[t]{10mm}{\multirow{3}{1mm}{{\small{
8}}}} & \parbox[t]{10mm}{\multirow{3}{1mm}{{\small{Low}}}} & Predicted label & 7   & 7 & 7  & 7  & 7  & 7   \\
&  & & Mean  & 0.349    & 0.4612    & 0.4512  & 0.4157  & 0.4574  & 0.4853   \\
&  & & Std. dev.  &  0.124   & 0.1852  & 0.1896 & 0.1806 & 0.1835 & 0.1855 \\
\midrule
\parbox[t]{10mm}{\multirow{8}{1mm}{\rotatebox[origin=c]{90}{\small{SVHN}}}} & \parbox[t]{10mm}{\multirow{3}{1mm}{{\small{
8}}}} & \parbox[t]{10mm}{\multirow{3}{1mm}{{\small{High}}}} & Predicted label & 8  & 8 & 8  & 8  & 8  & 8   \\
&  & & Mean  & 0.993    & 0.995  & 0.995  & 0.995  & 0.995  & 0.995   \\
&  & & Std. dev.  & 0.006  & 0.005 & 0.005 & 0.005 & 0.005 & 0.005 \\
  \cmidrule{2-10} 
& \parbox[t]{10mm}{\multirow{3}{1mm}{{\small{
5}}}} & \parbox[t]{10mm}{\multirow{3}{1mm}{{\small{Medium}}}} & Predicted label & 8   & 8 & 8  & 8 & 8 & 8   \\
&  & & Mean  & 0.407 &  0.6024  & 0.6144 & 0.6033 & 0.6237  & 0.5892   \\
&  & & Std. dev.  & 0.143   &  0.1363  & 0.1451 & 0.1476 & 0.1289 & 0.1411 \\
  \cmidrule{2-10} 
& \parbox[t]{10mm}{\multirow{3}{1mm}{{\small{
4}}}} & \parbox[t]{10mm}{\multirow{3}{1mm}{{\small{Low}}}} & Predicted label & 9   & 9 & 9  & 9  & 9  & 9   \\
&  & & Mean  & 0.369    & 0.4007    & 0.3835  & 0.3574  & 0.4034  & 0.4151   \\
&  & & Std. dev.  &  0.124   & 0.1824  & 0.1772 & 0.164  & 0.1768 & 0.1836 \\
\bottomrule
\end{tabular}}
\end{table}}

\end{document}